\documentclass{article}
\usepackage{libertinus}
\usepackage[margin=0.85in]{geometry}

\usepackage[authoryear,round]{natbib}
\usepackage{amsmath, mathtools}
\usepackage{bbm, mathrsfs}
\usepackage{amssymb,amsthm,upgreek}
\usepackage[inline]{enumitem}
\usepackage{hyperref,xcolor, textcmds}
\hypersetup{colorlinks=true,linkcolor=black,citecolor=black,urlcolor=blue}
\usepackage{xargs}
\usepackage{booktabs}
\usepackage{tikz}
\usepackage{caption, subcaption, placeins}
\usepackage{todonotes}
\usepackage{algorithm}
\usepackage{algpseudocode}
\usepackage{multirow}
\usepackage{makecell}

\usepackage{thmtools} 
\usepackage{thm-restate}

\usepackage{tikz}
\usetikzlibrary{intersections} 
\usetikzlibrary{arrows.meta,calc,decorations.pathreplacing}

\usetikzlibrary{calc,arrows.meta,positioning, fit}

\usepackage{pgfplots}
\pgfplotsset{compat=1.18}

\usepackage[noabbrev,capitalize]{cleveref}

\makeatletter
\AddToHook{cmd/appendix/before}{%
  \def\cref@section@alias{appendix}%
  \def\cref@subsection@alias{appendix}%
}
\makeatother

\crefname{theorem}{theorem}{Theorems}
\Crefname{Theorem}{Theorem}{Theorems}

\newtheorem{lemma}{Lemma}
\crefname{lemma}{lemma}{lemmas}
\Crefname{Lemma}{Lemma}{Lemmas}

\newtheorem{corollary}{Corollary}
\crefname{corollary}{corollary}{corollaries}
\Crefname{Corollary}{Corollary}{Corollaries}

\newtheorem{proposition}{Proposition}
\crefname{proposition}{proposition}{propositions}
\Crefname{Proposition}{Proposition}{Propositions}

\newtheorem{remark}{Remark}
\crefname{remark}{remark}{remarks}
\Crefname{Remark}{Remark}{Remarks}

\newtheorem{example}{Example}
\crefname{example}{example}{examples}
\Crefname{Example}{Example}{Examples}

\crefname{figure}{figure}{figures}
\Crefname{Figure}{Figure}{Figures}

\crefname{algorithm}{algorithm}{algorithms}
\Crefname{Algorithm}{Algorithm}{Algorithms}

\Crefname{assumption}{\textbf{H}\hspace{-3pt}}{\textbf{H}\hspace{-3pt}}
\crefname{assumption}{\textbf{H}}{\textbf{H}}

\Crefname{assumptionAO}{\textbf{AO}\hspace{-3pt}}{\textbf{AO}\hspace{-3pt}}
\crefname{assumptionAO}{\textbf{AO}}{\textbf{AO}}

\Crefname{assumptionL}{\textbf{L}\hspace{-3pt}}{\textbf{L}\hspace{-3pt}}
\crefname{assumptionL}{\textbf{L}}{\textbf{L}}

\newtheorem{definition}{Definition}
\crefname{definition}{definition}{definitions}
\Crefname{Definition}{Definition}{Definitions}

\setlist[enumerate]{leftmargin=*}

\def\bfN{\mathbf{N}}

\def\msa{\mathsf{A}}

\def\msx{\mathsf{X}}

\def\mca{\mathcal{A}}

\def\mcb{\mathcal{B}}  

\def\rset{\mathbb{R}}

\def\nset{\mathbb{N}}

\def\rmd{\mathrm{d}}

\def\rme{\mathrm{e}}
\def\rmn{\mathrm{n}}

\newcommand{\R}{\mathbb R}

\newcommandx{\functionspace}[2][1=+]{\mathbb{F}_{#1}(#2)}

\newcommandx{\VarDeux}[3][3=]{\operatorname{Var}^{#3}_{#1}\left\{#2 \right\}}

\newcommand{\1}{\mathbbm{1}}

\newcommand{\LeftEqNo}{\let\veqno\@@leqno}

\newcommand{\N}{\ensuremath{\mathbb{N}}}

\newcommand{\PE}{\mathbb{E}}
\newcommand{\PP}{\mathbb{P}}
\newcommand{\Ft}{\mathcal{F}}

\newcommandx{\Vnorm}[2][1=V]{\| #2 \|_{#1}}
\newcommandx{\VnormEq}[2][1=V]{\left\| #2 \right\|_{#1}}
\newcommandx{\norm}[2][1=]{\ifthenelse{\equal{#1}{}}{\left\Vert #2 \right\Vert}{\left\Vert #2 \right\Vert^{#1}}}
\newcommandx{\normLigne}[2][1=]{\ifthenelse{\equal{#1}{}}{\Vert #2 \Vert}{\Vert #2\Vert^{#1}}}

\newcommandx\probaMarkovTilde[2][2=]
{\ifthenelse{\equal{#2}{}}{{\widetilde{\mathbb{P}}_{#1}}}{\widetilde{\mathbb{P}}_{#1}\left[ #2\right]}}

\def\ie{\textit{i.e.}}

\def\eqsp{\;}

\def\metric{\mathbf{g}}

\newcommandx{\weight}[2][2=n]{\omega_{#1,#2}^N}

\newcommandx\sequence[3][2=,3=]
{\ifthenelse{\equal{#3}{}}{\ensuremath{\{ #1_{#2}\}}}{\ensuremath{\{ #1_{#2}, \eqsp #2 \in #3 \}}}}
\newcommandx\process[3][2=,3=]
{\ifthenelse{\equal{#3}{}}{\ensuremath{( #1_{#2})}}{\ensuremath{( #1_{#2})_{ #2 \in #3}}}}
\newcommandx\processt[2][2=]
{\ifthenelse{\equal{#2}{}}{\ensuremath{( #1_{t})}}{\ensuremath{( #1_{t})_{ t \in #2}}}}
\newcommandx\sequenceD[3][2=,3=]
{\ifthenelse{\equal{#3}{}}{\ensuremath{\{ #1_{#2}\}}}{\ensuremath{( #1)_{ #2 \in #3} }}}

\newcommandx{\sequencen}[2][2=n\in\N]{\ensuremath{\{ #1_n, \eqsp #2 \}}}
\newcommandx\sequenceDouble[4][3=,4=]
{\ifthenelse{\equal{#3}{}}{\ensuremath{\{ (#1_{#3},#2_{#3}) \}}}{\ensuremath{\{  (#1_{#3},#2_{#3}), \eqsp #3 \in #4 \}}}}
\newcommandx{\sequencenDouble}[3][3=n\in\N]{\ensuremath{\{ (#1_{n},#2_{n}), \eqsp #3 \}}}

\def\eg{e.g.}

\newcommand{\opnorm}[1]{{\left\vert\kern-0.25ex\left\vert\kern-0.25ex\left\vert #1
    \right\vert\kern-0.25ex\right\vert\kern-0.25ex\right\vert}}

\def\Lip{\operatorname{Lip}}

\newcommandx{\CPE}[3][1=]{{\mathbb E}_{#1}\left[\left. #2 \, \middle \vert \, #3 \right. \right]} 

\newcommandx{\CPELigne}[3][1=]{{\mathbb E}_{#1}[\left. #2 \,  \vert \, #3 \right. ]} 

\newcommandx{\CPVar}[3][1=]{\mathrm{Var}^{#3}_{#1}\left\{ #2 \right\}}
\newcommand{\CPP}[3][]
{\ifthenelse{\equal{#1}{}}{{\mathbb P}\left(\left. #2 \, \right| #3 \right)}{{\mathbb P}_{#1}\left(\left. #2 \, \right | #3 \right)}}

\newcommandx{\osc}[2][1=]{\mathrm{osc}_{#1}(#2)}

\def\Lambdabf{\mathbf{\Lambda}}
\def\lambdabf{\boldsymbol{\lambda}}

\newcommand\coupling[2]{\Gamma(\mu,\nu)}

\def\bfZ{\mathbf{Z}}
\def\bfX{\mathbf{X}}

\newcommandx{\wasserstein}[3][1=\distance,3=]{\mathscr{W}_{#1}^{#3}\left(#2\right)}
\newcommandx{\wassersteinLigne}[3][1=\distance,3=]{\mathscr{W}_{#1}^{#3}(#2)}
\newcommandx{\wassersteinD}[1][1=\distance]{\mathscr{W}_{#1}}
\newcommandx{\wassersteinDLigne}[1][1=\distance]{\mathscr{W}_{#1}}

\def\bfY{\mathbf{Y}}

\def\gauss{\mathrm{N}}
\def\Idd{\mathrm{I}_d}

\def\bfX{\mathbf{X}}

\def\pre_Stif{\mathcal{V}}

\def\bfo{\mathbf{o}}
\def\Vlyap{\mathfrak{V}}

\newcommandx{\Voi}[1][1=i]{\Vlyap_{\bfo,#1}}
\newcommandx{\Vlyapc}[2][1=\bfo,2=i]{\Vlyap_{#1,#2}}
\newcommandx{\tVlyapc}[2][1=\bfo,2=i]{\tilde{\Vlyap}_{#1,#2}}

\def\Wlyap{\mathfrak{W}}

\newcommandx{\Woi}[1][1=i]{\Wlyap_{\bfo,#1}}
\newcommandx{\Wlyapc}[2][1=\bfo,2=i]{\Wlyap_{#1,#2}}

\newcommand{\Exp}{\mathrm{Exp}}

\def\Denois{K}

\def\bfZthin{\bar{\bfZ}}
\def\zthin{\bar{Z}}
\def\bart{\bar{T}}
\def\bfZtild{\tilde{\bfZ}}
\def\ztild{\tilde{Z}}

\def\bfW{\mathbf{W}}
\def\bfB{\mathbf{B}}
\def\MPPpi{\mathrm{MPP}(\pi)}

\def\pigdm{$\Pi$GDM}

\def\law{\mathrm{Law}}
\def\filt{\mathbf{F}}
\renewcommand{\iint}[2]{\{#1,\ldots,#2\}}
\newcommand{\unif}{\mathrm{Unif}}

\newcommand{\poisson}{\mathrm{Pois}}
\newcommand{\loigamma}{\boldsymbol{\Gamma}}

\newcommand{\breg}{\mathscr{D}}

\usepackage{mathtools}
\mathtoolsset{showonlyrefs}

\title{Mixed Poisson diffusion models and application to posterior sampling}

\author{Thibaut Pellerin$^{1,2}$\thanks{Corresponding author: thibaut.pellerin@cea.fr}, Alain Durmus$^{1}$, Elinor Berger$^{2}$ \& Eric Barat$^{2}$ \\
$^{1}$ CMAP, CNRS, École polytechnique, Institut Polytechnique de Paris, Palaiseau, France \\
$^{2}$ Université Paris-Saclay, CEA, List, F-91120 Palaiseau, France
}

\date{}

\begin{document}
\maketitle

\begin{abstract}
\normalsize
Diffusion models are a powerful framework for learning complex probability distributions. While widely used diffusion models for continuous data rely on Gaussian noising processes, we instead develop diffusion models based on Poisson noise. More precisely, we introduce and study a new mathematical framework, called Mixed Poisson Diffusion Models (MPDM), with noising dynamics driven by multivariate counting processes.

Our methodology is motivated by Poisson inverse problems. The success of Gaussian diffusion models has led to their use as prior models in a growing number of posterior sampling methods for inverse problems. However, most of these approaches focus on Gaussian inverse problems, in which observations are generated through a deterministic forward operator corrupted by additive Gaussian noise. By contrast, Poisson inverse problems, where observations are discrete and Poisson distributed, remain comparatively underexplored in this literature.
Building on our Poisson generative dynamics, we develop a natural framework for posterior sampling in Poisson inverse problems and derive several posterior sampling algorithms based on the proposed model. We evaluate our approach on realistic Poisson inverse problems and show that the resulting framework is well suited to this class of problems.
\end{abstract}

\section{Introduction}
Diffusion models have emerged as a powerful and versatile framework for learning complex probability distributions, with striking empirical success in high-dimensional generative modeling
\citep{sohl2015deep,ho2020denoising,song2021score}.
Their central principle is to define a tractable forward noising process that progressively transforms the data distribution into a simple reference distribution, and then to learn the corresponding reverse-time dynamics in order to generate new samples.
In their most widely used form, diffusion models rely on Gaussian perturbations, either through discrete-time Gaussian transitions or continuous-time stochastic differential equations (SDEs).
This Gaussian structure is particularly appealing from both theoretical and computational perspectives, as it leads to explicit transition kernels and efficient simulation schemes.

At the same time, recent works have demonstrated that the diffusion paradigm is not intrinsically tied to Gaussian noise.
In particular, diffusion-type generative models have been extended to discrete state spaces through appropriately designed Markov chains and continuous-time jump processes
\citep{austin2021structured,campbell2022continuous}.
These developments suggest a broader perspective in which the choice of the forward corruption mechanism can be adapted to the geometry or the observation model underlying the problem of interest.
Yet, despite this flexibility, comparatively little attention has been devoted to diffusion models whose noising mechanism is driven by counting processes.

In this work, we introduce and study a new mathematical framework, which we call
\emph{Mixed Poisson Diffusion Models} (MPDMs), in which the forward noising dynamics are given by multivariate counting processes, and in particular mixed Poisson processes \citep{grandell_mixed_1997}.
Rather than progressively corrupting the data by additive Gaussian perturbations, MPDMs encode the data through the random accumulation of discrete events, leading to a fundamentally different family of intermediate distributions driven by Poisson noising dynamics.
The resulting framework preserves the main conceptual structure of diffusion modeling: a simulation-free noising mechanism that defines a path of probability distributions, together with a Markov process sharing the same marginals and characterized by learnable dynamics, ultimately yielding a tractable generative model.

This construction is motivated in particular by inverse problems with Poisson observations. In a Bayesian inverse problem, the goal is to recover an unknown signal of interest $x$ from a corrupted observation $y$. The central object is the posterior distribution $\pi^y$ of the signal, defined through Bayes' rule, up to a normalizing constant, by $
\pi^y(\rmd x) \propto g(y\vert x)\,\pi(\rmd x)$,
where $\pi$ denotes the prior distribution on the unknown signal and $(y,x)\mapsto g(y\vert x)$ is the likelihood relating the observation to the underlying signal. Diffusion models have recently been used as expressive data-driven priors in Bayesian inverse problems and have enabled high-quality reconstructions in a broad range of imaging applications. However, sampling from $\pi^y$ remains challenging because the diffusion prior is typically available only implicitly through a learned score or denoiser function. To address this difficulty, a large body of work has proposed modifying the reverse diffusion dynamics through an additional guidance term so as to generate approximate samples from the posterior distribution. Prominent examples include Diffusion Posterior Sampling (DPS) \citep{chung_diffusion_2024}, Plug-and-Play Diffusion Models (PnP-DM) \citep{wu_principled_2024}, and Decoupled Annealing Posterior Sampling (DAPS) \citep{zhang_improving_2025}, among many others \citep{song_pseudoinverse-guided_2022,kawar_denoising_2022,janati_mixture-based_2025}; see, for instance, the review of \citet{janati_briding_2025}.

Most existing methodologies, however, are developed or theoretically justified primarily for linear inverse problems with Gaussian observation noise. In this setting, the Gaussian structure of the likelihood is naturally compatible with the Gaussian noising process underlying standard diffusion models. In particular, for an important class of such problems, this compatibility makes it possible to exploit explicit Gaussian identities and, in some cases, to construct exact posterior transitions; see \eqref{eq:denoiser_tilt_gaussian} and \Cref{sec:inverse_gaussian}. This favorable structure generally disappears even for simple Poisson inverse problems, where observations are discrete and the likelihood is Poisson rather than Gaussian. Developing a diffusion framework that is intrinsically compatible with Poisson observations is therefore the main applicative motivation for the methodology introduced in this work.
This class of problem is 
common in fields such as medical imaging, \eg, Positron Emission Tomography (PET) \citep{bailey_positron_2005}, or astronomical observation \citep{donath_joint_2024}. These Poisson inverse problems often arise from photon-counting measurements, where observed counts can be close to or equal to zero, and one cannot reasonably default to a Gaussian noise model. Indeed, in such settings, the observation noise is inherently discrete and signal dependent, making Gaussian-based generative dynamics less naturally matched to the underlying statistical model.
By building the generative process itself from Poissonian dynamics, MPDMs provide a framework that is naturally aligned with this class of observation models and, as we show, leads to new posterior-sampling methodologies for Poisson inverse problems.

Our contribution is threefold:
\vspace{-0.3cm}
\begin{enumerate}[wide,label=$\bullet$]
\item
In \Cref{sec:mpp}, we introduce \textit{Mixed Poisson Diffusion Models} (MPDMs). We construct a stochastic noising dynamics based on the theory of multivariate counting processes \citep{protter_stochastic_2005,daley_introduction_2008}, and notably mixed Poisson processes \citep{grandell_mixed_1997,zocher_multivariate_2005}. This yields a rigorous mathematical framework for Poissonian generative modeling \citep{chen_learning_2023,bhattacharya_itdpdm_2026}, closely paralleling the Gaussian diffusion framework described in \Cref{sec:diffusion_gaussian}. We further establish quantitative sampling-error bounds that explicitly account for early stopping, denoiser estimation, and time-discretization errors, thereby providing a Poissonian counterpart to the error analyses commonly developed for other types of diffusion samplers.
\item
In \Cref{sec:inv}, we develop a framework for \textit{MPDM-based posterior sampling}. To the best of our knowledge, this is the first posterior sampling methodology built upon Poissonian generative dynamics. We motivate this construction through Poisson inverse problems. For simple forward operators, \Cref{th:twist} establishes a structural connection between MPDMs and Poisson inverse problems: in contrast to Gaussian diffusion-based approaches, posterior sampling with MPDMs can be carried out without introducing substantial approximations. For more general forward operators, we adapt several established diffusion-based posterior sampling methods to the MPDM setting.
\item
Finally, we evaluate the proposed algorithms in \Cref{sec:expe}. In particular we consider both synthetic and realistic Poisson inverse problems. Our experiments demonstrate the practical relevance of the framework and highlight the advantages of matching the generative noising dynamics to the Poisson structure of the observation model.
\end{enumerate}

\vspace{-0.5cm}

\paragraph{Notation.} All random variable are supposed to be defined on a probability space $(\Omega, \Ft, \PP)$.
  Given a topological space $\msx$ equipped with its Borel $\sigma$-algebra $\mcb(\msx)$, a (continuous-time) \emph{stochastic process} $\bfX=\processt{X}_{t\geqslant 0}$ is a collection of random variables $X_t$ valued in $\msx$, indexed by $t\in\rset_+$. If $\msx=\rset^d$, we write $\bfX = (\bfX^1,\dots,\bfX^d)$, where for $i \in \iint{1}{d}$ and $t\in\rset_+$, $X^i_t$ is the $i$-th component of $X_t$. The \emph{canonical filtration} associated to $\bfX$ is the filtration $\filt^\bfX =\processt{\Ft^\bfX}_{t\geqslant 0}$, where for any $t\geqslant 0$, $\Ft^\bfX_t = \sigma(\bfX^i_s, 0\leqslant s\leqslant t, i=1,\dots,d)$. $\bfX$ is said \emph{càdlàg} if its trajectories are almost surely right-continuous and admit left limits, \ie, for almost all $\omega\in\Omega$ and any $t>0$, $X_{t+}(\omega) \coloneqq \lim_{s\downarrow t}X_s(\omega) = X_t(\omega)$ and $X_{t-}(\omega)\coloneqq \lim_{s\uparrow t}X_s(\omega)\in \rset^d$.

\section{Mixed Poisson diffusion models}
\label{sec:mpp}
In this section, we propose our framework for Poissonian generative modeling, leveraging the theory of counting processes and specifically mixed Poisson processes \citep{grandell_mixed_1997}. This construction provides a continuous-time mathematical framework for Poissonian generative models. A parallel with Gaussian diffusion models is provided in  \Cref{sec:diffusion_gaussian}.

\subsection{Mixed Poisson processes for generative modelling}
\label{sec:mpp_def}
\paragraph*{Counting processes.}
Unlike SDE-based diffusion models, whose trajectories are continuous, counting processes are \emph{càdlàg}, integer-valued stochastic processes. We refer to \Cref{app:more_counting} and \citep{laub_hawkes_2015, protter_stochastic_2005,zocher_multivariate_2005} for a more detailed introduction.
\begin{definition} 
\label{def:counting}A univariate \textbf{counting process} is a stochastic process $\bfZ=\processt{Z}_{t\geqslant 0}$  whose trajectories $Z(\omega) : t\in\rset_+ \mapsto Z_t(\omega)\in\N$, for almost all $\omega\in\Omega$, are càdlàg, non-decreasing and piecewise constant, with $Z_t(\omega)-Z_{t-}(\omega)\in \{0,1\}$ for any $t>0$ and $Z_0(\omega)=0$. A multivariate counting process is a $\N^d$-valued process whose coordinates $\bfZ^i, 1\leqslant i \leqslant d,$ and norm $\Vert\bfZ\Vert_1$
are univariate counting processes.
\end{definition}
Alternatively, a univariate counting process can be defined through its sequence of \emph{jump times}, \ie, $Z_t = \sum_{n\geqslant 1}\1_{T_n\leqslant t}$ where $(T_n)_{n\geqslant1}$ satisfies, almost surely, $0<T_1<T_2<\dots\leqslant\infty$. In the multivariate case, the condition on $\Vert\bfZ\Vert_1$ prevents simultaneous jumps: we unambiguously define the jump times $(T_n)_{n\geqslant1}$ of $\bfZ$ as those of $\Vert\bfZ\Vert_1$. For any $n\geqslant 1$, almost surely, there is a unique pair $1\leqslant i\leqslant d,k\geqslant 1$ such that $T_n = T_k^i$, where $(T_l^i)_{l\geqslant 1}$ are the jump times of $\bfZ^i$. 

In the rest of this section, 
 $\bfZ$ is a multivariate counting process adapted to a complete, right-continuous filtration $\filt=\processt{\Ft}_{t\geqslant 0}$, \ie,  for any $t\geqslant 0$, $\Ft^\bfZ_t \subseteq \Ft_t$ where $\Ft^\bfZ_t = \sigma(Z^i_s, 1\leqslant i\leqslant d,0\leqslant s \leqslant t)$. $\filt^\bfZ = \processt{\Ft^\bfZ}_{t\geqslant 0}$ is the canonical filtration associated to $\bfZ$.
We furthermore assume that for any $t\geqslant 0$, $\PE[\Vert Z_t \Vert_1]<\infty$. Being non-decreasing, each coordinate process $\bfZ^i, 1\leqslant i \leqslant d$ is a $\filt$-submartingale. As a consequence of the Doob-Meyer decomposition, we have the following result, that is a simple extension of the univariate case; see \citep[Chapter III]{protter_stochastic_2005}.  

\begin{proposition}
\label{prop:compensator}
There exists a unique $\filt$-predictable process $\Lambdabf$ whose coordinates $\Lambdabf^i, 1\leqslant i \leqslant d$ are
non-decreasing finite-variation processes, with $\Lambda^i_0=0$ a.s., such that $\bfZ-\Lambdabf$ is a $\filt$-local martingale (coordinate-wise). $\Lambdabf$ is called the $\filt$-\textbf{compensator} of  $\bfZ$. 

If there exists a non-negative, $\filt$-predictable, locally integrable process $\lambdabf$ satisfying, almost surely, $\Lambda_t = \int_0^t\lambda_s \rmd s$ for every $t\geqslant 0$, we call $\boldsymbol{\lambda}$ the $\filt$-\textbf{stochastic intensity} of $\bfZ$.
\end{proposition}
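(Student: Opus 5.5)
The plan is to apply the univariate Doob--Meyer decomposition to each coordinate and then assemble the pieces; only the first part of \Cref{prop:compensator} needs proof, since the second part is a definition. Fix $i\in\iint{1}{d}$. The process $\bfZ^i$ is $\filt$-adapted because $\Ft^\bfZ_t\subseteq\Ft_t$. It is càdlàg and non-decreasing, and integrable at each time since $\PE[Z^i_t]\leqslant\PE[\Vert Z_t\Vert_1]<\infty$. Hence for $s\leqslant t$ we get $\CPE{Z^i_t}{\Ft_s}\geqslant Z^i_s$, so $\bfZ^i$ is a càdlàg $\filt$-submartingale under the usual conditions (completeness and right-continuity of $\filt$ are assumed).

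\textbf{Existence.} Being non-decreasing and non-negative, $\bfZ^i$ is locally of class (D): on $[0,n]$ it is dominated by the integrable variable $Z^i_n$, so $\{Z^i_\tau:\tau\leqslant n \text{ stopping time}\}$ is uniformly integrable. The Doob--Meyer decomposition, in the form of \citep[Chapter III]{protter_stochastic_2005}, then yields a unique predictable, non-decreasing, càdlàg process $\Lambdabf^i$ with $\Lambda^i_0=0$ such that $\bfZ^i-\Lambdabf^i$ is a martingale on each $[0,n]$, hence a local martingale. Alternatively, one may view $\bfZ^i$ as a locally integrable increasing adapted process and invoke the existence of its predictable dual projection (compensator), which has the same properties. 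Setting $\Lambdabf=(\Lambdabf^1,\dots,\Lambdabf^d)$ gives an $\filt$-predictable process, since each coordinate is predictable and the product $\sigma$-algebra is generated coordinatewise. Each coordinate is non-decreasing, hence of finite variation.

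\textbf{Uniqueness.} Suppose $\tilde\Lambdabf$ is another such process. Then for each $i$, $M^i=\Lambdabf^i-\tilde\Lambdabf^i$ is a predictable local martingale of finite variation with $M^i_0=0$. A predictable finite-variation local martingale null at $0$ is indistinguishable from $0$ \citep[Chapter III]{protter_stochastic_2005}. Uniqueness therefore holds coordinate by coordinate, and hence for the vector process.

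The main subtlety is only in checking the hypotheses of Doob--Meyer: the local class (D) property and the usual conditions on $\filt$. No interaction between coordinates is needed, because the multivariate compensator is defined coordinatewise. The condition that $\Vert\bfZ\Vert_1$ has no simultaneous jumps plays no role here.
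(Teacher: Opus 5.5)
Your proposal is correct and follows the same route as the paper. The paper only sketches the argument: each coordinate $\bfZ^i$ is non-decreasing and integrable, hence an $\filt$-submartingale, and the Doob--Meyer decomposition of \citep[Chapter III]{protter_stochastic_2005} is applied coordinatewise. Your check of the local class (D) property via the domination $Z^i_\tau\leqslant Z^i_n$, and your uniqueness argument via predictable finite-variation local martingales, make explicit what the paper leaves to that citation.
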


A well-known counting process is the \emph{Poisson process} with rate $\alpha>0$ \citep[Chapter 3]{protter_stochastic_2005}. Such a process $\bfN^\alpha=\processt{N^\alpha}_{t\geqslant 0}$ has independent increments $N^{\alpha}_t-N^{\alpha}_s \sim \poisson(\alpha(t-s)), t>s\geqslant 0$, with $N_0^\alpha=0$ a.s. A multivariate Poisson process $\bfN^\alpha$ with rate $\alpha=(\alpha_1,\dots,\alpha_d)\in(0,+\infty)^d$ has coordinate processes $\bfN^{\alpha_i}, 1\leqslant i\leqslant d$ that are independent univariate Poisson processes with respective rate $\alpha_i$. 
Its $\filt^{\bfN^\alpha}$-stochastic intensity is the deterministic function $t\mapsto\alpha$.

Poisson processes have the Markov property: this result can be extended to any counting process $\bfZ$ whose $\filt^\bfZ$-intensity has the form $t\mapsto \lambda_t = u_t(Z_{t-})\in\rset_+^d$, with $u$ a locally integrable function; see \Cref{prop:inhomogeneous_loc}. Simulation of $\bfZ$ is possible when its stochastic intensity has this form; see below.

\paragraph*{Mixed Poisson processes.} 
A natural extension of Poisson processes consists in replacing the constant rate by a random one. The resulting objects, called mixed Poisson processes, are extensively covered by \citet{grandell_mixed_1997}, as well as \citet{zocher_multivariate_2005} in the multivariate case. The statements below are proved in \Cref{app:proofs}.
In the rest of this section, the target distribution $\pi$ is a probability distribution supported on $\rset_+^d$, with finite first moment. 
\begin{definition}
\label{def:mixed}
Let $X$ be a $\rset_+^d$-valued random variable with distribution $\pi$. If, conditionally on $X$, $\bfZ$ is a Poisson process with rate $X$, we call $\bfZ$ the \textbf{mixed Poisson process (MPP)} with random rate $X$, and write $\bfZ\sim\MPPpi$. In particular, conditionally on $X$, $Z_t\sim \poisson(tX)$ for any $t\geqslant 0$.
\end{definition}
A mixed Poisson process $\bfZ\sim\MPPpi$ with random rate $X$ defines a \emph{stochastic localization process} \citep{montanari_sampling_2023}: indeed, \Cref{lemma:localization} shows that $X$ is $\Ft^\bfZ_{\infty}$-measurable, while the Markov property of $\bfZ$ follows from \Cref{prop:stochastic_intensity}.

\begin{restatable}
{theorem}{localization}
\label{lemma:localization}
    Let $\bfZ\sim\MPPpi$, with random rate $X$. Almost surely, $\lim\limits_{t\rightarrow\infty}Z_t/t=X$.
\end{restatable}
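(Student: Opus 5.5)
The plan is to condition on the random rate $X$ and use the strong law of large numbers for Poisson processes, coordinate by coordinate. Write $\Omega_0=\{\lim_{t\to\infty}Z_t/t=X\}$; this is an event because $Z$ is càdlàg and monotone, so the limit may be taken along rational $t$. It suffices to show that $\PP(\Omega_0\mid X)=1$ almost surely. Then $\PP(\Omega_0)=\PE[\PP(\Omega_0\mid X)]=1$. By \Cref{def:mixed}, conditionally on $X=x$ the coordinates $\bfZ^i$ are independent Poisson processes with rates $x_i$. A countable intersection over $i\in\iint{1}{d}$ then reduces the claim to the univariate statement: for a Poisson process $\bfN^\alpha$ with rate $\alpha\geqslant 0$, almost surely $N^\alpha_t/t\to\alpha$. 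To make the conditioning rigorous, I would use a regular version of the conditional law of $\bfZ$ given $X$, which exists since the path space is Polish. The event $\Omega_0$ is then expressed as $\{(x,z): z_t/t\to x\}$, a measurable subset of $\rset_+^d\times D(\rset_+,\N^d)$.

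For the univariate statement I handle two cases.
\begin{enumerate}
\item If $\alpha=0$, the process is almost surely identically zero, and the claim is trivial. This case matters because $\pi$ may charge the boundary of $\rset_+^d$.
\item If $\alpha>0$, the increments $N^\alpha_n-N^\alpha_{n-1}$, $n\geqslant1$, are i.i.d.\ $\poisson(\alpha)$ with finite mean $\alpha$. The classical strong law then gives $N^\alpha_n/n\to\alpha$ almost surely along the integers.
\end{enumerate}
To pass to continuous time I use monotonicity. For $n\leqslant t<n+1$,
\begin{equation}
\frac{N^\alpha_n}{n+1}\leqslant \frac{N^\alpha_t}{t}\leqslant \frac{N^\alpha_{n+1}}{n},
\end{equation}
and both bounds converge to $\alpha$, since $(n+1)/n\to1$.

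The only delicate point is the measure-theoretic conditioning step, namely justifying $\PP(\Omega_0)=\PE[\PP(\Omega_0\mid X)]$ with a kernel under which $\bfZ$ is exactly a Poisson process of rate $x$. I would handle this by constructing $\bfZ$ explicitly as $Z^i_t=\tilde N^i_{tX_i}$, where $\tilde N^i$ are independent unit-rate Poisson processes independent of $X$. This realization has the same law as $\bfZ$, and the event $\Omega_0$ depends only on the law of $(X,\bfZ)$. The result then follows pathwise: on $\{X_i>0\}$ we have $\tilde N^i_{sX_i}/s=X_i\cdot\tilde N^i_{u}/u$ with $u=sX_i\to\infty$, and $\tilde N^i_u/u\to1$ almost surely. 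On $\{X_i=0\}$ the coordinate is identically zero. This route avoids conditional laws altogether, and it is the version I would write.
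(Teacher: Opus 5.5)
Your proof is correct and follows the paper's strategy: condition on $X$, reduce to a univariate Poisson process (handling rate $0$ separately), apply the strong law of large numbers, and pass to continuous time using monotonicity. The paper applies the strong law to the i.i.d.\ $\mathrm{Exp}(x)$ inter-arrival times ($T_n/n\to 1/x$) and bounds $N_t/t$ between $n/T_{n+1}$ and $n/T_n$, rather than using Poisson increments at integer times; your realization $Z^i_t=\tilde N^i_{tX_i}$ is also a more rigorous way to justify the conditioning step, which the paper treats only informally.
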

This result shows that if we were able to sample $Z_t$ for a large $t$, $Z_t/t$ would yield an approximate sample from $\pi$.
Given our first definition of MPPs, sampling from $\bfZ$ requires sampling from $\pi$.
Fortunately, similar to diffusion models (see \Cref{sec:diffusion_gaussian}),
sample-free simulation of $\bfZ$ is possible if we have access to the corresponding \emph{Poisson denoiser}. Indeed, \Cref{prop:stochastic_intensity}, which generalizes the univariate case  \citep[Proposition 4.1]{grandell_mixed_1997}, shows that the $\filt^\bfZ$-stochastic intensity of the mixed Poisson process coincides with its Poisson denoiser, defined as
\begin{equation}
    \label{eq:denoiser_def}
    (t,z)\in\rset_+\times \N^d \mapsto \Denois_t^\pi(z) \coloneqq\PE[X\vert Z_t=z] = \int_{\rset_+^d} x\, \pi_t(\rmd x \vert  z)\eqsp,
\end{equation}
where $\pi_t$ is the conditional distribution of $X$ given $Z_t$: for any $z\in\N^d$,
\begin{equation}
    \label{eq:denoising_dist}
    \pi_t(\rmd x\vert z) \propto\prod_{i=1}^d\rme^{-tx_i}\frac{(tx_i)^{z_i}}{z_i!}\pi(\rmd x) \propto \prod_{i=1}^d\rme^{-tx_i}x_i^{z_i}\pi(\rmd x)\eqsp.
\end{equation}
This denoiser depends solely on $\pi$ and not on any particular $\bfZ\sim\MPPpi$: in the univariate case, it can be expressed through the Laplace transform of $\pi$; see \citep[Chapter 4]{grandell_mixed_1997}. More generally, we have the following result \citep[Chapter 3]{zocher_multivariate_2005}.

\begin{restatable}{proposition}{denoiser}
\label{prop:denoiser}
    Given any $\bfZ\sim\MPPpi$, the Poisson denoiser $\Denois^\pi$ defined by \eqref{eq:denoiser_def} can be expressed as follows. For every $t>0$ and $z\in\N^d$, 
    \begin{equation}
        \label{eq:denoiser_ratio}
        \Denois_t^\pi(z) =  \left.\int_{\rset_+^d}x\prod_{i=1}^d\rme^{-tx_i}x_i^{z_i}\pi(\rmd x)\right/ \int_{\rset_+^d}\prod_{i=1}^d\rme^{-tx_i}x_i^{z_i}\pi(\rmd x)
    \end{equation}
   holds if $\PP(Z_t=z)>0$; otherwise, by convention, $\Denois^\pi_t(z)=0$.
 Furthermore, for any $z\in\N^d$,  $t>0 \mapsto \Denois_t^\pi(z)$ is smooth and $t>0\mapsto \Vert \Denois_t^\pi(z)\Vert_1$ is non-increasing.
\end{restatable}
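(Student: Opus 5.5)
The plan has three parts: derive the ratio formula from Bayes' rule, justify smoothness by differentiating under the integral, and obtain monotonicity from a covariance inequality.

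\emph{Ratio formula.} Conditionally on $X$, the coordinates of $Z_t$ are independent with $Z_t^i\sim\poisson(tX_i)$. Hence, for $t>0$ and $z\in\N^d$,
\[
\PP(Z_t=z)=\int\prod_i \rme^{-tx_i}(tx_i)^{z_i}/z_i!\,\pi(\rmd x).
\]
This quantity depends only on $\pi$, and it is strictly positive exactly when the denominator $D_t(z)\coloneqq\int\prod_i \rme^{-tx_i}x_i^{z_i}\pi(\rmd x)$ is. When $\PP(Z_t=z)>0$, the elementary Bayes formula for conditioning on an event of positive probability gives $\PE[X\1_{Z_t=z}]/\PP(Z_t=z)$. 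The numerator is $\int x\prod_i \rme^{-tx_i}(tx_i)^{z_i}/z_i!\,\pi(\rmd x)$, and it is finite because $\pi$ has a finite first moment. The factors $t^{z_i}/z_i!$ cancel, which gives \eqref{eq:denoiser_ratio} together with \eqref{eq:denoising_dist}. This also shows that $\Denois^\pi$ does not depend on the particular $\bfZ$.

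\emph{Smoothness.} Write the numerator as $N_t(z)$. For $t$ in a compact subinterval $[a,b]\subset(0,\infty)$, the $k$-th $t$-derivative of the integrand of $D_t$ is $(-\|x\|_1)^k\prod_i \rme^{-tx_i}x_i^{z_i}$, and similarly for $N_t$ with an extra factor $x$. These are bounded uniformly in $x\in\rset_+^d$ by $\sup_{s\ge0} s^m\rme^{-as}<\infty$ for suitable $m$. Dominated convergence therefore lets us differentiate under the integral to any order, so $N_t(z)$ and $D_t(z)$ are $C^\infty$ in $t>0$. Since $D_t(z)>0$ for one $t>0$ if and only if it holds for all $t>0$ (both mean $\pi(\{x:x_i>0 \text{ whenever } z_i>0\})>0$), the set where the convention $\Denois=0$ applies does not depend on $t$. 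Consequently $t\mapsto\Denois_t^\pi(z)$ is either identically $0$ or the ratio of smooth functions with a positive denominator, and in both cases it is smooth.

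\emph{Monotonicity.} This is the main step, and I would handle it by a covariance identity. On the nondegenerate set, $\|\Denois_t^\pi(z)\|_1=\int\|x\|_1\pi_t(\rmd x\vert z)$, using $x\in\rset_+^d$. By the derivative formulas above, $\partial_t D_t=-\int\|x\|_1\prod_i\rme^{-tx_i}x_i^{z_i}\pi(\rmd x)$ and $\partial_t\int\|x\|_1(\cdots)=-\int\|x\|_1^2(\cdots)$. The quotient rule then yields
\[
\partial_t\|\Denois_t^\pi(z)\|_1=-\Bigl(\PE_{\pi_t(\cdot\vert z)}[\|x\|_1^2]-\PE_{\pi_t(\cdot\vert z)}[\|x\|_1]^2\Bigr)=-\mathrm{Var}_{\pi_t(\cdot\vert z)}(\|x\|_1)\le0.
\]
For this to be legitimate, the second moment under $\pi_t(\cdot\vert z)$ must be finite, and it is, because $\|x\|_1^2\rme^{-t\|x\|_1}$ is bounded for $t>0$. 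So the function is non-increasing. The only delicate points are exactly these integrability checks: requiring $t>0$ so that the exponential dominates every polynomial moment, and tracking the degenerate case through the convention.
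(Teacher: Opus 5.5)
Your proposal is correct and follows the paper's route. The ratio formula comes from the same Bayes computation. Your identity $\partial_t\Vert\Denois_t^\pi(z)\Vert_1=-\mathrm{Var}_{\pi_t(\cdot\vert z)}(\Vert x\Vert_1)$ is the paper's Cauchy--Schwarz step $\PE[S\rme^{-tS}P]^2\leqslant\PE[S^2\rme^{-tS}P]\,\PE[\rme^{-tS}P]$, divided by $\PE[\rme^{-tS}P]^2$. The only difference is in the smoothness claim. You prove it directly by dominated differentiation under the integral, and you check that the degenerate set where the convention $\Denois_t^\pi(z)=0$ applies does not depend on $t$. The paper instead defers smoothness to that of the Laplace transform of $\pi$, citing Zocher.
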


Note that \eqref{eq:denoiser_ratio} allows us to extend the denoiser domain of definition to $\rset_+\times\rset_+^d$.

\begin{restatable}{theorem}{compensator}
\label{prop:stochastic_intensity} 
The $\filt^\bfZ$-compensator of a mixed Poisson process $\bfZ\sim\MPPpi$ is the process defined for every $t\geqslant 0$ by
    \begin{equation}
        \Lambda_t =  \int_0^t \Denois_s^\pi(Z_s) \rmd s\eqsp. 
    \end{equation}
    In other words, $\bfZ$ is a Markov counting process with $\filt^\bfZ$-stochastic intensity $t\in\R_+\mapsto \Denois^\pi_t(Z_{t-})$.
\end{restatable}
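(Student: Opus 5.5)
The plan is to verify directly that $M_t \coloneqq Z_t - \int_0^t \Denois^\pi_s(Z_s)\,\rmd s$ is an $\filt^\bfZ$-martingale, and then conclude by uniqueness in \Cref{prop:compensator}. First, $\Lambda_t$ as defined is continuous and adapted, hence predictable. It is non-decreasing because $\Denois^\pi_s\geqslant 0$, and $\Lambda_0=0$. Since $Z$ has countably many jumps, $\int_0^t \Denois^\pi_s(Z_s)\,\rmd s=\int_0^t \Denois^\pi_s(Z_{s-})\,\rmd s$, which gives the intensity form $t\mapsto\Denois^\pi_t(Z_{t-})$. Integrability follows because $\PE[\Denois^\pi_s(Z_s)]=\PE[X]<\infty$ by the tower property, so $\PE[\Lambda_t]=t\,\PE[X]$.

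The core step is the martingale identity. I would work with the enlarged filtration $\mathcal{G}_t=\sigma(X)\vee\Ft^\bfZ_t$. Conditionally on $X$, $\bfZ$ is a Poisson process with rate $X$ and has independent increments. Hence $Z_t - tX$ is a $\mathcal{G}$-martingale, i.e. for $s<t$,
\[
\PE[Z_t-Z_s\mid\mathcal{G}_s]=(t-s)X .
\]
Projecting onto $\Ft^\bfZ_s\subseteq\mathcal{G}_s$ gives $\PE[Z_t-Z_s\mid\Ft^\bfZ_s]=(t-s)\,\PE[X\mid\Ft^\bfZ_s]$. Separately, I need $\PE[\int_s^t\Denois^\pi_u(Z_u)\,\rmd u\mid\Ft^\bfZ_s]=\int_s^t\PE[\PE[X\mid Z_u]\mid\Ft^\bfZ_s]\,\rmd u$, which uses Fubini and integrability.

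The key lemma, and the main obstacle, is the sufficiency statement
\[
\PE[X\mid\Ft^\bfZ_u]=\PE[X\mid Z_u]=\Denois^\pi_u(Z_u).
\]
I would prove it from the likelihood of a path on $[0,u]$ given $X=x$. Conditionally on $x$, the path density with respect to a reference Poisson process of unit rate in each coordinate is
\[
\prod_i x_i^{Z^i_u}\rme^{-u x_i}\rme^{u}.
\]
This depends on the path only through $Z_u$, so Bayes' rule gives $\pi(\rmd x\mid\Ft^\bfZ_u)=\pi_u(\rmd x\mid Z_u)$ from \eqref{eq:denoising_dist}. To make this rigorous I would check it on cylinder events $\{Z_{u_1}=k_1,\dots,Z_{u_n}=k_n\}$: conditionally on $X$ and $Z_u$, the jump times are i.i.d. uniform on $[0,u]$, independent of $X$, so the earlier increments given $Z_u$ are multinomial and do not depend on $X$. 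A monotone class argument then extends the identity to $\Ft^\bfZ_u$.

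With the lemma in hand, for $u\geqslant s$ the tower property gives $\PE[\Denois^\pi_u(Z_u)\mid\Ft^\bfZ_s]=\PE[X\mid\Ft^\bfZ_s]$. The compensator increment therefore has conditional expectation $(t-s)\PE[X\mid\Ft^\bfZ_s]$, which matches the increment of $Z$, so $M$ is a true martingale. Passing to the completed right-continuous version of the filtration changes nothing, because $M$ is càdlàg and integrable. Uniqueness in \Cref{prop:compensator} then identifies $\Lambda$. Finally, the Markov property follows because the intensity has the form $u_t(Z_{t-})$ with $u_t=\Denois^\pi_t$ locally integrable; \Cref{prop:denoiser} shows it is smooth in $t$ and, since $t\mapsto\Vert\Denois^\pi_t(z)\Vert_1$ is non-increasing, bounded on compacts of $(0,\infty)$ with $\Vert\Denois^\pi_t(z)\Vert_1\leqslant\Vert\Denois^\pi_0(z)\Vert_1$. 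We can then invoke \Cref{prop:inhomogeneous_loc}. The intensity need only be defined where $\PP(Z_t=z)>0$, and the convention $\Denois^\pi_t(z)=0$ elsewhere is harmless since those states are a.s. never visited.
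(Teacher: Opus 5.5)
Your proposal is correct and takes essentially the same route as the paper. The enlarged filtration $\mathcal{G}_t=\sigma(X)\vee\Ft^\bfZ_t$ gives $\PE[Z_t-Z_s\mid\Ft^\bfZ_s]=(t-s)\PE[X\mid\Ft^\bfZ_s]$. The sufficiency identity $\PE[X\mid\Ft^\bfZ_u]=\PE[X\mid Z_u]$ is checked on cylinder events: your conditional-uniform/multinomial factorization is exactly the paper's $\PP(\msa\mid X)=\rme^{-t\Vert X\Vert_1}P\,C_\msa$, and your explicit monotone-class and compensator-uniqueness steps fill in what the paper leaves implicit. The Markov property then comes from \Cref{prop:inhomogeneous_loc}.

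One harmless slip: for $z\neq 0$ the bound $\Vert\Denois^\pi_t(z)\Vert_1\leqslant\Vert\Denois^\pi_0(z)\Vert_1$ is useless. The ratio formula at $t=0$ involves moments of $\pi$ of order $\Vert z\Vert_1+1$, which may be infinite, and $t\mapsto\Denois^\pi_t(z)$ can genuinely fail to be integrable near $0$. You only need boundedness on compacts of $(0,\infty)$, since such states are entered only at strictly positive jump times.
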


The native Markov property of mixed Poisson processes is a notable difference from several Gaussian diffusion models. In these frameworks, one often considers a non-Markov process, and then constructs a Markov process with identical time-marginal distributions, the \emph{Markovian projection}. 

\paragraph*{Learning the denoiser.}

Based on \Cref{prop:stochastic_intensity}, we will see that $\mathbf{Z}\sim\MPPpi$ can be simulated if the denoiser $\Denois^\pi$ is known, or realistically approximated by an intensity function $(t,z)\mapsto k^{\theta}_t(z)$, yielding approximate samples from $\pi$. While $\Denois^\pi$ is generally intractable, it can be learnt since, by definition, we have $\Denois_t^\pi(Z_t) = \PE[X|Z_t]$ for any $t\geqslant 0$. This conditional expectation is the unique minimizer of any functional of the form
    $f\mapsto\PE[\breg_{\phi}(X,f(t,Z_t))]$, with $\phi:\rset_+^d\rightarrow \rset$ a strictly convex, differentiable function 
and $(x,y)\in\rset_+^2\mapsto \breg_\phi(x,y)\coloneqq \phi(x)-\phi(y) -(x-y)^T\nabla\phi(y)$ is the \emph{Bregman divergence} associated with $\phi$. A common choice is $x\mapsto\phi(x) = \Vert x\Vert_2^2$, which results in the squared Euclidean distance. To learn a Poisson denoiser,
\citet{bhattacharya_itdpdm_2026, chen_learning_2023} suggest using $x\mapsto \phi(x) = \rme(x)= \sum_{i=1}^dx_i\log x_i$ instead, yielding the divergence 
\begin{equation}
    \label{eq:bregman}
    (x,y)\in (0,+\infty)^d\times(0,+\infty)^d\mapsto \breg_\rme(x,y) = \sum_{i=1}^dx_i\log(x_i/y_i)-x_i+y_i\eqsp.
\end{equation}

Given a choice of divergence $\breg_\phi$ and a neural networks $(t,z) \mapsto k^{\theta}_t(z)$ parameterized by $\theta$, we consider then the loss function $\theta \mapsto \mathscr{L}(\theta) = \int_0^{T} \PE[\breg_\phi(X,k^{\theta}_t(Z_t))] \rmd t$, with $T>0$.
The pseudo code associated with the minimization of this loss function is provided in \Cref{app:algo}, \Cref{alg:train}.

\paragraph*{Simulation algorithms.}
We explain here how to simulate a counting process $\bfZ$ associated with an intensity of the form $t\mapsto u_t(Z_{t-})$. Proofs and additional details can be found in \Cref{app:mppsimu}.

Exact simulation of the process with such intensity is possible by adapting Ogata's \emph{thinning} algorithm \citep[Algorithms 7.5.IV, 7.5.V]{daley_introduction_2008}. It requires the intensity function's $l^1$ norm to be non-increasing in $t$, for any fixed $z$, which, by \Cref{prop:denoiser}, holds notably for MPPs. This procedure and its validity are postponed to \Cref{app:mppsimu}, \Cref{prop:thinning}, \Cref{alg:thinning_1d}. 
This exact simulation method is a notable difference from Gaussian diffusion models, which necessarily induce a discretization error. However, the execution time of thinning is uncertain, as $t$ increases according to exponentially distributed random variables. Furthermore, the execution time is expected to scale roughly linearly with the dimension, as each step increments only one coordinate.

Alternatively, we may use an algorithm with fixed execution time, at the cost of a discretization error. \Cref{fig:thinning} provides a visual representation of both simulation methods.
For any $t>0$, consider a time-grid $0=t_0<t_1<\dots<t_{N}=t$, with $\Delta_k\coloneqq t_{k+1}-t_k$.
 Define the Euler discretization $\bfZtild$  of $\bfZ$ as the  counting process with the following recursion: $\ztild_0=0$ and for every $k\geqslant0, s\in[t_k,t_{k+1}]$,\\
\begin{minipage}[t]{0.55\textwidth}
 conditionally on $\ztild_{t_k}$, set
 \vspace{-4pt}
\begin{equation}
\ztild_{s} = \ztild_{t_k} + \vphantom{}^k N_{s-t_k}^{u_{t_k}(\ztild_{t_k})}\eqsp,
\end{equation}
where the $\vphantom{}^k\bfN^{u_{t_k}(\ztild_{t_k})}$ is a standard Poisson process with constant intensity $u_{t_k}(\ztild_{t_k})$ and independent of the past. Said differently, $\bfZtild$ has $\filt^{\bfZtild}$-stochastic intensity $s\mapsto \lambda_s \coloneqq \sum_{k=0}^{N-1}\1_{[t_k,t_{k+1})}(s)u_{t_k}(\ztild_{t_k})$. In particular, $\tilde{Z}_{t_{k+1}}-\tilde{Z}_{t_k} \sim \poisson(\Delta_ku_{t_k}(\tilde{Z}_{t_k}))$.
We show in \Cref{prop:discretization}, \Cref{app:proofs} that under mild regularity assumptions on $u$, $\PE[\Vert\ztild_t-Z_t\Vert_1] = \mathcal{O}(\max_k \Delta_k)$ as $\max_k\Delta_k \rightarrow 0$. We also estimate the empirical discretization error on a simple example; see \Cref{fig:discretized_bound}.
\end{minipage}
\hfill
\begin{minipage}[t]{0.42\textwidth}
\begin{algorithm}[H]
\caption{Time-dicretization}
\label{alg:discretization_1d}
\begin{algorithmic}[1]
\Require $(t,z)\mapsto u_t(z)\in\rset_+^d$; time-grid $0=t_0<t_1<...<t_N$
\State  Set $\ztild_0 = 0$ 
\For{$k=0,\dots,N-1$}
\State Compute $u_{t_k}(\ztild_{t_k})$ 
\State Draw $n_k\sim \poisson(\Delta_ku_{t_k}(\ztild_{t_k}))$
\State Set $\ztild_{t_{k+1}} = \ztild_{t_k} + n_k$
\EndFor
\State \Return $\ztild_{t_0},\ztild_{t_1},\cdots,\ztild_{t_N}$
\end{algorithmic}
\end{algorithm}
\end{minipage}

\paragraph*{Error bounds.} In order to evaluate the quality of approximate $\pi$ samples upon simulation of $\bfZ$, we first derive an \emph{early stopping error}, due to the restriction of $\bfZ$ to a finite time-window. 

\begin{restatable}{proposition}{early}
\label{lemma:early}
    Let $\bfZ\sim\MPPpi$ with random rate $X$. 
    Assume $\PE[\Vert X\Vert^2_2]<\infty$; then for every $t>0$,
    \begin{equation}
\textstyle    \mathscr{W}_2(\law(Z_t/t),\pi)^2 \leqslant \PE[\Vert Z_t/t-X\Vert^2_2] = \PE[\Vert X\Vert_1]/t\eqsp,
    \end{equation} 
    where $ \mathscr{W}_2$ is the $2$-Wasserstein distance defined in \Cref{app:further_notation}, \eqref{eq:wass2}.
\end{restatable}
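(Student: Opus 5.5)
The plan has two parts: a coupling argument for the Wasserstein inequality, and a conditional variance computation for the equality.

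For the inequality, note that $(Z_t/t, X)$ is defined on the same probability space. Its joint law is therefore a coupling of $\law(Z_t/t)$ and $\pi$. By the definition of $\mathscr{W}_2$ in \eqref{eq:wass2} as an infimum over couplings, we get $\mathscr{W}_2(\law(Z_t/t),\pi)^2 \leqslant \PE[\Vert Z_t/t-X\Vert_2^2]$. To make $\mathscr{W}_2$ well defined, we should check that both marginals have finite second moments. This holds for $\pi$ by assumption. For $Z_t/t$ it will follow from the computation below, since $\PE[\Vert Z_t/t\Vert_2^2]\leqslant 2\PE[\Vert Z_t/t-X\Vert_2^2]+2\PE[\Vert X\Vert_2^2]<\infty$.

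For the equality, condition on $X$. By \Cref{def:mixed}, conditionally on $X$, the coordinates $Z_t^i$ are independent with $Z^i_t\sim\poisson(tX_i)$. Hence $\PE[Z_t^i/t\,\vert\, X]=X_i$. The conditional mean of $Z^i_t/t - X_i$ given $X$ is therefore zero, so its conditional second moment equals the conditional variance of $Z^i_t/t$:
\begin{equation}
\PE\bigl[(Z^i_t/t - X_i)^2 \,\big\vert\, X\bigr] = \operatorname{Var}(Z^i_t\,\vert\, X)/t^2 = tX_i/t^2 = X_i/t\eqsp.
\end{equation}
Summing over $i$ gives $\PE[\Vert Z_t/t-X\Vert_2^2\,\vert\,X]=\Vert X\Vert_1/t$, using $X\in\rset_+^d$. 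Taking expectations via the tower property for non-negative variables, where Tonelli's theorem avoids any integrability issue, yields $\PE[\Vert X\Vert_1]/t$. This is finite because $\pi$ has a finite first moment.

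The only delicate point is measure-theoretic. We must interpret ``conditionally on $X$, $\bfZ$ is Poisson with rate $X$'' through a regular conditional distribution. The conditional moment formulas then hold for almost every realisation of $X$, including $X_i=0$, where $Z^i_t=0$ almost surely and the formula is trivially true. Apart from this, the argument is a direct computation, and I expect no serious obstacle.
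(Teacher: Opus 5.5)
Your proposal is correct and matches the paper's proof: the paper also conditions on $X$ and uses that $Z_t^i$ has conditional mean and variance $tX_i$. This gives $\PE[\Vert Z_t-tX\Vert_2^2]=t\,\PE[\Vert X\Vert_1]$, and dividing by $t^2$ yields the equality, while the $\mathscr{W}_2$ inequality comes from the coupling $(Z_t/t,X)$, as remarked after the definition of $\mathscr{W}_2$ in the notation appendix.
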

Compared to the analogous bound for Gaussian diffusion (\Cref{sec:diffusion_gaussian}), there is an additional dependence on $\PE[\Vert X\Vert_1]$ due to the equidispersion of the Poisson distribution. This result is reminiscent of \citep[Proposition 2]{saremi_chain_2024}, where $X$ is estimated from a discrete set of measurements.
\begin{remark}
\label{rmk:early}
    Instead of approximating $X$ by $Z_t/t$ for large $t$, using the denoiser $\PE[X|Z_t]$ instead is preferable; see also \Cref{fig:bounds_early_denois}. Indeed, a short computation shows that
    \begin{equation}
   \textstyle     \PE\left[\Vert X-\PE[X|Z_t]\Vert^2_2\right] = \PE\left[\left\Vert X-Z_t/t\right\Vert^2_2\right] - \PE\left[\left\Vert \PE[X\vert Z_t]-Z_t/t\right\Vert^2_2\right]\eqsp.
    \end{equation}
    Thus $\PE[X|Z_t] \rightarrow X$ in $L^2$, with an early stopping error no larger than the one obtained using $Z_t/t$.
\end{remark}

In our next result, we show that the distributions of two counting processes can be quantitatively controlled in terms of the integrated discrepancy between their respective intensity functions, provided one is Lipschitz. As a direct consequence, we obtain a bound on the distance between the law of an MPP targeting $\pi$ and that of an approximate MPP driven by an intensity $k^\theta\approx \Denois^\pi$.

\begin{restatable}{lemma}{estimationerror}
\label{lemma:estimation_error}
Let $\bfZ,\bfY$ be counting processes with respective intensities $t\mapsto u_t(Z_{t-}), t\mapsto v_t(Y_{t-})$, where $u,v$ are measurable, locally integrable functions.
    Assume that for $t>0$, the integrated discrepancy $\delta(t)\coloneqq\int_0^t \PE[\Vert u_s(Y_s)-v_s(Y_s)\Vert_1]\rmd s$ is finite, and finally that $t\mapsto \mathrm{Lip}(u_t) = \sup_{y\neq z}\Vert u_t(z)-u_t(y)\Vert_1/\Vert z-y\Vert_1$ is locally integrable. Then, for $t>0$,
    \begin{equation}
  \textstyle  \mathscr{W}_{1,1}\left(\law\left(Z_t\right),\law\left(Y_t\right)\right) \leqslant \delta(t)\exp\left(\int_0^t\mathrm{Lip}(u_s) \rmd s\right)\eqsp,
    \end{equation}
    where $\mathscr{W}_{1,1}$ denotes the $1$-Wasserstein distance associated to the $l^1$ distance: see \Cref{app:further_notation}, \eqref{eq:wass1}.
\end{restatable}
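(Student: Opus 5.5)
The plan is to build an explicit coupling of $\bfZ$ and $\bfY$ on a common probability space, control $\PE[\Vert Z_t-Y_t\Vert_1]$ through a Grönwall-type inequality, and then use that $\mathscr{W}_{1,1}(\law(Z_t),\law(Y_t))\leqslant \PE[\Vert Z_t-Y_t\Vert_1]$ for any coupling.

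\textbf{Step 1: coupling by a common Poisson random measure.} For each coordinate $i\in\iint{1}{d}$, take independent Poisson random measures $M^i$ on $\rset_+\times\rset_+$ with Lebesgue intensity. Define $Z^i_t=\int_0^t\int_0^\infty \1_{\{r\leqslant u^i_s(Z_{s-})\}}M^i(\rmd s,\rmd r)$, and $Y$ analogously with $v$ and the same $M^i$. Standard thinning/Poisson-embedding arguments show the following.
\begin{itemize}
\item Pathwise existence holds, using local integrability of $u$ and $v$.
\item The resulting processes are counting processes with the prescribed intensities.
\item Their laws coincide with those of the given $\bfZ$ and $\bfY$. This follows because a counting process with intensity of the form $u_t(Z_{t-})$ has a law determined by $u$, by the Markov/uniqueness result \Cref{prop:inhomogeneous_loc}.
\end{itemize}
This construction also rules out simultaneous jumps between coordinates almost surely.

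\textbf{Step 2: the discrepancy process.} Under this coupling, $\vert Z^i_t-Y^i_t\vert$ can only change at atoms of $M^i$ with $r$ lying between $u^i_s(Z_{s-})$ and $v^i_s(Y_{s-})$, and each such atom changes it by at most one. Hence
\[
\vert Z^i_t-Y^i_t\vert\leqslant \int_0^t\int_0^\infty \1_{\{r\in(\min,\max](u^i_s(Z_{s-}),v^i_s(Y_{s-}))\}}M^i(\rmd s,\rmd r).
\]
Taking expectations, the compensator of $M^i$ is Lebesgue, so
\[
e(t)\coloneqq\PE[\Vert Z_t-Y_t\Vert_1]\leqslant\int_0^t\PE[\Vert u_s(Z_s)-v_s(Y_s)\Vert_1]\rmd s.
\]
Two technical points are needed here. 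First, $Z_{s-}=Z_s$ for Lebesgue-a.e. $s$. Second, finiteness of the expectations requires a localization by stopping times $\tau_n=\inf\{s:\Vert Z_s\Vert_1+\Vert Y_s\Vert_1\geqslant n\}$, followed by monotone convergence.

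\textbf{Step 3: triangle inequality and Grönwall.} Split $\Vert u_s(Z_s)-v_s(Y_s)\Vert_1\leqslant \mathrm{Lip}(u_s)\Vert Z_s-Y_s\Vert_1+\Vert u_s(Y_s)-v_s(Y_s)\Vert_1$. This gives
\[
e(t)\leqslant \delta(t)+\int_0^t\mathrm{Lip}(u_s)e(s)\rmd s.
\]
Since $\delta$ is non-decreasing and $\mathrm{Lip}(u_\cdot)$ is locally integrable, Grönwall's lemma in integral form yields $e(t)\leqslant\delta(t)\exp(\int_0^t\mathrm{Lip}(u_s)\rmd s)$, which is the claimed bound. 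Applying Grönwall requires $e$ to be locally bounded. I would secure this either via the localization of Step 2 or by bounding $e(s)\leqslant\PE\Vert Z_s\Vert_1+\PE\Vert Y_s\Vert_1$, using the linear growth that the Lipschitz property gives $u$.

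\textbf{Main obstacle.} The delicate part is Step 1 together with the integrability in Step 2. One must justify that the Poisson-embedding construction yields processes with the same laws as the abstract $\bfZ,\bfY$, and that the expectation of the stochastic integral equals the integral of the intensity discrepancy without assuming a priori finite first moments. I would handle this by localization and Fatou's lemma. The Lipschitz bound ensures $\PE\Vert Z_s\Vert_1<\infty$, since it implies $\Vert u_s(z)\Vert_1\leqslant\Vert u_s(0)\Vert_1+\mathrm{Lip}(u_s)\Vert z\Vert_1$, and a Grönwall argument then gives the moment bound. For $\bfY$, finiteness of $\delta(t)$ combined with the same growth bound gives the corresponding control.
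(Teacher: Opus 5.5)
Your proposal is correct and follows essentially the same route as the paper: couple $\bfZ$ and $\bfY$ through a common Poisson random measure, bound $\PE\vert Z^i_t-Y^i_t\vert$ via the compensation formula by $\PE\int_0^t\vert u^i_s(Z_s)-v^i_s(Y_s)\vert\,\rmd s$, split with $\mathrm{Lip}(u_s)$, and conclude with Grönwall. Your extra care fills in points the paper leaves implicit: independent measures $M^i$ per coordinate to rule out simultaneous jumps, identification of the coupled laws, and localization to guarantee finite moments before applying Grönwall.
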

The bound derived in \Cref{lemma:estimation_error} is relatively sharp: in \Cref{ex:gamma_estimation}, we consider a tractable case where $\pi$ is Gamma distributed on $\rset_+$, and show that the upper bound differs from the actual Wasserstein distance by only a logarithmic factor in $t$. Finally,
combining the results above, we can establish, under appropriate conditions, theoretical guarantees on the sampling performance of MPDM; see \Cref{app:mppsimu}, \Cref{col:bound}.

\subsection{Related work and comparison with diffusion models}
\label{sec:diffusion_gaussian}

\paragraph{Comparison with classical Gaussian diffusion models.}
We choose here the same diffusion noising process as, \eg, \citet{grenioux_stochastic_2024}. While many diffusion models are parametrized through their score, Tweedie's formula shows that one can equivalently work with the denoiser; with this construction, the denoiser appears naturally, which is reminiscent of the MPDM construction of the previous section.
Consider a target distribution $\pi$ on $\rset^d$, $X\sim\pi$, and the process $\bfX^\rmn=\processt{X^\rmn}_{t\geqslant 0}$ defined as $  X^\rmn_t = tX + W_t$,
where $\bfW$ is a $d$-dimensional Brownian motion independent of $X$. Similar to mixed Poisson processes, we have $X^\rmn_t/t \rightarrow X$ when $t\rightarrow \infty$ almost surely. Given $t>0$, we denote the conditional distribution of $X$ given $X^\rmn_t$ as $ \pi_t(\rmd x\vert x_t) \propto \rme^{-\Vert x_t-tx\Vert^2/(2t)}\pi(\rmd x)$, and define the \emph{Gaussian denoiser} as the function $(t,x_t)\in\rset_+\times \rset^d\mapsto D_t^\pi(x_t)=\int x\times  \pi_t(\rmd x\vert x_t)$.

A priori, it is not possible to simulate this process without first sampling from $\pi$. However, assuming $\pi$ has a finite first moment, the solution of the stochastic differential equation 
\begin{equation}
\label{eq:diffusion_sde}
    \rmd X_t^{\rmd} = D^\pi_t(X_t^{\rmd})\rmd t + \rmd B_t\eqsp, 
\end{equation}
where $\bfB$ is another Brownian motion, is a Markov process with the same time-marginal distributions as $\bfX^\rmn$.
The Gaussian denoiser can be learned by minimization of a functional, like the Poisson denoiser. Samples from the joint distribution $(X,X^\rmn_t)$ are obtained by 1) drawing $x\sim\pi$ from a dataset and 2) computing the noisy observation $x_t\sim X^\rmn_t$ as $tx + \sqrt{t}G$, with $G\sim \gauss(0,\Idd)$.
Once the denoiser is learned, we obtain approximate samples from $\pi$ by simulating \eqref{eq:diffusion_sde} on a discretized time grid.
However, diffusion models are not limited to generative modeling tasks: a pre-trained model may also be used to solve inverse problems, which is the object of \cref{sec:inv}.

\paragraph{Related works.} 
Poissonian generative models were previously considered by \citet{chen_learning_2023,bhattacharya_itdpdm_2026}, but their constructions differ substantially from ours. 

\citet{chen_learning_2023} introduced JUMP, which encodes $X$ into a Poisson-distributed latent variable $Z_{t_0}\sim\poisson(\lambda X)$, constructs a discrete-time Markov chain $
0\approx Z_{t_N}, Z_{t_{N-1}},\dots,Z_{t_1},Z_{t_0}$
through a binomial thinning argument, and derives the corresponding training objective from a variational lower bound. Their resulting training and sampling procedures (\ie, inference algorithm) are closely related to our \Cref{alg:train,alg:discretization_1d} when the Bregman divergence \eqref{eq:bregman} is used to train the denoiser; see \Cref{app:training} for further details. The main algorithmic differences between JUMP and MPDM are analogous to those between DDPMs \citep{ho2020denoising} and SDE-based diffusion models \citep{song2021score}. Beyond these algorithmic similarities, the continuous-time framework developed here also enables us to derive theoretical guarantees for MPDMs.

\citet{bhattacharya_itdpdm_2026} adopt an information-theoretic perspective, in the spirit of the diffusion framework developed by \citet{kong_information-theoretic_2023}. Conditionally on $X$, they consider, for any \emph{fixed} $t\geqslant 0$, an observation channel of the form $
Z_t\sim\poisson(tX)$.
They then propose learning the Poisson denoiser $\PE[X\vert Z_t]$ through a reconstruction loss related to the ELBO derived in \citep{chen_learning_2023}. They also introduce a sampling procedure that differs substantially from both JUMP and our method. Through a controlled synthetic experiment, we show in \Cref{app:mppsimu}, \Cref{rmk:inference} that this inference procedure appears less effective than the one considered in \Cref{alg:discretization_1d}.
\section{Poisson Bayesian inverse problems}
\label{sec:inv}
We next apply the MPDM framework to inverse problems. We review recent diffusion-based methods that leverage a pre-trained diffusion model in \Cref{sec:inverse_gaussian}.
These approaches have been successfully applied to diverse inverse problems, but mostly ones driven by Gaussian noise. In this section, we consider Poisson inverse problems of the form $Y \sim \poisson(\mca(X))$, with forward operator $\mca:\R_+^d\rightarrow \rset_+^{d_y}$. The objective is to sample from $\pi^y\propto g(y\vert x)\pi(\rmd x)$, the posterior distribution of $X$ given $Y=y$, with prior distribution $\pi$ supported on $\rset_+^d$ and likelihood function $g$ defined in \eqref{eq:posterior_poisson}.

We show how a pretrained denoiser $k^\theta\approx \Denois^\pi$ associated with $\MPPpi$ can be used to approximately sample from $\mathrm{MPP}(\pi^y)$, hence from the posterior distribution $\pi^y$, in close analogy with diffusion-based posterior sampling methods. In particular, we adapt DPS \citep{chung_diffusion_2024}, PnP-DM \citep{wu_principled_2024} and DAPS \citep{zhang_improving_2025} to the MPDM framework.

\paragraph*{Base case: Poisson denoising.}
We first consider the Poisson inverse problem
$Y\sim\poisson(aX)$, $a>0$, \ie, $\mca=a\Idd$.
Given an observation $Y=y$, \Cref{th:twist} shows that the \emph{conditional} denoiser $\Denois^{\pi^y}$ of $\bfZ^y\sim \mathrm{MPP}(\pi^y)$ can be obtained through a shift of  $\Denois^\pi$ inputs.
Thus, simulating $\bfZ^y$ requires neither additional approximation nor retraining.
We refer to the associated posterior sampling method as  \emph{Poisson Denoising Posterior Sampling} (PDPS); see \Cref{alg:poisson_denoising}. This result is the Poisson analogue of the ``Gaussian likelihood, Gaussian diffusion" case; see \Cref{sec:inverse_gaussian}, \eqref{eq:denoiser_tilt_gaussian}.

\begin{restatable}{theorem}{thpoisson}
\label{th:twist}
    Let $\pi^y$ denote the posterior distribution of $X$ given $Y=y$, where $Y\sim\poisson(aX)$ and $X\sim\pi$. The denoiser $\Denois^{\pi^y}$ associated with $\mathrm{MPP}(\pi^y)$ is given by $     (t,z)
        \mapsto
        \Denois^{\pi^y}_t(z)
        =
        \Denois^\pi_{t+a}(z+y)$.
\end{restatable}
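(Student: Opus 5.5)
The approach is a direct computation with the explicit ratio formula of \Cref{prop:denoiser}, applied to the prior $\pi^y$ in place of $\pi$. The first step is to write the posterior explicitly. Since $Y\sim\poisson(aX)$ coordinate-wise, the likelihood is $g(y\vert x)=\prod_{i=1}^d \rme^{-ax_i}(ax_i)^{y_i}/y_i!$. Hence
\begin{equation}
\pi^y(\rmd x)\propto \prod_{i=1}^d \rme^{-ax_i}x_i^{y_i}\,\pi(\rmd x)\eqsp.
\end{equation}
This is well defined whenever $\PP(Y=y)>0$, and $\pi^y$ again has finite first moment. That moment bound follows because $\rme^{-ax_i}x_i^{y_i}$ times $x$ is bounded by a constant times $1+\Vert x\Vert_1$, or simply from $\pi^y\ll\pi$ with a bounded density.

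The second step is to plug $\pi^y$ into \eqref{eq:denoiser_ratio}. For $t>0$ and $z\in\N^d$, the weights multiply:
\begin{equation}
\prod_{i=1}^d \rme^{-tx_i}x_i^{z_i}\cdot \prod_{i=1}^d \rme^{-ax_i}x_i^{y_i}=\prod_{i=1}^d \rme^{-(t+a)x_i}x_i^{z_i+y_i}\eqsp.
\end{equation}
The normalising constant of $\pi^y$ cancels between the numerator and the denominator. The resulting ratio is exactly the expression \eqref{eq:denoiser_ratio} for $\Denois^\pi_{t+a}(z+y)$. This gives $\Denois^{\pi^y}_t(z)=\Denois^\pi_{t+a}(z+y)$, and the same identity holds at $t=0$ directly from the definition.

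Equivalently, there is a probabilistic route. If $\bfZ\sim\MPPpi$ is independent of an extra $\poisson(aX)$ count, then $(Y,Z_t)$ given $X$ are independent Poissons. Conditioning on $Y=y$, $Z_t=z$ is the same as conditioning on a single Poisson observation $Z_t+Y=z+y$ at effective time $t+a$, since the likelihood depends on $x$ only through $\rme^{-(t+a)x}x^{z+y}$.

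The main point needing care is the convention on null sets. $\Denois^{\pi^y}_t(z)$ is set to $0$ when $\PP(Z^y_t=z)=0$, and $\Denois^\pi_{t+a}(z+y)$ is set to $0$ when $\PP(Z_{t+a}=z+y)=0$. I would check that these events coincide. Each is equivalent to the vanishing of the integral $\int\prod_i \rme^{-(t+a)x_i}x_i^{z_i+y_i}\pi(\rmd x)$, because $\PP(Z_s=w)$ equals that integral, with $s=t+a$ and $w=z+y$, up to the positive factor $\prod_i s^{w_i}/w_i!$. The same holds for $\pi^y$ up to its positive normalising constant. Hence both sides vanish together, and the identity holds for every $(t,z)$.
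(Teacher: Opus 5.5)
Your proof is correct and follows the paper's argument: substitute $\pi^y(\rmd x)\propto\prod_{k}\rme^{-ax_k}x_k^{y_k}\pi(\rmd x)$ into the ratio formula \eqref{eq:denoiser_ratio}, combine the weights into $\prod_j \rme^{-(t+a)x_j}x_j^{z_j+y_j}$, and recognize $\Denois^\pi_{t+a}(z+y)$. Your check that the two zero-conventions coincide is a small refinement the paper omits, but it uses $t^{z_i}>0$ and so only covers $t>0$. At $t=0$ the identity holds for $z=0$, where both sides equal the mean of $\pi^y$. For $z\neq 0$ at $t=0$ it holds only under the ratio-formula extension of the denoiser, not under the zero convention, so your closing claim ``for every $(t,z)$'' should be restricted accordingly.
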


We next turn to general forward operators and show how several diffusion-based posterior sampling algorithms can be adapted to the MPDM framework.
In this section, we state only the main results underlying these adaptations; implementation details and proofs are deferred to \Cref{app:algo}.
Although our main motivation is Poisson inverse problems, the remainder of this section does not require any specific assumption on the likelihood $g$ and therefore extends beyond Poisson inverse problems; we only require that the prior $\pi$ be supported on $\rset_+^d$.

\paragraph*{A DPS-like algorithm.}
Consider a general forward operator $\mca$ and the associated likelihood $g$.
Recall that $\pi_t(\rmd x\vert z_t)$ denotes the conditional distribution of $X$ given $Z_t=z_t$, for $\bfZ\sim\MPPpi$, as defined in \eqref{eq:denoising_dist}.
Define
$
g_t(y\vert z_t)
\coloneqq
\int_{\rset_+^d} g(y\vert x)\pi_t(\rmd x\vert z_t)$.
The following result gives an exact expression for the posterior denoiser in terms of a discrete likelihood-ratio correction.

\begin{restatable}{theorem}{dpspoisson}
\label{th:approx_dps}
    Consider the posterior distribution $\pi^y$ of a general Poisson inverse problem with likelihood function $g$ defined in \eqref{eq:posterior_poisson}. For any $z_t\in\N^d$ and $t>0$, writing $e_i = (\delta_{i,j})_{1\leqslant j\leqslant d}$, we have
    \begin{equation}
    \label{eq:denoiser_y}
        [\Denois_t^{\pi^y}(z_t)]_i
        =
        [\Denois_t^{\pi}(z_t)]_i
        \frac{g_t(y\vert z_t+e_i)}
        {g_t(y\vert z_t)}
         =
        [\Denois_t^{\pi}(z_t)]_i
        \left(
        1+
        \frac{g_t(y\vert z_t+e_i)-g_t(y\vert z_t)}
        {g_t(y\vert z_t)}
        \right)\eqsp,
        1\leqslant i\leqslant d\eqsp.
    \end{equation}
\end{restatable}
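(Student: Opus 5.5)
The plan is to compute the posterior denoiser directly from the ratio formula in \Cref{prop:denoiser}, applied with $\pi^y$ in place of $\pi$. Since $\pi^y(\rmd x)\propto g(y\vert x)\pi(\rmd x)$, the normalizing constant cancels in \eqref{eq:denoiser_ratio}. For each coordinate $i$ this gives
\begin{equation}
[\Denois_t^{\pi^y}(z_t)]_i = \frac{\int x_i\, g(y\vert x)\prod_{j}\rme^{-tx_j}x_j^{z_{t,j}}\pi(\rmd x)}{\int g(y\vert x)\prod_{j}\rme^{-tx_j}x_j^{z_{t,j}}\pi(\rmd x)}\eqsp.
\end{equation}
The key observation is that $x_i\prod_j x_j^{z_{t,j}}=\prod_j x_j^{(z_t+e_i)_j}$. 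The numerator is therefore the denominator-type integral evaluated at $z_t+e_i$.

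Next, introduce the unnormalized mass $m_t(z)\coloneqq\int\prod_j\rme^{-tx_j}x_j^{z_j}\pi(\rmd x)$. By \eqref{eq:denoising_dist}, $\pi_t(\rmd x\vert z)=\prod_j\rme^{-tx_j}x_j^{z_j}\pi(\rmd x)/m_t(z)$, so $\int g(y\vert x)\prod_j\rme^{-tx_j}x_j^{z_j}\pi(\rmd x)=g_t(y\vert z)\,m_t(z)$. Substituting this into both the numerator and the denominator yields
\begin{equation}
[\Denois_t^{\pi^y}(z_t)]_i=\frac{m_t(z_t+e_i)}{m_t(z_t)}\cdot\frac{g_t(y\vert z_t+e_i)}{g_t(y\vert z_t)}\eqsp.
\end{equation}
Applying \eqref{eq:denoiser_ratio} once more to $\pi$ itself gives $m_t(z_t+e_i)/m_t(z_t)=[\Denois^\pi_t(z_t)]_i$, which is the first equality. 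The second equality in \eqref{eq:denoiser_y} is the trivial rewriting $r=1+(r-1)$.

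The main obstacle is bookkeeping of well-definedness and degenerate cases, namely the convention $\Denois=0$ when $\PP(Z_t=z)>0$ fails.

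First, note that $\PP(Z_t=z)>0$ if and only if $m_t(z)>0$. Also, $\PP(Z^y_t=z)>0$ for $\bfZ^y\sim\mathrm{MPP}(\pi^y)$ if and only if $g_t(y\vert z)m_t(z)>0$. When the latter holds, so does $m_t(z)>0$, and all quotients are finite.

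If $m_t(z_t)>0$ but $m_t(z_t+e_i)=0$, both sides vanish. In that case I take $g_t(y\vert z_t+e_i)$ finite, or multiplied by zero, consistent with the conventions.

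If $g_t(y\vert z_t)=0$, the right-hand side must be read with the zero convention. I would state that the identity is understood on $\{z:\PP(Z_t^y=z)>0\}$.

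Finally, integrability of $g(y\vert x)x_i$ against the weighted measure follows from $g\leqslant 1$ for the Poisson likelihood \eqref{eq:posterior_poisson} and the finite first moment of $\pi$. For a general likelihood I would assume it is bounded.
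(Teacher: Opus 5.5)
Your proof is correct and is essentially the paper's argument. The paper writes $[\Denois_t^\pi(z)]_i=\frac{z_i+1}{t}\,\PP(Z_t=z+e_i)/\PP(Z_t=z)$ and uses $\PP(Z^y_t=z)\propto \PP(Z_t=z)\,g_t(y\vert z)$. That is your computation with $m_t(z)$ replaced by $\PP(Z_t=z)=m_t(z)\prod_{j}t^{z_j}/z_j!$, and your handling of the degenerate cases (restricting the identity to $\PP(Z^y_t=z_t)>0$) is more careful than the paper's, which does not address them.
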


This expression is reminiscent of its diffusion counterpart (see \Cref{sec:inverse_gaussian}, \eqref{eq:approx_dps}), with a \emph{discrete derivative} replacing the usual continuous gradient.
Leveraging the usual DPS Dirac approximation
$\pi_t(\rmd x\vert z_t)
\approx
\delta_{\Denois_t^\pi(z_t)}(\rmd x)$, then approximating the discrete derivative by a continuous one, we get
\begin{align}
    [\Denois_t^{\pi^y}(z_t)]_i
    &\approx
    [\Denois_t^{\pi}(z_t)]_i
    \left(
    1+
    \frac{
    g(y\vert \Denois_t^\pi(z_t+e_i))
    -
    g(y\vert \Denois_t^\pi(z_t))
    }{
    g(y\vert \Denois_t^\pi(z_t))
    }
    \right)
    \label{eq:dps_poisson_approx}
    \\
    &\approx
    [\Denois_t^{\pi}(z_t)]_i
    \left(
    1+\partial_i \log g(y\vert \Denois_t^\pi(z_t))
    \right)\eqsp,
\end{align}
where the domain of $\Denois_t^\pi(\cdot)$ is extended to $\rset_+^d$ using the expression of \Cref{prop:denoiser}, and also implicitly assuming its differentiability. Note that non-negativity of the denoiser is not necessarily guaranteed with this approximation: in practice, we project it to the positive orthant.
Alternatively, one may retain the discrete derivative and approximate \eqref{eq:dps_poisson_approx} by evaluating
$\Denois_t^\pi(z_t+\mathbf{1})-\Denois_t^\pi(z_t)$,
thereby implicitly assuming that $[\Denois_t^\pi(z_t)]_i$ depends only on $z_t^i$.
Our empirical results suggest that the continuous-derivative approximation performs better at large $t$, whereas the ``independent coordinates'' approximation can be slightly more accurate at small $t$; see \Cref{app:dps_poisson} for additional details.
Combining either approximation of $\Denois^{\pi^y}$ with the simulation scheme of \Cref{alg:discretization_1d} yields our MPDM analogue of DPS.
Pseudocode is provided in \Cref{alg:poisson_dps}.

\begin{example}
\label{ex:exact_derivative}
From \Cref{app:gamma_case}, when
$\pi=\gamma=\loigamma(\alpha,\beta)$, $\alpha,\beta\in(0,+\infty)^d$, we have $[\Denois_t^\gamma(z_t)]_i
=
(z_t^i+\alpha_i)/(\beta_i+t), 1\leqslant i\leqslant d$.
In this case, both derivative approximations are exact.
\end{example}

The \pigdm{} algorithm \citep{song_pseudoinverse-guided_2022} relies on an approximation different from that used in DPS. The denoising distribution is approximated by a Gaussian distribution. For Gaussian inverse problems with linear $\mca$, this approximation makes the conditional score, or equivalently the conditional denoiser, tractable. However, \pigdm{} does not directly apply to Poisson inverse problems.

\paragraph*{Split Gibbs sampling methods.}
We adapt PnP-DM by introducing the augmented distributions
\begin{equation}
    \label{eq:pnpdm_poisson}
    \pi^y_k(\rmd x, \rmd \tilde{x})
    \propto
    g(y\vert \tilde{x})
    \exp\left(-\mathscr{D}_{\mathrm{e}}(\tilde{x},x)/\rho_k^2\right)
    \pi(\rmd x)\eqsp,
\end{equation}
which mirror the structure of \eqref{eq:pnpdm_gaussian}, where $\mathscr{D}_{\mathrm{e}}$ denotes the Bregman divergence defined in \eqref{eq:bregman}.
The prior step, namely sampling from $\pi^y_k(\rmd x\vert \tilde{x})$, can be performed using $\MPPpi$.
More precisely, $\pi^y_k(\rmd x \vert \tilde{x})
\propto
\pi_{t_k}\left(
\rmd x
\left\vert
t_k\tilde{x}
\right.
\right)$ with $
t_k\coloneqq 1/\rho_k^2$;
see \Cref{prop:split_gibbs}.
The likelihood step, \ie, sampling from $\pi^y_k(\rmd \tilde{x}\vert x)$, requires an MCMC procedure.
To handle the positivity constraint, we use a Riemannian Langevin method; see \Cref{alg:rla}, \Cref{app:pnpdm}, and \citep{girolami_riemann_2011}.

\begin{restatable}{proposition}{pnpdm}
\label{prop:split_gibbs}
    For every $t>0$ and $z_t\in\N^d$, $\pi_t(\rmd x\vert z_t)
        \propto
        \exp\left(-t\mathscr{D}_{\mathrm{e}}(z_t/t,x)\right)\pi(\rmd x)$.
\end{restatable}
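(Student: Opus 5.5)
The plan is to start from the explicit form of the denoising distribution in \eqref{eq:denoising_dist}, namely $\pi_t(\rmd x\vert z_t)\propto\prod_{i=1}^d \rme^{-tx_i}x_i^{z_t^i}\pi(\rmd x)$. We then expand the Bregman divergence \eqref{eq:bregman} with first argument $z_t/t$ and second argument $x$. The claim will follow by showing that $-t\mathscr{D}_{\mathrm{e}}(z_t/t,x)$ and $\sum_i(z_t^i\log x_i - tx_i)$ differ only by a term that does not depend on $x$; this term is then absorbed into the normalizing constant.

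Concretely, for $x\in(0,+\infty)^d$ we compute
\begin{equation}
  t\mathscr{D}_{\mathrm{e}}(z_t/t,x)=\sum_{i=1}^d \Big( z_t^i\log\frac{z_t^i}{t}-z_t^i\log x_i - z_t^i + t x_i\Big),
\end{equation}
with the convention $0\log 0=0$ for coordinates where $z_t^i=0$. Hence
\begin{equation}
  \exp\big(-t\mathscr{D}_{\mathrm{e}}(z_t/t,x)\big)=C(t,z_t)\prod_{i=1}^d \rme^{-tx_i}x_i^{z_t^i},
\end{equation}
where
\begin{equation}
  C(t,z_t)=\prod_{i=1}^d \exp\big(-z_t^i\log(z_t^i/t)+z_t^i\big)\in(0,\infty)
\end{equation}
is independent of $x$. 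Multiplying by $\pi(\rmd x)$ and comparing with \eqref{eq:denoising_dist} gives the stated proportionality.

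The only delicate point is the boundary. The divergence \eqref{eq:bregman} is defined on the open orthant, whereas $z_t/t$ may have zero coordinates and $\pi$ may charge $\{x_i=0\}$.

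\begin{itemize}
  \item For the first argument, zero coordinates are handled by the convention $0\log 0=0$, which is the continuous extension.
  \item For the second argument, the natural extension sets $\mathscr{D}_{\mathrm{e}}(z_t/t,x)=+\infty$ when $x_i=0<z_t^i$. This gives weight $0$, matching $x_i^{z_t^i}=0$. When $x_i=0=z_t^i$, the $i$-th term reduces to $tx_i=0$, matching $\rme^{0}x_i^0=1$.
\end{itemize}

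So the identity holds pointwise on all of $\rset_+^d$. Normalizability is inherited from \eqref{eq:denoising_dist}, which defines $\pi_t(\cdot\vert z_t)$ whenever $\PP(Z_t=z_t)>0$. I expect no real obstacle here beyond stating these conventions carefully.
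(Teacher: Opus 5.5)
Your proof is correct and follows the same route as the paper: expand $t\mathscr{D}_{\mathrm{e}}(z_t/t,x)$, factor out the $x$-independent constant $C(t,z_t)$, and recognize the unnormalized denoising density in \eqref{eq:denoising_dist}. Your handling of the boundary cases ($z_t^i=0$ or $x_i=0$) is a careful addition that the paper leaves implicit.
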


To adapt DAPS to our setting, we approximate the denoising distribution as $\pi_t(\rmd x\vert z_t)
\approx
\loigamma(r_t^{-2}\hat{x}(z_t),r_t^{-2}\mathsf{1})$,
where typically $\hat{x}(z_t)=K_t^\pi(z_t)$.
When $\pi$ is Gamma distributed, the denoising distribution is itself Gamma distributed and is exactly of this form, providing a natural justification for the approximation; see \Cref{app:daps}.
As in the original DAPS method, $\hat{x}(z_t)$ may alternatively be estimated by simulating the MPP forward from $z_t$.
The remaining steps of the algorithm can then be adapted directly to the MPDM setting; see \Cref{app:daps}, \Cref{alg:poisson_daps}.

\section{Poisson inverse problems: numerical experiments}
\label{sec:expe}
Despite our focus on inverse problems, we emphasise that MPDMs are generative models, and that prior works on Poissonian generative modelling suggest potential advantages over classical diffusion models for certain data distributions; see \citep{chen_learning_2023, bhattacharya_itdpdm_2026}. 
For the experiments below, we trained MPDMs on the \emph{BrainWeb} dataset \citep{aubert-broche_twenty_2006,schramm_simulated_2021}, and mock galaxy images from the \emph{EAGLE} simulations \citep{the_eagle_team_eagle_2017}; see \Cref{app:gallery} for a gallery of MPDM-generated images, and \Cref{app:datasets} for datasets details.

\paragraph{Benchmark.} Several diffusion-based posterior sampling algorithms are formulated for additive, homogeneous Gaussian noise, and do not directly apply to Poisson inverse problems, even with Gaussian approximation $\poisson(\mca(X))\approx \gauss(\mca(X), \mathrm{diag}(\mca(X)))$. DPS can accommodate both the true Poisson likelihood and its Gaussian approximation; the latter is the canonical version described by \citet{chung_diffusion_2024}. PnP-DM and DAPS likelihood steps can theoretically handle a Poisson likelihood.
Our benchmark can be separated into three blocks. 1) Our MPDM-based algorithms adapting DPS (\Cref{alg:poisson_dps}), PnP-DM (\Cref{alg:pnp_dm}) and DAPS (\Cref{alg:poisson_daps}) for general inverse problems; or PDPS (\Cref{alg:poisson_denoising}) for the Poisson denoising problems.
2) Diffusion-based models with the exact Poisson likelihood: DPS, PnP-DM and DAPS; or
3) with Gaussian-approximated likelihood. Each method uses the same number of NFEs (\ie, pre-trained denoiser calls). Computational costs nevertheless differ, notably because PnP-DM and DAPS require MCMC steps; see \Cref{app:expe} for details.
We train MPDMs and Gaussian diffusion-based models from scratch for each dataset, with identical architectures and training budgets; see \Cref{app:settings}. 

The performance on realistic Poisson inverse problems is evaluated through several metrics: Peak Signal-to-Noise Ratio (PSNR), Structural Similarity Index Measure
(SSIM), as well as Learned Perceptual Image Patch Similarity (LPIPS). These metrics evaluate the quality of a single reconstructed image, and \Cref{tab:inv_result_small} shows averages over the validation datasets. We provide a complementary evaluation of posterior sampling quality in \Cref{app:more_pet}.

\begin{minipage}[t]{0.41\columnwidth}
\vspace{2pt}
    \paragraph{Toy example.} To demonstrate the capacity of PDPS to accurately sample from the Bayesian posterior $\pi^y$ associated with the Poisson denoising problem ($\mca = a\Idd$), we consider a fully tractable example, where $\pi$ is a mixture of Gamma distributions; see \Cref{app:gamma_case} for details. We train a Poisson and a Gaussian generative model; although the Poisson model is tractable in this setting, we learn it to ensure fairness. We draw 10 observations $y$ in both a low-count (LC) and a high-count (HC) regime ($a=0.6$ and $a=3$, respectively), consider the associated $\pi^y$, and evaluate the sampling performance of different algorithms. 
    We report on \Cref{tab:toy_inv_small,tab:toy_inv_large} the 2-Wasserstein distance between exact $\pi^y$ samples and approximate ones, averaged over the observations $y$. PDPS significantly outperforms its competitors: indeed, except for the ``intrinsic" errors described in \Cref{sec:mpp_def}, this method introduces no additional sampling bias.
    
\end{minipage}
\hfill
\begin{minipage}[t]{0.57\columnwidth}
\vspace{0pt}
\vspace{-2ex}
\begin{figure}[H]
    \centering
    \includegraphics[width=0.8\linewidth]{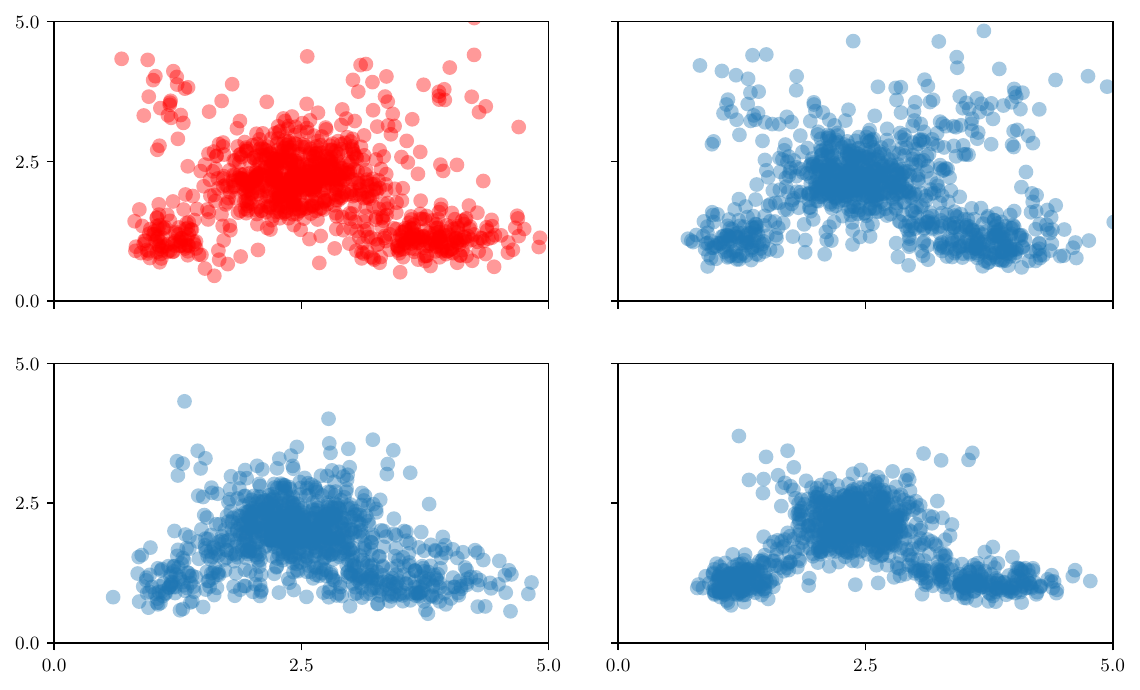}
    \caption{\textcolor{red}{True} and \textcolor{blue}{approximate} $\pi^y$ samples, obtained with PDPS (top-right), diffusion DPS (bottom-left) and PnP-DM (bottom-right).}
    \label{fig:comparison}
\end{figure}
\vspace{-5ex}
\begin{table}[H]
    \scriptsize   
        \centering
        \begin{tabular}{c|c|c}
        \toprule
        \textbf{Methods} & \textbf{Toy (LC)} & \textbf{Toy (HC)}  \\
        \midrule
            \textbf{PDPS (MPDM)} & \colorbox{red!25}{$0.0945\,{\scriptstyle\pm\,0.0053}$} & \colorbox{red!25}{$0.0732\,{\scriptstyle\pm\,0.0023}$} \\
            DPS-p (diffusion) & $0.2134\,{\scriptstyle\pm\,0.0046}$ & $0.2999\,{\scriptstyle\pm\,0.0038}$ \\
            
            \bottomrule
        \end{tabular}
        \caption{Average 2-Wasserstein distance between $\pi^y$ exact samples and approximate ones.}
        \label{tab:toy_inv_small}
    \end{table}
\end{minipage}

\paragraph{Poisson inverse problems.}
We consider two classical Poisson inverse problems. PET \citep{bailey_positron_2005}, where $\mca$ is a Radon operator, and astronomical imaging, where $\mca$ is a Gaussian blur operator \citep{donath_joint_2024}, as well as the denoising problem ($\mca=a\Idd$).
 Each problem is studied in a both low-count regime and a high-count regime.
 For Poisson denoising problems, despite its low computational budget, PDPS consistently ranks first or second. MPDM-based DPS yields the best LPIPS for the blur problems, and MPDM-based PnP-DM the best overall performance for PET. Diffusion-based DPS and DAPS remain competitive.
\begin{table}[t]
\scriptsize
\centering
\setlength{\tabcolsep}{1pt}
\begin{tabular}{c l | c c c | c c c| c c c| c c c}
\toprule
\multicolumn{2}{c}{\multirow{2}{*}{\textbf{Methods}}} & \multicolumn{3}{c}{\textbf{EAGLE denoising (LC)}} & \multicolumn{3}{c}{\textbf{EAGLE denoising (HC)}} 

 & \multicolumn{3}{c}{\textbf{BrainWeb denoising (LC)}} & \multicolumn{3}{c}{\textbf{BrainWeb denoising (HC)}} \\
 \cmidrule(lr){3-5}
\cmidrule(lr){6-8}
\cmidrule(lr){9-11}
\cmidrule(lr){12-14}

 & & PSNR {\scriptsize$\uparrow$} & SSIM {\scriptsize$\uparrow$} & LPIPS {\scriptsize$\downarrow$} & PSNR {\scriptsize$\uparrow$} & SSIM {\scriptsize$\uparrow$} & LPIPS {\scriptsize$\downarrow$}
 
 & PSNR {\scriptsize$\uparrow$} & SSIM {\scriptsize$\uparrow$} & LPIPS {\scriptsize$\downarrow$} & PSNR {\scriptsize$\uparrow$} & SSIM {\scriptsize$\uparrow$} & LPIPS {\scriptsize$\downarrow$}\\
\midrule
\midrule
& \textbf{PDPS (MPDM)} & \colorbox{red!25}{$32.34$} & \colorbox{red!10}{$0.7916$} & \colorbox{red!25}{$0.1543$} & \colorbox{red!25}{$34.94$} & \colorbox{red!25}{$0.8435$} & \colorbox{red!25}{$0.0863$} & \colorbox{red!10}{$16.90$} & \colorbox{red!10}{$0.7812$} & \colorbox{red!10}{$0.1124$} & \colorbox{red!10}{$21.31$} & \colorbox{red!25}{$0.9019$} & \colorbox{red!25}{$0.0585$}\\

& \textbf{DPS (MPDM)} & \colorbox{red!10}{$31.55$} & $0.74729$ & \colorbox{red!10}{$0.1783$} & $33.16$ & $0.7748$ & $0.1369$ &\colorbox{red!10}{$16.91$} & $0.7574$ & $0.1429$ & $20.49$ & $0.8477$ & $0.1207$\\

& \textbf{PnP-DM (MPDM)} & $26.25$ & $0.5933$ & $0.4149$ & \colorbox{red!10}{$34.80$} & \colorbox{red!10}{$0.8401$} & \colorbox{red!10}{$0.0991$} & 
$16.65$ & $0.6278$ & $0.2258$ & \colorbox{red!25}{$21.72$} & $0.8171$ & $0.1408$\\

& DPS-p (diffusion) & \colorbox{red!25}{$32.46$} & \colorbox{red!25}{$0.8060$} & $0.2213$ & \colorbox{red!10}{$34.69$} & \colorbox{red!25}{$0.8409$} & $0.1612$  & \colorbox{red!25}{$17.22$} & \colorbox{red!25}{$0.7915$} & \colorbox{red!25}{$0.0890$} & $20.62$ & \colorbox{red!10}{$0.8814$} & \colorbox{red!25}{$0.0591$}\\

 & DAPS-p (diffusion) & $29.92$ & $0.7268$ & $0.2547$ & $34.60$ & \colorbox{red!10}{$0.8388$} & $0.1546$ & \colorbox{red!25}{$17.12$} & \colorbox{red!10}{$0.7794$} & $0.1141$ & \colorbox{red!10}{$21.41$} & $0.8796$ & \colorbox{red!10}{$0.0668$}\\

\bottomrule
\end{tabular}

\vspace{.1ex}

\begin{tabular}{c l | c c c | c c c| c c c| c c c}
\toprule
\multicolumn{2}{c}{\multirow{2}{*}{\textbf{Methods}}} & \multicolumn{3}{c}{\textbf{EAGLE blur (LC)}} & \multicolumn{3}{c}{\textbf{EAGLE blur (HC)}} 

 & \multicolumn{3}{c}{\textbf{BrainWeb PET (LC)}} & \multicolumn{3}{c}{\textbf{BrainWeb  PET (HC)}} \\
 \cmidrule(lr){3-5}
\cmidrule(lr){6-8}
\cmidrule(lr){9-11}
\cmidrule(lr){12-14}

 & & PSNR {\scriptsize$\uparrow$} & SSIM {\scriptsize$\uparrow$} & LPIPS {\scriptsize$\downarrow$} & PSNR {\scriptsize$\uparrow$} & SSIM {\scriptsize$\uparrow$} & LPIPS {\scriptsize$\downarrow$}
 
 & PSNR {\scriptsize$\uparrow$} & SSIM {\scriptsize$\uparrow$} & LPIPS {\scriptsize$\downarrow$} & PSNR {\scriptsize$\uparrow$} & SSIM {\scriptsize$\uparrow$} & LPIPS {\scriptsize$\downarrow$}\\
\midrule
\midrule

& \textbf{DPS (MPDM)} & \colorbox{red!10}{$31.13$} & $0.7638$ & \colorbox{red!25}{$0.2049$} & $31.64$ & $0.7764$ & \colorbox{red!25}{$0.1858$} & $15.56$ & $0.7084$ & $0.1395$ & $16.84$ & $0.7503$ & $0.1318$ \\

& \textbf{PnP-DM (MPDM)} & $29.83$ & $0.7341$ & $0.3039$ & \colorbox{red!25}{$33.22$} & \colorbox{red!10}{$0.8000$} & $0.2511$ & \colorbox{red!25}{$16.41$} & $0.7329$ & \colorbox{red!25}{$0.1051$} & \colorbox{red!25}{$18.11$} & \colorbox{red!25}{$0.7940$} & \colorbox{red!25}{$0.0828$}
\\

& DPS-p (diffusion) & \colorbox{red!25}{$31.72$} & \colorbox{red!25}{$0.7935$} & \colorbox{red!10}{$0.2432$} & $32.56$ & \colorbox{red!25}{$0.8017$} & \colorbox{red!10}{$0.2129$} & $15.82$ & \colorbox{red!10}{$0.7401$} & \colorbox{red!25}{$0.0968$} & $16.66$ & $0.7758$ & \colorbox{red!10}{$0.0947$}\\

 & DAPS-g (diffusion) & \colorbox{red!25}{$31.99$} & \colorbox{red!10}{$0.7881$} & \colorbox{red!10}{$0.2652$}  & \colorbox{red!25}{$33.14$} & \colorbox{red!25}{$0.8041$} & $0.2365$ & \colorbox{red!10}{$16.13$} & \colorbox{red!25}{$0.7478$} & \colorbox{red!10}{$0.1111$} & \colorbox{red!10}{$17.60$} & \colorbox{red!25}{$0.7930$} & \colorbox{red!10}{$0.1115$}\\

\bottomrule
\end{tabular}
\caption{Quality of reconstructed images for different posterior sampling algorithms and Poisson inverse problems. Colors show \colorbox{red!25}{best} and \colorbox{red!10}{second best}, with ties due to overlapping confidence intervals. We only show best-performing methods: complete results are deferred to \Cref{app:settings}. DPS-p and DAPS-p use the Poisson likelihood, and DAPS-g the Gaussian-approximated one.}
\label{tab:inv_result_small}
\end{table}
\section{Conclusion}
We construct and study ``Poissonian" diffusion models, named MPDMs, using the multivariate counting processes framework. We motivate the use of MPDM beyond generative modeling by adapting several diffusion-based posterior sampling algorithms; in particular, we demonstrate mathematically and empirically that MPDMs are well adapted to Poisson inverse problems, which remain relatively under-explored in this literature. 

\subsection*{Acknowledgments}
Part of this work has been
supported by the PRIMaL project (Carnot Institute). AD a obtenu un financement de la Fondation de l'École polytechnique dans le cadre de sa campagne Servir la science.
AD is supported by the France 2030 program with the reference ANR-25-PEIA-0001 (THEOREM project)
AD is funded by the European Union (ERC-2022-SYG-OCEAN-101071601). Views and opinions expressed are however
those of the author(s) only and do not necessarily reflect those of the European Union or the European Research Council
Executive Agency. Neither the European Union nor the granting authority can be held responsible for them.
AD is supported by Hi! Paris and Agence Nationale de la Recherche (Grant 11-LABX-0047)
This work received government funding administered by the National Research Agency (ANR) under the France 2030 program "Hi! PARIS", grant number ANR-23-IACL-0005.

\subsection*{AI use statement}
Generative AI tools were used to assist with minor English-language editing and polishing of parts of the main text. They were also used to assist with coding during the implementation and experimentation, with the resulting code systematically verified by the authors. All mathematical content, including the derivation of the theoretical results, was developed by the authors without the use of generative AI tools.

\subsection*{Reproducibility statement} \Cref{app:more_counting,app:algo} contain detailed proofs of the mathematical statements as well as the pseudo-code associated with every algorithm mentioned in the main text. Implementation details are reported in \Cref{app:expe}. Both datasets used in \Cref{sec:expe} are public; see also \Cref{app:datasets}.

\bibliography{poisson_clean}
\bibliographystyle{plainnat}

\appendix

\section{Further notation}
\label{app:further_notation}
Throughout this paper, $\N=\{0,1,\dots\}$ denotes the set of natural numbers, $\rset$ the set of real numbers and $\rset_+\coloneqq [0,+\infty($ the set of non-negative real numbers. Given $d\geqslant 1$, we write a $d$-dimensional vector as $x=(x_1,\dots,x_d)\in\rset^d$. $\Vert x\Vert_2$ denotes the usual Euclidean norm, and $\Vert x\Vert_1 = \sum_{i=1}^d \vert x_i \vert$ the $l^1$ norm. 

Given two probability distributions $\mu,\nu$ on $\rset^d$, the 2-Wasserstein distance is defined as
\begin{equation}
    \label{eq:wass2}
    \mathscr{W}_2(\mu,\nu) = \inf\limits_{\rho\in \mathscr{C(\mu,\nu)}}\left(\int_{\rset^d\times \rset^d} \Vert x-y\Vert_2^2\,\rho(\rmd x,\rmd y)\right)^{1/2}\eqsp,
\end{equation}
where $\mathscr{C}(\mu,\nu)$ denotes the set of probability measures on $\rset^d\times\rset^d$ with first (respectively second) marginal equals to $\mu$ (respectively $\nu$). In particular, if $X\sim \mu, Y\sim\nu$, then $\mathscr{W}_2(\mu,\nu)^2\leqslant \PE\left[\Vert X-Y\Vert_2^2\right]$. In this work, we also use the 1,1-Wasserstein distance, which we define as 
\begin{equation}
    \label{eq:wass1}
    \mathscr{W}_{1,1}(\mu,\nu) = \inf\limits_{\rho\in \mathscr{C(\mu,\nu)}}\int_{\rset^d\times \rset^d} \Vert x-y\Vert_1\,\rho(\rmd x,\rmd y)\eqsp.
\end{equation}

The exponential distribution with rate $\lambda>0$ is denoted by $\Exp(\lambda)$, and $\unif_{[a,b]}$ is the uniform distribution on $[a,b]\subseteq \rset$.
The $d$-dimensional normal distribution with mean $\mu\in\rset^d$ and isotropic variance $\sigma^2>0$ is denoted by $\gauss(\mu,\sigma^2\Idd)$, where $\Idd$ is the identity matrix in dimension $d$. If $\alpha\in\rset_+^d$, the multivariate Poisson distribution with mean $\alpha$ is denoted by $\poisson(\alpha)$: a $\nset^d$-valued random variable $Z\sim \poisson(\alpha)$ has independent coordinates $Z_i\sim\poisson(\alpha_i)$, with $\poisson(\alpha_i)$ the usual univariate Poisson distribution with mean $\alpha_i\in\rset_+$. Similarly, given $\alpha,\beta\in(0,+\infty)^d$, we write the multivariate Gamma distribution (with independent coordinates) as $\loigamma(\alpha,\beta)$. 
Finally, if $x\in\rset_+^d\setminus\{0\}$, the categorical distribution $\mathrm{Cat}(x)$ is the probability distribution on $\{1,\dots,d\}$ defined by $\mathrm{Cat}(x)(\{i\}) = x_i/\Vert x\Vert_1$.

\section{Mixed Poisson processes: additional content}
\label{app:more_counting}
This section is an extended version of \Cref{sec:mpp}, with additional statements, detailed proofs, and illustrations. 

\subsection{Background on counting processes}
\label{app:point_proc}
\paragraph{Point processes.}
In \Cref{def:counting}, we define counting processes as time-indexed stochastic processes with unit-jump trajectories, and adopt a martingale approach to characterize them through their compensator (or stochastic intensity). While this presentation suffices to develop our generative framework and is furthermore analogous to SDE-based diffusion models, counting processes are usually part of the \emph{point processes} literature. We provide a brief overview of this topic and refer to \citep{daley_introduction_2008} for a detailed introduction.

A \emph{point process} \citep[Chapter 9]{daley_introduction_2008} $\xi$ is an integer-valued random measure on an adequate measurable metric space $\msx$ equipped with its Borel $\sigma$-algebra $\mcb(\msx)$. To be precise, for any $\omega\in\Omega$, $\msa\in \mcb(\msx) \mapsto \xi(\omega)(\msa) \in\N$ is an integer-valued, locally finite measure on $\msx$; for a fixed $\msa\in\mcb(\msx)$, $\omega \in\Omega\mapsto \xi(\omega)(\msa) \in \N$ is an integer-valued random variable.

A \emph{simple} point process satisfies, for almost all $\omega\in\Omega$, $\xi(\omega)(\{t\}) \in\{0,1\}$ for any $t\in\msx$, and can be written as a purely atomic measure: almost surely, there exists a countable set $\{t_i\}_i$, such that $\{t_i\}\cap \msa$ is finite for any compact $\msa\in\mcb(\msx)$, and
\begin{equation}
    \xi = \sum_{i}\delta_{t_i}\eqsp.
\end{equation}

Univariate counting processes can be viewed as simple point processes on $\rset_+$: given a counting process $\bfZ$ with jump times $(T_i)_{i\geqslant 1}$, $\xi = \sum_{i\geqslant 1}\delta_{T_i}$ is a simple point process on $\rset_+$. Reciprocally, given a simple point process on $\rset_+$ $\xi$, $t\mapsto \xi([0,t])$ is a counting process. 

A simple \emph{marked point process} is obtained by replacing the ground space $\msx$ with, typically, $\msx\times\{1,\dots,d\}$: 
\begin{equation}
    \xi = \sum_{i}\delta_{(t_i,k_i)}\eqsp,
\end{equation}
with $k_i\in\{1,\dots,d\}$. A multivariate counting process $\bfZ$ can be viewed as a marked point process on $\rset_+\times\{1,\dots,d\}$, writing, for $1\leqslant k\leqslant d, t\geqslant 0$, $Z_t^k = \xi([0,t]\times\{k\})$.

A useful special case is the following: $\Pi$ is said to be a \emph{Poisson point process with intensity measure $\mu$}, with $\mu$ a measure on $\msx$, if it satisfies $\law(\Pi(\msa)) = \poisson(\mu(\msa))$ for every measurable $\msa\in\mcb(\msx)$; and if furthermore $\Pi(\msa_1),\dots,\Pi(\msa_n)$ are independents when $\msa_1,\dots,\msa_n$ are disjoint measurable sets. 

We may use a Poisson point process on $\rset_+\times\rset_+$ to build a univariate counting process $\bfZ$ from its $\filt$-stochastic intensity $\lambdabf$. 
Consider a $\filt$-Poisson point process $\Pi$ with intensity measure $\rmd t\rmd x$ on $\rset_+\times\rset_+$, \ie,  for every $\msa\in\mcb(\rset_+)$, $\Pi([0,t]\times\msa)$ is a $\filt$-adapted Poisson process. As a side note, the $\filt$-Poisson point process has the strong Markov property: conditionally on $\Ft_T$, with $T$ a $\filt$-stopping time, the restriction of $\Pi$ to $[T,+\infty)\times\rset_+$ is still a Poisson point process, independent of $\Ft_T$.

Following \citep[Theorem 2.8.11]{chevallier_modelling_2016} or \citep[Proposition 7.5.I]{daley_introduction_2008}, if $\filt^\bfZ\subseteq\filt$, we have for every $t\geqslant 0$,
\begin{equation}
\label{eq:integral}
    Z_t = \int_0^t \int_{\rset_+} \1_{[0,\lambda_{s-}]}(x) \Pi(\rmd s,\rmd x)\eqsp.
\end{equation}
Writting $\Pi = \sum_i \delta_{(s_i,x_i)}$, we have $Z_t = \sum_i \1_{s_i\leqslant t}\1_{[0,\lambda_{s_i-}]}(x_i)$: the Poisson point process generates points $(s_i,x_i)\in\rset_+\times\rset_+$, and $Z_{s_i} = Z_{s_i-}+1$ when $\lambda_{s_i-}\geqslant x_i$.

\paragraph{Inhomogeneous Poisson processes.}
A univariate counting process whose stochastic intensity is a deterministic function is called an \emph{univariate inhomogeneous Poisson process}. A \emph{multivariate inhomogeneous Poisson process} $\bfN$ has independent coordinates which are inhomogeneous Poisson processes; in particular, its $\filt^\bfN$-stochastic intensity is a locally integrable function $t\mapsto \alpha_t=(\alpha_t^1,\dots,\alpha_t^d)\in\rset_+^d$. Alternatively, we may define it through the following transition mechanism. 
\begin{lemma}
    \label{lemma:inhomogeneous}
    The jump times $(T_i)_{i\geqslant 1}$ of an inhomogeneous Poisson process $\bfN$ satisfy, for every $i\geqslant 0$,
    \begin{equation}
    \label{eq:jump_exp}
        \PP(T_{i+1}>t\vert \Ft^\bfN_{T_i}) = \exp\left(-\int_{T_i}^t\Vert \alpha_s\Vert_1\rmd s\right)\eqsp,
    \end{equation}
    with $T_0=0$.
    Given the $(i+1)$-th jump time, the coordinate that jumps is chosen proportionally to the intensity, namely, for every $1\leqslant k\leqslant d$,
    \begin{equation}
        \PP(N^k_{T_{i+1}} = N^k_{T_i}+1 \vert \Ft^\bfN_{T_i},T_{i+1}) = \alpha^k_{T_{i+1}}/\Vert \alpha_{T_{i+1}}\Vert_1\eqsp.
    \end{equation}
\end{lemma}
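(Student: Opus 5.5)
The plan is to reduce the statement to its univariate building blocks. Conditioning on $\Ft^\bfN_{T_i}$ and then using independence and independent increments will give the waiting-time law; a limiting argument in the jump time will then identify which coordinate jumps. Fix $i\geqslant 0$.

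\textbf{Survival function.} On the event $\{T_i = s\}$, the event $\{T_{i+1}>t\}$ (for $t\geqslant s$) coincides with the event that no coordinate of $\bfN$ jumps in $(s,t]$. Since the coordinates are independent inhomogeneous Poisson processes with deterministic intensities, the post-$s$ increments $(N_u-N_s)_{u\geqslant s}$ are independent of $\Ft^\bfN_s$. Hence
\[\PP(\text{no jump in }(s,t]) = \prod_k \exp\Big(-\int_s^t\alpha^k_u\rmd u\Big) = \exp\Big(-\int_s^t\Vert\alpha_u\Vert_1\rmd u\Big).\]
To pass from a fixed time $s$ to the stopping time $T_i$, I will invoke the strong Markov property of $\bfN$ at $T_i$. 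The cleanest way is the Poisson point process representation \eqref{eq:integral} with deterministic intensity: the restriction of $\Pi$ to $[T_i,\infty)\times\rset_+$ is independent of $\Ft_{T_i}$. This yields \eqref{eq:jump_exp}. An alternative is to approximate $T_i$ from above by discrete stopping times $T_i^{(n)}\downarrow T_i$, apply the fixed-time identity on each atom, and conclude by right-continuity and dominated convergence.

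\textbf{Which coordinate jumps.} The same decomposition shows that, conditionally on $\Ft^\bfN_{T_i}$, the first jump times after $T_i$ of the individual coordinates, $\tau^k := \inf\{u>T_i: N^k_u>N^k_{T_i}\}$, are independent. Each $\tau^k$ has hazard rate $\alpha^k$. Then $T_{i+1}=\min_k\tau^k$, and the coordinate that jumps is the index $k$ attaining the minimum; the minimizer is almost surely unique because there are no simultaneous jumps. The joint law is
\[\PP(T_{i+1}\in\rmd t,\ \text{coordinate }k \mid \Ft^\bfN_{T_i}) = \alpha^k_t\exp\Big(-\int_{T_i}^t\Vert\alpha_u\Vert_1\rmd u\Big)\rmd t.\]
Dividing by the marginal density of $T_{i+1}$ obtained in the first step gives the ratio $\alpha^k_{T_{i+1}}/\Vert\alpha_{T_{i+1}}\Vert_1$.

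\textbf{Main obstacle.} The delicate point is rigor in this last conditioning step. First, the conditional densities only exist if $\alpha$ is locally integrable, and the ratio is defined only where $\Vert\alpha_t\Vert_1>0$, which holds $\rmd t$-a.e.\ on the support of the density. Second, the ratio identifies the conditional probability only $\rmd t$-almost everywhere, so the formula should be understood for a version of the conditional law. I will handle both points by verifying the identity in integrated form: for bounded measurable $h$ and every $k$,
\[\PE\big[h(T_{i+1})\1\{\text{coordinate }k\text{ jumps}\}\,\big|\,\Ft^\bfN_{T_i}\big] = \PE\big[h(T_{i+1})\,\alpha^k_{T_{i+1}}/\Vert\alpha_{T_{i+1}}\Vert_1\,\big|\,\Ft^\bfN_{T_i}\big].\]
This follows by Fubini from the joint law above.
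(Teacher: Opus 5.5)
Your proposal is correct and takes essentially the paper's route. You get the survival function from the Poisson point process representation and its strong Markov property at $T_i$, then compute the competing-first-jump density $\alpha^k_t\exp(-\int_{T_i}^t\Vert\alpha_u\Vert_1\rmd u)$ and divide by the density of $T_{i+1}$; the paper instead first notes that $\Vert\bfN\Vert_1$ is an inhomogeneous Poisson process with intensity $\Vert\alpha\Vert_1$, and matches $\PP(T_{i+1}\leqslant t,\ N^k_{T_{i+1}}=N^k_{T_i}+1\mid\Ft^\bfN_{T_i})$ for every $t$, which is the case $h=\1_{[0,t]}$ of your integrated identity.
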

\begin{proof}
This characterization of an inhomogeneous Poisson process is relatively standard: for completeness, we justify it below.

The sum of $\bfN$ coordinates, $\Vert \bfN\Vert_1$, is a univariate, inhomogeneous Poisson process with $\filt^\bfN$-intensity $t\mapsto \Vert\alpha_t\Vert_1$. To see it, we may compute for $t>s$,
\begin{equation}
    \PE\left[\Vert N_t\Vert_1-\Vert N_s\Vert_1\vert \Ft_s^\bfN\right] = \sum_{k=1}^d \PE\left[N_t^k-N_s^k\vert \Ft_s^\bfN\right] = \int_s^t\Vert\alpha_u\Vert_1\rmd u\eqsp.
\end{equation}
Now, using \eqref{eq:integral}, $\Vert \bfN\Vert_1$ can we written as $\Vert N_t\Vert_1 = \int_0^t \int_{\rset_+} \1_{[0,\Vert\alpha_s\Vert_1]}(x)\Pi(\rmd s,\rmd x)$, where $\Pi$ is a $\filt^{\bfN}$-Poisson point process. Then, for $t\geqslant T_i$,
    \begin{align}
    \label{eq:exp_jump_sum}
        \PP(T_{i+1}>t \vert \Ft^\bfN_{T_i}) &= \left.\PP\left(\int_{T_i}^t\int_{\rset_+} \1_{[0,\Vert \alpha_s\Vert_1]}(x)\Pi(\rmd s,\rmd x) = 0 \right\vert  \Ft^\bfN_{T_i}\right) \\
        &= \exp\left(-\int_{T_i}^t \Vert \alpha_s\Vert_1\rmd s\right)\eqsp,
    \end{align}
    as, by definition of the Poisson point process, conditionally on $\Ft_{T_i}$,
    \begin{equation}
        \int_{T_i}^t\int_{\rset_+} \1_{[0,\Vert \alpha_s\Vert_1]}(x)\Pi(\rmd s,\rmd x) \sim \poisson\left(\int_{T_i}^t\int_{\rset_+} \1_{[0,\Vert \alpha_s\Vert_1]}(x)\rmd s\rmd x\right)\eqsp,
    \end{equation}
    Here, we used the strong Markov property of the $\filt^\bfN$-Poisson point process $\Pi$: conditionally on $\Ft^\bfN_{T_i}$, with $T_i$ a $\filt^\bfN$-stopping time, the restriction of $\Pi$ to $[T_i,+\infty)\times\rset_+$ is still a Poisson point process. Similarly, we can obtain the jump distribution of each coordinate process:  
    \begin{align}
    \label{eq:exp_jump_coor}
        \PP(N_t^j = N_{T_i}^j \vert \Ft^\bfN_{T_i}) &= \left.\PP\left(\int_{T_i}^t\int_{\rset_+} \1_{[0,\alpha_s^j]}(x)\Pi^j(\rmd s,\rmd x) = 0 \right\vert  \Ft^\bfN_{T_i}\right) \\
        &= \exp\left(-\int_{T_i}^t \alpha_s^j\rmd s\right)\eqsp.
    \end{align}
    We now identify the mark distribution, \ie, conditionally on $\Ft^\bfN_{T_i},T_{i+1}$, the probability that $N_{T_{i+1}}^k=N_{T_{i}}^k+1$ for a fixed $1\leqslant k\leqslant d$. Conditionally on $\Ft^\bfN_{T_i}$, define 
    \begin{equation}
    t\geqslant T_i\mapsto p_j(t) = \alpha_t^j \exp\left(-\int_{T_i}^t \alpha_s^j\rmd s\right)\eqsp, 1\leqslant j \leqslant d \eqsp,
    \end{equation}
    and $t\geqslant T_i \mapsto p(t)= \Vert \alpha_t\Vert_1 \exp\left(-\int_{T_i}^t \Vert \alpha_s\Vert_1\rmd s \right)$. From \eqref{eq:exp_jump_coor},\eqref{eq:exp_jump_sum}, we recognize the conditional density of the first jump time of $\bfN^j$ (respectively $\Vert \bfN\Vert_1$) on $[T_i,+\infty)$. Then, we compute the probability that $\bfN^k$ is the first coordinate to jump after $T_i$, and do so before a given $t\geqslant T_i$:
    \begin{align}
        \PP(T_{i+1}\leqslant t, N_{T_{i+1}}^k = N_{T_{i}}^k +1 \vert \Ft_{T_i}^\bfN) &= \int_{T_i}^t p_k(s_k) \prod_{j\neq k} \int_{s_k}^\infty p_j(s_j) \rmd s_j \rmd s_k \\
        &= \int_{T_i}^t p_k(s_k) \prod_{j\neq k}\exp\left(-\int_{T_i}^{s_k} \alpha_u^j \rmd u\right) \rmd s_k \\
        &= \int_{T_i}^t \alpha_{s_k}^k \exp\left(-\int_{T_i}^{s_k}\Vert \alpha_u\Vert_1 \rmd u\right)\rmd s_k \eqsp.
    \end{align}
Here, we used the fact that the coordinate processes are independent. We may also compute this probability by considering that the next jump of $\Vert \bfN\Vert_1$ occurs at time $s$, and introduce the probability $q_k(s)$ that this jump actually occurs on coordinate $k$:
\begin{align}
    \PP(T_{i+1}\leqslant t, N_{T_{i+1}}^k = N_{T_{i}}^k +1 \vert \Ft_{T_i}^\bfN) &= \int_{T_i}^t p(s) q_k(s)\rmd s \\
    &= \int_{T_i}^t \Vert\alpha_{s}\Vert_1 \exp\left(-\int_{T_i}^{s}\Vert \alpha_u\Vert_1 \rmd u\right) q_k(s) \rmd s\eqsp.
\end{align}
These two expressions coincide by setting $q_k(s) = \alpha^k_s/\Vert\alpha_s\Vert_1$, yielding the desired mark distribution.
\end{proof}

Consider now a multivariate counting process $\bfZ$ whose $\filt^\bfZ$-stochastic intensity is of the form $t\mapsto \lambda_t=u_t(Z_{t-})$, where, for any $z\in\N^d$, $t\mapsto u_t(z)$ is locally integrable. Observe that between two consecutive jumps, \ie, on $[T_i,T_{i+1})$, conditionally on $\Ft_{T_i}$, $\bfZ$ coincides with an inhomogeneous Poisson process with intensity deterministic $t\mapsto u_t(Z_{T_i})$. As a result, we can extend \Cref{lemma:inhomogeneous} as follow.
\begin{proposition}
    \label{prop:inhomogeneous_loc}
    The jump times $(T_i)_{i\geqslant 1}$ of a counting process $\bfZ$ with intensity of the form $t\mapsto u_t(Z_{t-})$ satisfy, for every $i\geqslant 0$,
    \begin{equation}
        \PP(T_{i+1}>t\vert \Ft^\bfZ_{T_i}) = \exp\left(-\int_{T_i}^t\Vert u_s(Z_{T_i})\Vert_1\rmd s\right)\eqsp,
    \end{equation}
    with $T_0=0$.
    Given the $(i+1)$-th jump time, the coordinate that jumps is chosen proportionally to the intensity, namely, for every $1\leqslant k\leqslant d$,
    \begin{equation}
        \PP(Z^k_{T_{i+1}} = Z^k_{T_i}+1 \vert \Ft^\bfZ_{T_i},T_{i+1}) = u^k_{T_{i+1}}(Z_{T_{i}})/\Vert u_{T_{i+1}}(Z_{T_i})\Vert_1\eqsp.
    \end{equation}
    In particular, $\bfZ$ is a Markov process.
\end{proposition}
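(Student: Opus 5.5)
The plan is to reduce the statement to \Cref{lemma:inhomogeneous}, applied between consecutive jump times, and then to conclude by induction on $i$.

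\textbf{Step 1: representation of $\bfZ$.} By \eqref{eq:integral}, coordinatewise, I write
\[
Z_t^k=\int_0^t\int_{\rset_+}\1_{[0,u^k_s(Z_{s-})]}(x)\,\Pi^k(\rmd s,\rmd x),
\]
where $\Pi^1,\dots,\Pi^d$ are $\filt$-Poisson point processes with intensity $\rmd t\,\rmd x$. These can be taken independent, since the coordinates have no common jumps, or one can use a single Poisson point process on $\rset_+\times\rset_+\times\{1,\dots,d\}$.

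\textbf{Step 2: freezing the intensity.} Fix $i\geqslant 0$ and work conditionally on $\Ft^\bfZ_{T_i}$. On the event $\{T_i<\infty\}$, for $s\in(T_i,T_{i+1}]$ we have $Z_{s-}=Z_{T_i}$, which is $\Ft^\bfZ_{T_i}$-measurable. Define the auxiliary process
\[
N_t^k=\int_{T_i}^{T_i+t}\int_{\rset_+}\1_{[0,u^k_s(Z_{T_i})]}(x)\,\Pi^k(\rmd s,\rmd x).
\]
By the strong Markov property of $\Pi$ at the stopping time $T_i$, conditionally on $\Ft_{T_i}$ the restricted measure $\Pi$ on $[T_i,\infty)\times\rset_+$ is again a Poisson point process independent of $\Ft_{T_i}$. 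Hence $\bfN$ is a multivariate inhomogeneous Poisson process with deterministic intensity $s\mapsto u_{T_i+s}(Z_{T_i})$. This intensity is locally integrable because $z\mapsto u(z)$ is fixed and $t\mapsto u_t(z)$ is locally integrable.

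\textbf{Step 3: applying \Cref{lemma:inhomogeneous}.} The key observation is that $\bfZ-Z_{T_i}$ and $\bfN_{\cdot-T_i}$ coincide up to and including their first jump after $T_i$. So $T_{i+1}-T_i$ is the first jump time of $\bfN$, and the coordinate that jumps is the same for both processes. \Cref{lemma:inhomogeneous}, applied with $i=0$ to $\bfN$, then gives the survival function $\exp(-\int_{T_i}^t\Vert u_s(Z_{T_i})\Vert_1\rmd s)$ and the mark probabilities $u^k_{T_{i+1}}(Z_{T_i})/\Vert u_{T_{i+1}}(Z_{T_i})\Vert_1$.

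The main technical point is conditioning on $\Ft^\bfZ_{T_i}$ rather than on $\Ft_{T_i}$. Since both formulas depend only on $(T_i,Z_{T_i})$, which are $\Ft^\bfZ_{T_i}$-measurable, the tower property passes from $\Ft_{T_i}$ to $\Ft^\bfZ_{T_i}$.

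\textbf{Step 4: the Markov property.} The two formulas show that the conditional law of $(T_{i+1},Z_{T_{i+1}})$ given $\Ft^\bfZ_{T_i}$ depends only on $(T_i,Z_{T_i})$. The embedded sequence $(T_i,Z_{T_i})_i$ is therefore a Markov chain on $\rset_+\times\N^d$, and the law of the whole path is determined by this chain.

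To pass to the Markov property of $\bfZ$ at a fixed time $s$, I would note that the jump times are stopping times and that the residual waiting time has a memoryless-type conditional law:
\[
\PP(T_{i+1}>t\mid \Ft^\bfZ_s,\,T_i\leqslant s<T_{i+1})=\exp\Bigl(-\int_s^t\Vert u_r(Z_s)\Vert_1\rmd r\Bigr).
\]
This follows by dividing the survival function from Step 3 at $t$ by its value at $s$. The future evolution after $s$ then depends on the past only through $(s,Z_s)$, which gives the Markov property of $\bfZ$.
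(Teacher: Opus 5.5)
Your proposal is correct and follows the paper's route. The paper justifies \Cref{prop:inhomogeneous_loc} only by the remark that, conditionally on $\Ft_{T_i}$, $\bfZ$ coincides on $[T_i,T_{i+1})$ with an inhomogeneous Poisson process of deterministic intensity $t\mapsto u_t(Z_{T_i})$, so that \Cref{lemma:inhomogeneous} applies; your Steps 2--3 make this precise through the shifted Poisson point process, and your passage to $\Ft^\bfZ_{T_i}$ and your fixed-time Markov argument add detail the paper omits (the one loose phrase is that independence of the $\Pi^k$ comes from using a single marked Poisson point process, not from the absence of common jumps).
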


\subsection{MPP properties: proofs of \Cref{sec:mpp_def}}
\label{app:proofs}

{\denoiser*}
\begin{proof}
This expression of the denoiser is derived in \citep[Theorem 3.4.2, 4.4.2]{zocher_multivariate_2005}; we include it here for completeness.
Let $\bfZ \sim \MPPpi$, with random rate $X$.
Fix $t>0, z\in\N^d$. 
    Since $Z_t$, conditionally on $X$, has independent coordinates distributed as $\mathcal{P}(tX_i), 1\leqslant i\leqslant d$, we have
    \begin{equation}
    \label{eq:PZt}
        \PP(Z_t=z) = \int_{\rset_+^d}p_t(z\vert x)\pi(\rmd x)\eqsp,\quad p_t(z\vert x)\coloneqq \prod_{j=1}^d\rme^{-tx_j}\frac{(tx_j)^{z_j}}{z_j!}\eqsp.
    \end{equation}
    This probability may vanish if, \eg, $\pi(\{0\}\times \rset_+^{d-1}) = 1$ and $z_1\geqslant 1$; it is non-zero if and only if $\pi((\{z_1\}\cup(0,+\infty))\times\dots\times(\{z_d\}\cup(0,+\infty)))>0$, which we assume from now on.
    
     Using Bayes' rule leads to the conditional distribution of $X$ given $Z_t=z$, $\pi_t(\rmd x\vert z)=p_t(z\vert x)\pi(\rmd x)/\PP(Z_t=z)$;
     hence for any $1\leqslant i\leqslant d$, the denoiser $i$-th coordinate satisfies
    \begin{align}
        [\Denois_t^\pi(z)]_i &=\PE[X_i\vert Z_t=z]\\
        &= \frac{1}{\PP(Z_t=z)}\int_{\rset_+^d} x_i\times p_t(z\vert x) \pi(\rmd x) \\
        &= \frac{1}{\PP(Z_t=z)}\int_{\rset_+^d}\frac{z_i+1}{t} \frac{(tx_i)^{z_i+1}}{(z_i+1)!} \times \prod_{j\neq i}\rme^{-tx_j}\frac{(tx_j)^{z_j}}{z_j!}\pi(\rmd x) \\
        &= \frac{z_i+1}{t}\frac{\PP(Z_t=z+e_i)}{\PP(Z_t=z)}\eqsp,
    \end{align}
    recognizing \eqref{eq:PZt} with $z+e_i$. Using again \eqref{eq:PZt}, we get 
    \begin{align}
        [\Denois_t^\pi(z)]_i &= \left.\int_{\rset_+^d}x_i\prod_{j=1}^d\rme^{-tx_j}\frac{(tx_j)^{z_j}}{z_j!}\pi(\rmd x)\right/ \int_{\rset_+^d}\prod_{j=1}^d\rme^{-tx_j}\frac{(tx_j)^{z_j}}{z_j!}\pi(\rmd x)\\
        &= \left.\int_{\rset_+^d}x_i\prod_{j=1}^d\rme^{-tx_j}x_j^{z_j}\pi(\rmd x)\right/ \int_{\rset_+^d}\prod_{j=1}^d\rme^{-tx_j}x_j^{z_j}\pi(\rmd x)\eqsp,
    \end{align}
    after simplifying the common factors, which proves the first part of the proposition.

    Smoothness of $t>0\mapsto \Denois^\pi_t(z)$ follows from the smoothness of the Laplace transform of $\pi$; we refer to \citep[Theorem 3.4.5]{zocher_multivariate_2005} for details.

    Regarding the last point of the proposition let $S\coloneqq \sum_{j=1}^dX_j,\,P\coloneqq \prod_{j=1}^dX_j^{z_j}$; then for $t>0$, 
    \begin{equation}
    \Vert\Denois^\pi_t(z)\Vert_1 = \sum_{i=1}^d\PE[X_i \rme^{-tS}P]/\PE[\rme^{-tS}P] = \PE[S \rme^{-tS}P]/\PE[\rme^{-tS}P]\eqsp, 
    \end{equation}
    \begin{equation}
    \label{eq:derivative_denoise}
        \PE[\rme^{-tS}P]^2\times\frac{\rmd}{\rmd t}\Vert\Denois_t^\pi(z)\Vert_1 = \PE[S\rme^{-tS}P]\PE[S\rme^{-tS}P] - \PE[S^2\rme^{-tS}P]\PE[\rme^{-tS}P]\eqsp.
    \end{equation}
    From the Cauchy-Schwarz inequality, we have  
\begin{align}
\PE[S\rme^{-tS}P]^2&=\PE[S\rme^{-tS/2}P^{1/2}\times \rme^{-tS/2}P^{1/2}]^2 \\
&\leqslant \PE[S^2\rme^{-tS}P]\PE[\rme^{-tS}P]\eqsp,
\end{align}
thus $\eqref{eq:derivative_denoise}\leqslant 0$, 
which concludes the proof.
\end{proof}

{\localization*}
\begin{proof}
The proof is a straightforward generalization of the univariate case \citep[Proposition 4.2]{grandell_mixed_1997}. Conditioning on $X$, we get
\begin{equation}
\PP(Z_t/t\rightarrow X) = \PE\left[\left.\PP\left(\bigcap_{i=1}^d \{Z_t^i/t \rightarrow X_i\}\right) \right\vert X\right]\eqsp.
\end{equation}
By definition, conditionally on $X$, the coordinates $\bfZ^i, 1\leqslant i \leqslant d\eqsp,$ are Poisson processes with respective intensities $X_i, 1\leqslant i\leqslant d$.
As a consequence, it suffices to check that for every $x>0, N_t^x/t \rightarrow x$, where $\bfN^x$ is a univariate Poisson process with rate $x$, whose jump times are denoted by $(T^x_n)_{n\geqslant 1}$. The case $x=0$ is obvious. If $x>0$,
 the strong law of large numbers gives $T^x_n/n \rightarrow 1/x$ a.s., since the increments $T_n^x-T_{n-1}^x$ are independent and distributed as $\Exp(x)$; in particular, $T^x_n\rightarrow +\infty$ a.s. Now fix $\varepsilon >0$: almost surely, there exist $p\in\N$ such that $x-\varepsilon \leqslant n/T_{n+1}$ and $n/T_n\leqslant x+\varepsilon$ for any $n\geqslant p$. If $t\geqslant T^x_p$, there exists $n\geqslant p$ satisfying $T^x_n\leqslant t < T_{n+1}^x$, and
 \begin{equation}
     x-\varepsilon\leqslant n/T_{n+1}\leqslant N_t^x/t \leqslant n/T_n \leqslant x+\varepsilon\eqsp,
 \end{equation}
 concluding the proof.
\end{proof}

{\compensator*}
\begin{proof}
Let $X\sim\pi$ denote the random rate of $\bfZ$.
    First, we show that the result holds by replacing $\Denois_t^\pi(Z_s)=\PE[X|Z_s]$ with $\PE[X|\Ft^\bfZ_s]$. Consider the augmented $\sigma$-algebra $\mathcal{G}_t=\sigma(X)\vee \Ft^\bfZ_t$, satisfying $\Ft_t^\bfZ\subseteq\mathcal{G}_t$. $\bfZ$ is, conditionally on $X$, a multivariate Poisson process with intensity $X$, thus have Poisson-distributed independent increments. Then, for every $t>s\geqslant 0$,
    \begin{align}
    \PE\left[Z_t-\left.\int_0^t \PE[X|\Ft^\bfZ_u] \rmd u\right|\Ft^\bfZ_s\right] &= Z_s+ \PE[\PE[Z_t-Z_s |\mathcal{G}_s]|\Ft^\bfZ_s] - \int_0^s\PE[X|\Ft^\bfZ_u]\rmd u - \int_s^t\PE[X|\Ft^\bfZ_s]du\\
    &= Z_s +\PE[(t-s)X|\Ft^\bfZ_s] - \int_0^s\PE[X|\Ft^\bfZ_u]\rmd u - (t-s)\PE[X|\Ft^\bfZ_s] \\
    &= Z_s - \int_0^s\PE[X|\Ft^\bfZ_u]\rmd u\eqsp,
\end{align}
where we used $\PE[\PE[X\vert\Ft^\bfZ_u]\vert \Ft^\bfZ_s] = \PE[X\vert\Ft^\bfZ_s]$ for $u\geqslant s$. 

Finally, the structure of the mixed Poisson process ensures that $\PE[X|\Ft^\bfZ_t] = \PE[X|Z_t]$. Take $\msa = \{Z_{t_1}=z_1,...,Z_{t_n}=z_n\}\in\Ft^\bfZ_t$, with $0=t_0\leqslant t_1 \leqslant...\leqslant t_n=t$ and for any $0\leqslant i\leqslant n, z_i = (z_i^1,\dots,z_i^d)\in\N^d$ with
for every $1\leqslant j \leqslant d$, $0=z_0^j\leqslant z_1^j\leqslant...\leqslant z_n^j$. As, conditionally on $X$, $\bfZ$ is a multivariate Poisson process with intensity $X$, its coordinates are independent univariate Poisson processes, whose increments are independent and Poisson distributed. Consequently, 
\begin{align}
    \PP(\msa\vert X) &= \left.\PP\left(\bigcap_{j=1}^d \bigcap_{i=1}^n \left\{Z^j_{t_i}-Z^j_{t_{i-1}} = z^j_{i}-z^j_{i-1} \right\}\right\vert X\right) \\
    &= \prod_{j=1}^d\prod_{i=1}^n \rme^{-(t_i-t_{i-1})X_j} \frac{(t_i-t_{i-1})^{z^j_i-z^j_{i-1}}X_j^{z^j_i-z^j_{i-1}}}{(z^j_i-z^j_{i-1})!} \\
    &= \rme^{-t\Vert X\Vert_1} P\times  \prod_{j=1}^d\prod_{i=1}^n  \frac{(t_i-t_{i-1})^{z^j_i-z^j_{i-1}}}{(z^j_i-z^j_{i-1})!}\eqsp,
\end{align}
where $P\coloneqq \prod_{j=1}^d X_j^{z^j_n}$. The double product is a constant that we denote $C_\msa$. Then, 
\begin{align}
    \PE[\1_\msa X] &= \PE[X\PP(\msa|X)] \\
    &= \PE[X\rme^{-t\Vert X\Vert_1}P]\times C_A\eqsp.
\end{align}
 Furthermore, as $\{Z_t=z_n\}\supseteq \msa$ and $\PE[X|Z_t]\1_{Z_t=z_n} = \PE[X|Z_t=z_n]\1_{Z_t=z_n}$,
\begin{align}
    \PE[\PE[X|Z_t]\1_\msa] &= \PE[\PE[X|Z_t]\1_{Z_t=z_n}\1_\msa] \\
    &= \PE[X|Z_t=z_n]\PP(\msa) \\
    &= \Denois_t^\pi(z_n)\times \PE[\rme^{-t\Vert X\Vert_1}P]\times C_\msa \\
    &= \PE[X\rme^{-t\Vert X\Vert_1}P] \times C_\msa\eqsp,
\end{align}
where we use \Cref{prop:denoiser} to obtain the last line.
Thus $\PE[\PE[X|Z_t]\1_\msa] = \PE[\1_\msa X]$ which shows that $\PE[X|Z_t] = \PE[X|\Ft^\bfZ_t]$.

Because its stochastic intensity at $t$ is $Z_t$-measurable rather than simply $\Ft^\bfZ_t$-measurable, it is relatively straightforward to see that $\bfZ$ is a Markov process; see \Cref{prop:inhomogeneous_loc}. Alternatively, we can prove it directly as follows. Keep the introduced notation, and let $\msa' = \{Z_{t_1}=z_1,\dots,Z_{t_{n-1}}=z_{n-1}\} \in \Ft_{t_{n-1}}^\bfZ$; we have $\msa\subseteq \msa'$. Notice that 
\begin{equation}C_\msa/C_{\msa'} = \prod_{j=1}^d (t-t_{n-1})^{z^j_n-z^j_{n-1}}/(z^j_n-z^j_{n-1})!\eqsp.
\end{equation}
Then, 
\begin{align}
    \PP(Z_t=z_n\vert \msa') &= \PP(\msa)/\PP(\msa') \\
    &= \left.\PE\left[\rme^{-t\Vert X\Vert_1}\prod_{j=1}^dX_j^{z^j_n}\right]\right/\PE\left[\rme^{-t_{n-1}\Vert X\Vert_1}\prod_{j=1}^dX_j^{z^j_{n-1}}\right] \times \prod_{j=1}^d \frac{(t-t_{n-1})^{z^j_n-z^j_{n-1}}}{(z^j_n-z^j_{n-1})!}\eqsp.
\end{align}
This expression does not depend on $t_i,z_i, i<n-1$: the same computation, but replacing $\msa'$ with $\{Z_{t_{n-1}}=z_{n-1}\}$, yields the same expression: thus $\PP(Z_{t}=z_n\vert \msa')=\PP(Z_t=z_n\vert Z_{t_{n-1}}=z_{n-1})$, hence the Markov property.
\end{proof}

\subsection{Simulation algorithms: proofs and illustrations}
\label{app:mppsimu}

\begin{proposition}
\label{prop:thinning}
Consider a counting process $\bfZ$ whose $\filt^\bfZ$-intensity has the form $t\mapsto \lambda_t = u_t(Z_{t-})\in\rset_+^d$, with $u$ a locally integrable function such that for every $z\in\N^d$, $t\mapsto \Vert u_t(z)\Vert_1$ is positive and non-increasing.
Let $\bfZthin$ denote the counting process recursively defined as follows.

 Set $\bart_0=0$.
 For every $i\geqslant 0$, conditionally on $\bart_i,\zthin_{\bart_i}$, draw the jump times of a univariate Poisson process on $[\bart_i,+\infty)$ with constant intensity $\Vert u_{\bart_i}(\zthin_{\bart_i})\Vert_1$, that is, $\bart_i<S_1<S_2<\dots$ with iid increments distributed as $\Exp(\Vert u_{\bart_i}(\zthin_{\bart_i})\Vert_1)$. 
 
 Starting from $k=1$, draw $U_k\sim\mathcal{U}_{[0,1]}$. If $U_k<\Vert u_{S_k}(\zthin_{\bart_i})\Vert_1/\Vert u_{\bart_i}(\zthin_{\bart_i})\Vert_1$, set $\bart_{i+1} = S_k$, draw $j\sim \mathrm{Cat}(u_{\bart_{i+1}}(\zthin_{\bart_i}))$ and set $\zthin_{\bart_{i+1}} = \zthin_{\bart_{i}}+e_j$, $e_j=(\delta_{j,l})_{l=1}^d$. Else, increment $k$ and repeat.
 
Then, both processes $\bfZthin,\bfZ$ have the same distribution.
\end{proposition}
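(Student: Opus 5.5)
The plan is to reduce the claim to the jump-chain characterization of \Cref{prop:inhomogeneous_loc}. For a counting process with intensity $t\mapsto u_t(Z_{t-})$, the law of the process is determined by the sequence of pairs (jump time, jumping coordinate). That proposition gives this sequence explicitly. Conditionally on $\Ft^\bfZ_{T_i}$, the waiting time has survival function $\exp(-\int_{T_i}^t\Vert u_s(Z_{T_i})\Vert_1\rmd s)$, and the mark is drawn from $\mathrm{Cat}(u_{T_{i+1}}(Z_{T_i}))$. Since $\bfZthin$ is built recursively by a Markov mechanism depending only on $(\bart_i,\zthin_{\bart_i})$, the whole proof reduces to one step: conditionally on $(\bart_i,\zthin_{\bart_i})=(s,z)$, the pair $(\bart_{i+1},\zthin_{\bart_{i+1}}-z)$ must have the same law as $(T_{i+1},Z_{T_{i+1}}-Z_{T_i})$ given $(T_i,Z_{T_i})=(s,z)$. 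An induction on $i$ then shows that the finite-dimensional laws of the sequence of jump times and marks agree. Since both processes are determined pathwise by this sequence, they have the same distribution.

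For the one-step identity, fix $(s,z)$ and write $M\coloneqq\Vert u_s(z)\Vert_1>0$ and $\lambda(r)\coloneqq \Vert u_r(z)\Vert_1\leqslant M$ for $r\geqslant s$; the inequality uses the monotonicity assumption. The proposal points $S_1<S_2<\dots$ form a homogeneous Poisson process of rate $M$ on $[s,\infty)$. Each point $S_k$ is accepted independently with probability $\lambda(S_k)/M\in[0,1]$. By the standard thinning property of Poisson processes, the accepted points form an inhomogeneous Poisson process with intensity $\lambda$. This can be seen, for instance, via the Poisson point process representation \eqref{eq:integral}: mark each $S_k$ by $MU_k$, so that $(S_k,MU_k)$ is a Poisson point process with intensity $\rmd r\,\rmd x$ on $[s,\infty)\times[0,M]$, and acceptance means $MU_k<\lambda(S_k)$. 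Hence the first accepted point $\bart_{i+1}$ satisfies
\begin{equation}
\PP(\bart_{i+1}>t\mid \bart_i=s,\zthin_{\bart_i}=z)=\exp\Big(-\int_s^t \lambda(r)\,\rmd r\Big)\eqsp,
\end{equation}
which matches \Cref{prop:inhomogeneous_loc}. Alternatively, the same formula follows by direct computation, summing over $k$ the probability that the first $k-1$ proposals are rejected and the $k$-th is accepted before $t$.

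Given $\bart_{i+1}$, the coordinate is drawn from $\mathrm{Cat}(u_{\bart_{i+1}}(z))$, which again matches \Cref{prop:inhomogeneous_loc}. The remaining randomness used at step $i+1$ consists of fresh exponentials and uniforms, so it is independent of the past given $(\bart_{i+1},\zthin_{\bart_{i+1}})$, and the Markov recursion closes.

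I expect the main obstacle to be bookkeeping rather than analysis. Conditioning on "the first accepted point" of the thinned process must be done carefully, because it is a stopping time for the proposal filtration. One also needs non-explosion or at least matching explosion behaviour, which is handled by the fact that both sequences of (time, mark) pairs have the same law, including the event $\bart_{i+1}=\infty$ when $\int_s^\infty\lambda=\int_s^\infty\Vert u_r(z)\Vert_1\rmd r<\infty$. The cleanest route is the Poisson point process representation, which makes the independence of the remaining proposals after an acceptance immediate by the strong Markov property recalled in \Cref{app:point_proc}.
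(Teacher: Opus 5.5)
Your proposal is correct and follows essentially the same route as the paper. Both arguments reduce to the jump-chain description of \Cref{prop:inhomogeneous_loc}, note that the mark mechanisms coincide, and check that the first accepted proposal has survival function $\exp(-\int_s^t \Vert u_r(z)\Vert_1\,\rmd r)$, then conclude from the Markov structure. The only differences are cosmetic. You obtain this void probability from the marking/thinning theorem applied to $(S_k, MU_k)$, or from a sum over $k$, whereas the paper conditions on the number of proposals in $[\bart_i,t]$ and uses their conditional uniformity. You also explicitly handle the event $\bart_{i+1}=\infty$, which the paper leaves implicit.
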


\begin{proof}

As, conditionally on $\filt_{T_{i}}^\bfZ$, the counting process $\bfZ$ coincides with an inhomogeneous Poisson process with deterministic intensity $t\mapsto u_t(Z_{T_i})\eqqcolon \alpha_t$ until its next jump, \Cref{prop:inhomogeneous_loc} gives
\begin{equation}
    \PP(T_{i+1}>t\vert Z_{T_i},T_i) = \exp\left(-\int_{T_i}^t\Vert \alpha_s\Vert_1\rmd s\right)\eqsp,
\end{equation}
and coordinate $j$ is chosen with probability $\alpha^j_{T_{i+1}}/\Vert \alpha_{T_{i+1}}\Vert_1$, \ie, the same mechanism used to simulate $\bfZthin$.

Now, we must verify that the distribution of $(\bart_i)_{i\geqslant 0}$ coincides with that of $(T_i)_{i\geqslant 0}$.
Fix $i\in\N$ and $t\geqslant \bart_i$.
    Conditionally on $\bart_i, \zthin_{\bart_i}$, let $M$ denote the number of jumps of a univariate Poisson process on $[\bart_i,t]$ with constant intensity $\Vert u_{\bart_i}(\zthin_{\bart_i})\Vert_1 =\Vert \alpha_{\bart_i}\Vert_1$, and $\bart_i<S_1<\dots<S_M$ the associated jump times. Conditionally on $\bart_i, \zthin_{\bart_i}$ and $M$, which follows a Poisson distribution with mean $(t-\bart_i)\Vert \alpha_{\bart_i}\Vert_1$, the unordered jump times are independent and uniformly distributed on $[\bart_i,t]$. Take $U_1,U_2,...$ a sequence of iid random variables, uniform on $[0,1]$, independent of the other random variables. Then, no jump before $t$ means that every proposed jump is rejected, that is,
    \begin{align}
    \PP(\bart_{i+1}>t|\zthin_{\bart_i},\bart_i) &= \sum_{n\geqslant 0}\PP(M=n|\zthin_{\bart_i},\bart_i)\PP\left(\left.\bigcap_{k=1}^n\{U_k>\Vert\alpha_{S_k}\Vert_1/\Vert\alpha_{\bart_i}\Vert_1\}\right| M=n,\zthin_{\bart_i},\bart_i\right) \\
        &= \exp\left(-(t-\bart_i)\Vert\alpha_{\bart_i}\Vert_1\right)\sum_{n\geqslant 0} \frac{((t-\bart_i)\Vert\alpha_{\bart_i}\Vert_1)^n}{n!} \left(\frac{1}{t-\bart_i}\int_{\bart_i}^t1-\frac{\Vert\alpha_s\Vert_1}{\Vert \alpha_{\bart_i}\Vert_1}\rmd s\right)^n \\
        &= \exp\left(-(t-\bart_i)\Vert\alpha_{\bart_i}\Vert_1\right)\exp\left((t-\bart_i)\Vert\alpha_{\bart_i}\Vert_1-\int_{\bart_i}^t\Vert\alpha_s\Vert_1\rmd s\right) \\
        &= \exp\left(-\int_{\bart_i}^t\Vert\alpha_s\Vert_1\rmd s\right)\eqsp,
    \end{align}
    which equals $\PP(T_{i+1}>t|Z_{T_i},T_i)$. Both $\bfZ$ and $\bfZthin$ are Markovian, and their conditional distributions of the next jump are identical; thus, their distributions match. 
\end{proof}

\begin{figure}
    \centering
    \begin{subfigure}[b]{0.48\linewidth}
      \resizebox{\linewidth}{!}{\input{figures/thinning.pgf}}
    \end{subfigure}
    \hfill
    \begin{subfigure}[b]{0.48\linewidth}
      \resizebox{\linewidth}{!}{\input{figures/discrete.pgf}} 
    \end{subfigure}
    \caption{Visual representation of the simulation procedures when $\pi$ is a Gamma distribution with paramaters $\alpha,\beta=1,2$. \textbf{(Left)} The thinning algorithm of \Cref{prop:thinning}, \Cref{alg:thinning_1d}, in dimension $d=1$. For every $i\geqslant 0, t\geqslant\bart_i$, potential jump times (vertical lines) are drawn according to the ``frozen", piecewise constant intensity
    $u_{\bart_i}(i)$ (\textcolor{red}{red} curve) instead of the true intensity $t\mapsto u_t(i)$ (\textcolor{blue}{blue curve});
    and accepted with probability $u_{t}(i)/u_{\bart_i}(i)=(\textcolor{red}{---})/(\textcolor{red}{\raisebox{0.45ex}{\rule{1cm}{0.4pt}}}+\textcolor{red}{---})$. Accepted jumps times $\bart_i$ are colored \textcolor{blue}{blue}. 
    \textbf{(Right)} The approximate process of \Cref{prop:discretization}, \Cref{alg:discretization_1d}. Given a time-grid $0=t_0<t_1<t_2<\dots$, jump times $\tilde{T}_i$ are drawn according to a discretized version of the intensity (\textcolor{red}{red curve}): on each sub-interval $[t_k,t_{k+1}]$, the number of jumps is distributed as $\poisson((t_{k+1}-t_k)u_{t_k}(\ztild_{t_k}))$.}
    \label{fig:thinning}
\end{figure}

\begin{algorithm}[ht]
\caption{Thinning}
\label{alg:thinning_1d}
\begin{algorithmic}[1]
\Require Intensity $(t,z)\mapsto u_t(z)\in\rset_+^d$ satisfying the assumptions of \Cref{prop:thinning}; $t_{\mathrm{max}} >0$
\State  Initialize $i=0$, $t_0=0,s=0$ and $\zthin_0=0\in\N^d$
\Repeat 
    \State Increment $s=s+\Exp(\Vert u_{t_i}(\zthin_i)\Vert_1)$ \Comment{Potential next jump time}
    \State Compute $u_{s}(\zthin_i)$
    \State Draw $U\sim \mathrm{Unif}_{[0,1]}$
    \If{$U< \Vert u_{s}(\zthin_i)\Vert_1 /\Vert u_{t_i}(\zthin_i)\Vert_1$ and $s<t_{\mathrm{max}}$}   \Comment{Accept or reject jump} 
    \State Draw $j\sim \mathrm{Cat}(u_{s}(\zthin_i))$ \Comment{If accepted, choose which coordinate actually jumps}
    \State Set $\zthin_{i+1}=\zthin_i+e_j, t_{i+1}=s$
    \State Increment $i = i+1$
    \EndIf
    
\Until{$s\geqslant t_{\mathrm{max}}$}
\State \Return $\zthin_i$
\end{algorithmic}
\end{algorithm}

\begin{proposition}
\label{prop:discretization}
    Let $\bfZ$ be a counting process with $\filt^\bfZ$-intensity $t\mapsto u_t(Z_{t-})\in\rset_+^d$, where $u$ is a measurable, locally integrable function.
For any $t>0$, consider a time-grid $0=t_0<t_1<\dots<t_{N}=t$, with $\Delta_k\coloneqq t_{k+1}-t_k$.
    Define the Euler discretization $\bfZtild$  of $\bfZ$ as the  counting process with the following recursion: $\ztild_0=0$ and for every $k\geqslant0, s\in[t_k,t_{k+1}]$, conditionally on $\ztild_{t_k}$,
\begin{equation}
\textstyle
\ztild_{s} = \ztild_{k} + \vphantom{}^k N_{s-t_k}^{u_{t_k}(\ztild_{t_k})}\eqsp,
\end{equation}
where the $\vphantom{}^k\bfN^{u_{t_k}(\ztild_{t_k})}$ is a standard Poisson processes with constant intensity $u_{t_k}(\ztild_{t_k})$ and independent of the past. Said differently, $\bfZtild$ has piecewise constant $\filt^{\bfZtild}$-stochastic intensity $s\mapsto \lambda_s \coloneqq \sum_{k=0}^{N-1}\1_{[t_k,t_{k+1})}(s)u_{t_k}(\ztild_{t_k})$.
Assume the following. 
\begin{itemize}
\item Given any fixed $z\in\N^d$, $s\in[0,t]\mapsto u_s(z)$ is Lipschitz continuous, and there exist constants $a,b,L\geqslant 0$, as well as $0\leqslant L_k\leqslant L, 0\leqslant k\leqslant N-1$, such that the Lipschitz constant of $s\in[t_{k},t_{k+1}]\mapsto u_s(z)$ is upper bounded by $(a\Vert z\Vert_1+b)L_k$.
\item For a fixed $s\in[0,t]$,  $\Lip(u_s)\coloneqq \sup_{z\neq y\in\N^d} \Vert u_s(z)-u_s(y)\Vert_1/\Vert z-y\Vert_1$ if finite, and $\sup_{s\leqslant t} \Lip(u_s)<\infty$. This holds in particular if $s\mapsto \Lip(u_s)$ is continuous.
\end{itemize}
Then, $\PE[\Vert\ztild_t-Z_t\Vert_1] = \mathcal{O}(\max_k \Delta_k)$ as $\max_k\Delta_k \rightarrow 0$.
\end{proposition}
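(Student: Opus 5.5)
We couple $\bfZ$ and $\bfZtild$ on a common probability space through the Poisson point process representation \eqref{eq:integral}. Take $d$ independent $\filt$-Poisson point processes $\Pi^1,\dots,\Pi^d$ on $\rset_+\times\rset_+$ with intensity $\rmd s\,\rmd x$. Define, for each coordinate $i$,
\begin{equation}
Z^i_t=\int_0^t\int_{\rset_+}\1_{[0,u^i_s(Z_{s-})]}(x)\,\Pi^i(\rmd s,\rmd x),\qquad \ztild^i_t=\int_0^t\int_{\rset_+}\1_{[0,\lambda^i_{s}]}(x)\,\Pi^i(\rmd s,\rmd x),
\end{equation}
where $\lambda_s=u_{t_k}(\ztild_{t_k})$ on $[t_k,t_{k+1})$. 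Both processes then have the prescribed intensities, hence the right laws. Under this coupling, $\vert Z^i_t-\ztild^i_t\vert$ is bounded by the number of atoms of $\Pi^i$ lying between the two curves. Taking expectations, and using that both integrands are predictable, gives
\begin{equation}
e(t)\coloneqq\PE[\Vert Z_t-\ztild_t\Vert_1]\leqslant\int_0^t\PE\big[\Vert u_s(Z_{s})-\lambda_s\Vert_1\big]\rmd s .
\end{equation}
This is the same mechanism that underlies \Cref{lemma:estimation_error}.

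Next we split the integrand for $s\in[t_k,t_{k+1})$ by the triangle inequality:
\begin{equation}
\Vert u_s(Z_s)-u_{t_k}(\ztild_{t_k})\Vert_1\leqslant \Vert u_s(Z_s)-u_s(\ztild_s)\Vert_1+\Vert u_s(\ztild_s)-u_s(\ztild_{t_k})\Vert_1+\Vert u_s(\ztild_{t_k})-u_{t_k}(\ztild_{t_k})\Vert_1 .
\end{equation}
\begin{itemize}
\item The first term is at most $\Lambda\Vert Z_s-\ztild_s\Vert_1$, where $\Lambda\coloneqq\sup_{s\leqslant t}\Lip(u_s)<\infty$.
\item The second term is at most $\Lambda\Vert\ztild_s-\ztild_{t_k}\Vert_1$. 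Its expectation equals $\Lambda(s-t_k)\PE\Vert u_{t_k}(\ztild_{t_k})\Vert_1$, because the increment is Poisson given $\ztild_{t_k}$.
\item The third term is at most $(a\Vert\ztild_{t_k}\Vert_1+b)L(s-t_k)$, by the time-Lipschitz assumption.
\end{itemize}
Collecting the three bounds yields
\begin{equation}
e(t)\leqslant \Lambda\int_0^t e(s)\,\rmd s + C\max_k\Delta_k\,\Big(1+\sup_{k}\PE\Vert\ztild_{t_k}\Vert_1\Big).
\end{equation}
Here we used $\Vert u_{t_k}(z)\Vert_1\leqslant \Vert u_{t_k}(0)\Vert_1+\Lambda\Vert z\Vert_1$, which holds by the Lipschitz property in $z$. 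Grönwall's lemma then gives $e(t)\leqslant C'\max_k\Delta_k\, e^{\Lambda t}$.

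The main obstacle is the uniform moment bound on $\sup_k\PE\Vert\ztild_{t_k}\Vert_1$, which must not depend on the grid. We obtain it from the linear growth $\Vert u_s(z)\Vert_1\leqslant c_0+\Lambda\Vert z\Vert_1$. Here $c_0=\sup_{s\leqslant t}\Vert u_s(0)\Vert_1$ is finite because $s\mapsto u_s(0)$ is Lipschitz, hence continuous, on $[0,t]$. With $m_k=\PE\Vert\ztild_{t_k}\Vert_1$, this growth bound gives the recursion $m_{k+1}\leqslant m_k(1+\Lambda\Delta_k)+c_0\Delta_k$. Unrolling it yields $m_k\leqslant c_0 t\,e^{\Lambda t}$, uniformly over grids.

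A second, more technical point is the justification of the coupling inequality. The absolute difference of the two stochastic integrals is at most the integral of $\vert\1_{[0,a]}-\1_{[0,b]}\vert=\1_{(\min,\max]}$, and its compensator has density $\vert a-b\vert$. Taking expectations requires integrability, which follows from the same moment bounds together with a standard localization argument.
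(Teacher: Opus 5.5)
Your proposal is correct and takes essentially the same approach as the paper: you couple through a common Poisson point process and use the compensation formula, split the intensity discrepancy the same way, obtain the grid-uniform moment bound on $\PE\Vert\ztild_{t_k}\Vert_1$ from the same linear recursion, and conclude with Gr\"onwall. The only differences are cosmetic: the paper keeps the sharper constant $\sum_k\Delta_k^2$ and $\Lip(u_s)$ inside the Gr\"onwall exponent, whereas you bound directly by $\max_k\Delta_k$ and $\sup_{s\leqslant t}\Lip(u_s)$, and you also justify finiteness of $\sup_{s\leqslant t}\Vert u_s(0)\Vert_1$, which the paper leaves implicit.
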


In particular, these assumptions hold when $\bfZ\sim\mathrm{MPP}(\gamma)$ with $\gamma\sim\loigamma(\alpha,\beta)$ a Gamma distribution, in which case $u^i_t(z) = (z_i+\alpha_i)/(\beta_i+t)$; see \Cref{app:gamma_case}.
\begin{proof}

 First, we derive an upper bound on the expected values of $\Vert\bfZtild\Vert_1$ on the grid. Write, for $s\in[t_k,t_{k+1}]$,
\begin{align}
    \PE[\Vert \ztild_{s}\Vert_1] &= \PE[\Vert\ztild_{t_k}\Vert_1] + \PE[\Vert\vphantom{}^kN_{s-t_k}^{u_{t_k}(\ztild_{t_k})}\Vert_1] \\
    &= \PE[\Vert \ztild_{t_k}\Vert_1] + (s-t_k)\PE[\Vert u_{t_k}(\ztild_{t_k})\Vert_1] \\
    &\leqslant \PE[\Vert \ztild_{t_k}\Vert_1] +(s-t_k)\left(\Vert u_{t_k}(0)\Vert_1 + \Lip(u_{t_k})\PE[\Vert \ztild_{t_k}\Vert_1]\right)
    \label{eq:mean_diff_grid}\eqsp,
\end{align}
and in particular $ \PE[\Vert\ztild_{t_{k+1}}\Vert_1] \leqslant \Delta_k \Vert u_{t_k}(0)\Vert_1+ \PE[\Vert\ztild_{t_k}\Vert](1+\Delta_k\Lip(u_{t_k}))$; writing $\mu_k\coloneqq \PE[\Vert \ztild_{t_k}\Vert_1]$, increasing in $k$ due to counting processes being non-decreasing, 
this gives the upper bound
\begin{align}
\label{eq:mean_bound}
\mu_k \leqslant \mu_{N} &\leqslant \sum_{i=0}^{N-1} \Delta_i\Vert u_{t_i}(0)\Vert_1 \prod_{j=i+1}^{N-1} (1+\Delta_j\Lip(u_{t_j})) \\
\label{eq:mean_bound_2}
&\leqslant \sum_{i=0}^{N-1} \Delta_i\Vert u_{t_i}(0)\Vert_1 \times \exp\left(\sum_{j=0}^{N-1}\Delta_j\Lip(u_{t_j})\right)\\
&\leqslant \sum_{i=0}^{N-1} \Delta_i\Vert u_{t_i}(0)\Vert_1 \times \exp\left(t\sup_{s\leqslant t}\Lip(u_s)\right)\\
&\leqslant t \sup_{s\leqslant t}\Vert u_{s}(0)\Vert_1 \times \exp\left(t\sup_{s\leqslant t}\Lip(u_s)\right)\eqsp.
\end{align}
 In particular, there exists a uniform upper bound $K>0$ such that $0=\mu_0\leqslant\mu_1\leqslant\dots\leqslant\mu_N\leqslant K$ for any time-grid.

Next, we use the integral representation \eqref{eq:integral} with $\filt=\filt^{\bfZ}\vee \filt^{\bfZtild}$ and couples both processes through the same $\filt$-Poisson point process $\Pi$. 
For every $1\leqslant i \leqslant d$, 
\begin{align}
    \PE[\vert Z_t^i - \ztild_t^i\vert] &\leqslant \PE\int_0^t\int_{\rset_+} |\1_{[0,u^i_s(Z_{s-})]}(x) - \1_{[0,\lambda^i_{s-}]}(x)| \Pi(\rmd s,\rmd x) \\
    &= \PE\int_0^t\int_{\rset_+} |\1_{[0,u^i_s(Z_s)]}(x) - \1_{[0,\lambda^i_s]}(x)| \rmd x \rmd s \\
    &= \PE\int_0^t |u^i_s(Z_s)-\lambda^i_s| \rmd s \\
    &= \sum_{k=0}^{N-1}\int_{t_k}^{t_{k+1}} \PE[\vert u^i_s(Z_s)-u^i_{t_k}(\ztild_{t_k})\vert] \rmd s \\
    &\leqslant \sum_{k=0}^{N-1}\int_{t_k}^{t_{k+1}} \PE[|u^i_s(\ztild_s)-u^i_{t_k}(\ztild_{t_k})|]\rmd s + \int_0^t \PE[|u^i_s(Z_s)-u^i_s(\ztild_s)|] \rmd s\eqsp,
\end{align}
where the second line is obtained by noting that $s\mapsto |\1_{[0,u^i_s(Z_{s-})]}(x) - \1_{[0,\lambda^i_{s-}]}(x)|$ is $\Ft_s$-predictable by definition of stochastic intensities, and using the compensation formula of the Poisson random measure; see, \eg, \citep[Theorem 4.4]{kyprianou_fluctuations_2014}. Note that we omit the $s-$ in the integral, as, almost surely, this does not change the integral value.
Summing these inequalities, we obtain 
\begin{equation}
    \PE[\Vert Z_t-\ztild_t\Vert_1]\leqslant \sum_{k=0}^{N-1}\int_{t_k}^{t_{k+1}} \PE[\Vert u_s(\ztild_s)-u_{t_k}(\ztild_{t_k})\Vert_1]\rmd s + \int_0^t \Lip(u_s)\PE[\Vert Z_s-\ztild_s\Vert_1] \rmd s\eqsp.
\end{equation}

To bound the first term, write, for $s\in[t_k,t_{k+1}]$,
\begin{align}
    \PE[\Vert u_s(\ztild_s)-u_{t_k}(\ztild_{t_k})\Vert_1] &\leqslant \PE[\Vert u_s(\ztild_s)-u_s(\ztild_{t_k})\Vert_1]+\PE[\Vert u_s(\ztild_{t_k})-u_{t_k}(\ztild_{t_k})\Vert_1] \\
    &\leqslant \Lip(u_s)\PE[\Vert \ztild_s-\ztild_{t_k}\Vert_1] + (s-t_k)L_k(a\PE[\Vert \ztild_{t_k}\Vert_1]+b) \\
    &\leqslant \Lip(u_s)(s-t_k)(\Vert u_{t_k}(0)\Vert_1 +\Lip(u_{t_k})\PE[\Vert \ztild_{t_k}\Vert_1]) + (s-t_k)L_k(a\PE[\Vert \ztild_{t_k}\Vert_1]+b) \\
    &= \Lip(u_s)(s-t_k)(\Vert u_{t_k}(0)\Vert_1 +\Lip(u_{t_k})\mu_k) + (s-t_k)L_k(a\mu_k+b)  \eqsp,
\end{align}
using \eqref{eq:mean_diff_grid} and the assumptions on $u$. This yields
\begin{align}
    &\sum_{k=0}^{N-1}\int_{t_k}^{t_{k+1}} \PE[\Vert u_s(\ztild_s)-u_{t_k}(\ztild_{t_k})\Vert_1]\rmd s \\
    &\leqslant \sum_{k=0}^{N-1} (\Vert u_{t_k}(0)\Vert_1 +\Lip(u_{t_k})\mu_k)\int_{t_k}^{t_{k+1}}\Lip(u_s)(s-t_k)\rmd s 
    + \sum_{k=0}^{N-1} \frac{\Delta_k^2}{2}L_k(a\mu_k+b)\label{eq:constant_gron} \\
    &\leqslant C\coloneqq \sum_{k=0}^{N-1} \left(\sup_{s\leqslant t}\Lip(u_s)(\sup_{s\leqslant t} \Vert u_s(0)\Vert_1+\sup_{s\leqslant t}\Lip(u_s)K) + L(aK+b)\right)\frac{\Delta_k^2}{2} \eqsp.
\end{align}
Using Gronwall's lemma, the discretization error upper bound is 
\begin{equation}
\label{eq:bound_discr}
    \PE[\Vert Z_t-\ztild_t \Vert_1] \leqslant C \exp\left(\int_0^t \Lip(u_s) \rmd s\right)\eqsp.
\end{equation}
Convergence follows, as $C=\mathcal{O}(\Vert\Delta\Vert_2^2)=\mathcal{O}(\max_{k}\Delta_k)$ as $\max_k\Delta_k \rightarrow 0$, concluding the proof.
\end{proof}
\begin{example}
\label{ex:discretized}
We can make the discretization error bound explicit when $\pi=\gamma\sim\loigamma(\alpha,\beta)$. For simplicity, we consider the univariate case. Since $\Denois^\gamma_s(z) = (z+\alpha)/(\beta+s)$, we have $\Lip(u_s) = 1/(\beta+s)$, $u_s(0)=\alpha/(\beta+s)$, $a=1,b=\alpha$ and $L_k = 1/(\beta+t_k)^2$. 

Furthermore, the inequality \eqref{eq:mean_diff_grid} is actually an equality in this specific case, since the denoiser is affine in $z$. Thus $\mu_k$ satisfies the recursion 
\begin{align}
    &\mu_{k+1} = \Delta_k\frac{\alpha}{\beta+t_k} +\mu_k\left(1+\frac{\Delta_k}{\beta+t_k}\right) \\
    &\mu_{k+1}+\alpha = (\mu_k+\alpha)\times\frac{\beta+t_{k+1}}{\beta+t_k}\eqsp,
\end{align}
and as a result, $\mu_k+\alpha = \alpha(\beta+t_k)/\beta$, and finally $\mu_k = t_k\alpha/\beta$.
Using \eqref{eq:constant_gron}, with $\Lip(u_s)\leqslant 1/(\beta +t_k)$ on $[t_k,t_{k+1}]$, we obtain after simplification
\begin{equation}
    \PE[\vert Z_t-\ztild_t\vert] \leqslant \frac{\alpha}{\beta}\sum_{k=0}^{N-1} \frac{\Delta_k^2}{\beta+t_k}\times \left( 1+\frac{t}{\beta}\right)\eqsp.
\end{equation}
We compare this bound with simulations in \Cref{fig:discretized_bound}.

\begin{remark}
    \label{rmk:inference}
    Our sampling procedure based on discretization, \Cref{alg:discretization_1d}, is quite similar to \citep[Algorithm 2]{chen_learning_2023}. Although their construction differs from ours (see \Cref{sec:diffusion_gaussian}), they rigorously derive their algorithm using binomial thinning arguments, while our algorithm leverages the stochastic intensity of counting processes. 
    On the other hand, \citep[Algorithm 2]{bhattacharya_itdpdm_2026} differs substantially from \Cref{alg:discretization_1d} and \citep[Algorithm 2]{chen_learning_2023}: instead of considering an update of the form $\tilde{Z}_{t_{k+1}} = \tilde{Z}_{t_{k}} + n_k, n_k\sim \poisson(\Delta_k u_{t_k}(\tilde{Z}_{t_k}))$, with $u_{t_k}(\tilde{Z}_{t_k})$ approximating the Poisson denoiser, they instead draw $\tilde{Z}_{t_{k+1}} \sim \poisson(t_{k+1}u_{t_{k}}(\tilde{Z}_{t_k}))$. To the best of our understanding, this sampling procedure is not mathematically derived in the paper, and we see no straightforward justification. We assess their sampling procedure with the tractable bivariate Gamma mixture distribution of \Cref{app:settings}, using the same log-uniform time-grid (with $N=64,1024$): we compare the quality of approximate $\pi$ samples obtained with their algorithm and ours, using the analytical expression of the Poisson denoiser. 
    As reported in \Cref{tab:moments}, the samples obtained with \Cref{alg:discretization_1d} accurately reproduce the first two scalar moments of $\pi$, whereas the samples obtained with \citep[Algorithm 2]{bhattacharya_itdpdm_2026} exhibit substantially larger discrepancies.
\end{remark}
\begin{table}[H]
\centering \scriptsize
\begin{tabular}{c |c|c|c|c}
\toprule
\multirow{2}{*}{\textbf{Sampling method}}
    & \multicolumn{2}{c|}{$N=64$}
    & \multicolumn{2}{c}{$N=1024$} \\
    \cmidrule(lr){2-5}
    & $\hat{\mu}$ & $\hat{\sigma}$ & $\hat{\mu}$ & $\hat{\sigma}$\\
\midrule\midrule
\textbf{\Cref{alg:discretization_1d}}
    & 5.03 & 1.95 & 5.00 & 1.98 \\
\citep[Algorithm 2]{bhattacharya_itdpdm_2026}
    & 5.41 & 1.91 & 5.32 & 2.12 \\
\bottomrule
\end{tabular}
\caption{Empirical moments $\hat{\mu},\hat{\sigma}$ obtained with $10^6$ approximate $\pi$ samples, with $\pi$ the bivariate Gamma mixture distribution of \Cref{app:settings}. The exact values of $\mu= \PE[X_1+X_2], \sigma^2 = \PE[(X_1+X_2-\mu)^2]$, are $\mu=5.0,\sigma \approx 1.99$.}
\label{tab:moments}
\end{table}

\end{example}
\begin{figure}
    \centering
\includegraphics[width=\linewidth]{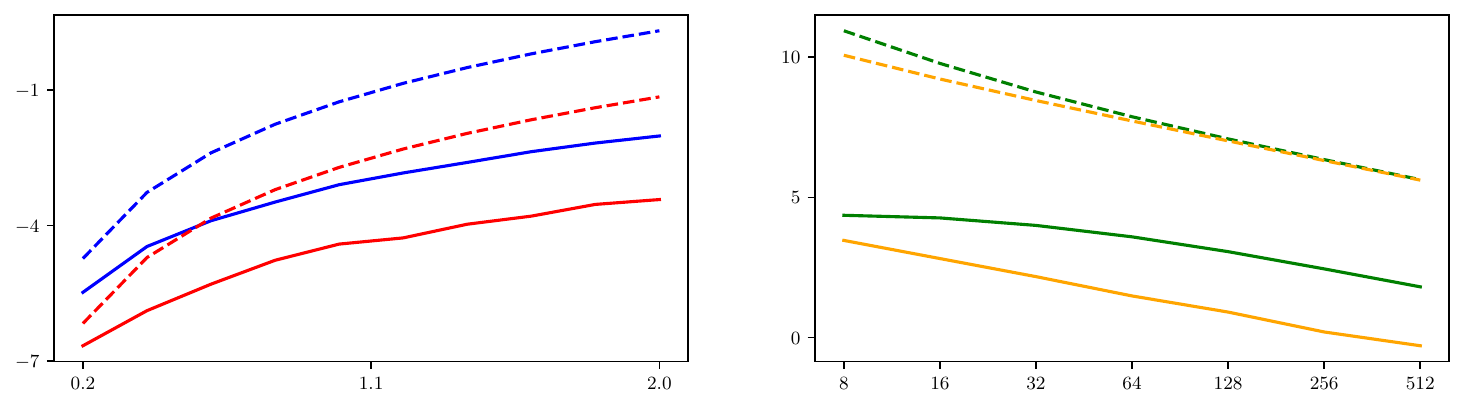}
    \caption{Discretization error for $\bfZ\sim\MPPpi$, $\pi=\Gamma(\alpha=3,\beta=5)$; $\tilde{\bfZ}$ is the discretized process of \Cref{prop:discretization}. The $y$-axis is on a log scale. \textbf{(Left)} Empirical 1-Wassertein distance ($10^7$ samples) between $\law(Z_t)$ and $\law(\tilde{Z}_t)$ (continuous line) and the associated upper theoretical bound of \Cref{ex:discretized} (dashed line) using a uniform time-grid on $[0,t]$, $0.2\leqslant t\leqslant 2$; with either $N=16$ \textcolor{blue}{(blue)} or $N=64$ \textcolor{red}{(red)}. We observe that the theoretical upper bound is relatively sharp for low $t$, with a gap increasing with $t$. \textbf{(Right)} Empirical 1-Wassertein distance between $\law(Z_t)$ and $\law(\tilde{Z}_t)$ (continuous line) and the associated theoretical upper bound of \Cref{ex:discretized} (dashed line) for $t=100$, as a function of $N=8,,16,\dots,512$. We compare the uniform time-grid \textcolor{green}{(green)} and a log-uniform grid \textcolor{orange}{(orange)}. More precisely, we use uniformly spaced $\log(\beta+t_k)$: empirically, this yields a substantially smaller error compared to the uniform time-grid.}
    \label{fig:discretized_bound}
\end{figure}

{\early*}
\begin{proof}
Note that for every $t>0$, conditionally on $X$, $Z_t^i$ is Poisson distributed with mean and variance $tX_i$. Assuming $\PE[\Vert X\Vert^2_2]<\infty,$ we have  
\begin{align}
\PE\left[\Vert Z_t-tX\Vert^2_2\right] &= \PE\left[\PE\left[\Vert Z_t-tX\Vert^2_2\vert X\right]\right] \\
&= \sum_{i=1}^d \PE\left[\PE\left[(Z_t^i-tX_i)^2\vert X\right]\right] \\
&= t\sum_{i=1}^{d}\PE[X_i] = t\PE[\Vert X\Vert_1]\eqsp.
\end{align}
Dividing by $t^2$ gives the result.
\end{proof}

{\estimationerror*}
\begin{proof}
Set $\filt=\filt^\bfZ \vee\filt^{\bfY}$.
    Using the integral representation \eqref{eq:integral} with $\filt$, we may couple $\bfZ$ and $\bfY$ through the same Poisson point process $\Pi$.
Then, for $1\leqslant i\leqslant d$,
\begin{align}
    \PE[|Z^i_t-Y^i_t|] &\leqslant \PE\int_0^t\int_{\rset_+} |\1_{[0,u^i_s(Z_s)]}(x) - \1_{[0,v^i_s(Y_s)]}(x)| \Pi(\rmd s,\rmd x) \\
    &= \PE\int_0^t\int_{\rset_+} |\1_{[0,u_s^i(Z_s)]}(x) - \1_{[0,v_s^i(Y_s)]}(x)| \rmd x \rmd s \\
    &= \PE\int_0^t |u_s^i(Z_s)-v_s^i(Y_s)| \rmd s \\
    &\leqslant \int_0^t \PE[|u_s^i(Y_s)-v_s^i(Y_s)|]\rmd s + \int_0^t \PE[\vert u_s^i(Z_s)-u_s^i(Y_s)\vert] \rmd s\eqsp,
\end{align}
where the second line uses the compensation formula of $\Pi$, see, \eg, \citep[Theorem 4.4]{kyprianou_fluctuations_2014}. Summing for $1\leqslant i\leqslant d$ yields
\begin{align}
    \PE[\Vert Z_t-Y_t\Vert_1] &= \sum_{i=1}^d\PE[\vert Z_t^i-Y_t^i\vert] \\
    &\leqslant \int_0^t \PE[\Vert u_s(Y_s)-v_s(Y_s)\Vert_1]\rmd s + \int_0^t \PE[\Vert u_s(Z_s)-u_s(Y_s)\Vert_1] \rmd s \\
    &\leqslant \delta(t) +\int_0^t \PE[\Vert Z_s-Y_s\Vert_1]\Lip(u_s)\rmd s\eqsp.
\end{align}

Gronwall's inequality leads to the upper bound on $\PE[\Vert Z_t-Y_t\Vert_1]$, and thus on the Wasserstein distance.
\end{proof}

\begin{figure}
\centering
 \includegraphics[width=\linewidth]{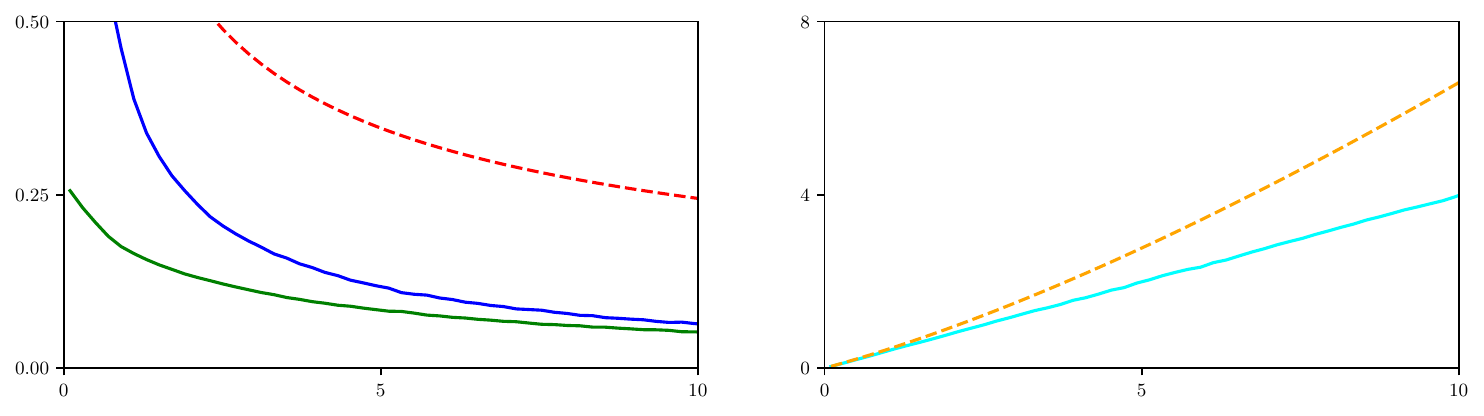}
 \caption{Illustration of \Cref{lemma:early} and \Cref{lemma:estimation_error} with $\bfZ\sim\MPPpi$, $\pi=\Gamma(\alpha=3,\beta=5)$ and $\bfY\sim\text{MPP}(\gamma), \gamma=\Gamma(\tilde{\alpha}=5,\beta=5)$, for $0\leqslant t\leqslant 10$. \textbf{(Left)} Early stopping error. \textcolor{blue}{Blue line} shows the empirical 1-Wasserstein distance between $\law(Z_t/t)$ and $\pi$ as a function of $t$, \textcolor{red}{red dashed line} is the associated (loose) theoretical bound $\sqrt{\PE[X]/t}$,
 and \textcolor{green}{green line} the empirical 1-Wasserstein distance between $\Denois^\pi(Z_t) = (Z_t+\alpha)/(\beta+t)$ and $\pi$; see \Cref{rmk:early}. \textbf{(Right)} Denoiser estimation error. \textcolor{cyan}{Cyan line} is the empirical 1-Wasserstein distance between $\law(Z_t)$ and $\law(Y_t)$, and \textcolor{orange}{orange dashed line} the theoretical upper bound of \Cref{ex:gamma_estimation}.}
\label{fig:bounds_early_denois}
\end{figure}

\begin{example}
    \label{ex:gamma_estimation}
    In dimension 1, if $\bfZ\sim\MPPpi$ with $\pi=\gamma = \loigamma(\alpha,\beta)$, the denoiser is an affine function of $z$: $\Denois_t^\gamma(z) = \frac{z+\alpha}{t+\beta}$; see \Cref{app:gamma_case}. Let $\bfY\sim\mathrm{MPP}(\tilde{\gamma})$, with $\tilde{\gamma}\sim\loigamma(\tilde{\alpha},\beta)$; then $\Denois_t^{\tilde{\gamma}}(z) = \frac{y+\tilde{\alpha}}{t+\beta}$. We have
        $\delta(t) = \int_0^t \PE[|\Denois_s^\gamma(Y_s)-\Denois_s^{\tilde{\gamma}}(Y_s)|]\rmd s = |\alpha-\tilde{\alpha}|\log\left(1+t/\beta\right)$ and $\Lip(\Denois^\gamma_t) = 1/(t+\beta)$, hence the bound  $
        \mathscr{W}_{1,1}(\mathrm{Law}(Z_t),\mathrm{Law}(Y_t)) \leqslant |\alpha-\tilde{\alpha}|\left(1+\frac{t}{\beta}\right)\log\left(1+\frac{t}{\beta}\right)$.
    Here, the exact value of $\mathscr{W}_{1,1}(\mathrm{Law}(Z_t),\mathrm{Law}(Y_t))$ is tractable, and reads $\frac{\vert\alpha-\tilde{\alpha}\vert}{\beta}t$. The general bound we computed differs from the actual distance by only a logarithmic factor; see also \Cref{fig:bounds_early_denois}.
\end{example}

Combining the previous results yields the upper bounds below. Using thinning (\Cref{alg:thinning_1d}) introduces no additional error, leading to \eqref{eq:bound_2}; however, execution time is uncertain. Using time discretization instead (\Cref{alg:discretization_1d}) does not have this drawback, but introduces the additional error derived in \Cref{prop:discretization}, yielding \eqref{eq:bound_3}.
\begin{corollary}
    \label{col:bound}
    Let $\bfZ\sim\MPPpi$ with random rate $X$, and consider an approximation $\bfY$ with stochastic intensity $k^\theta\approx \Denois^\pi$.
    Assume $\PE[\Vert X\Vert_2^2]<\infty$, and that $k^\theta$ satisfies the assumptions of \Cref{lemma:estimation_error}. Then, for any fixed $t>0$, using the notation of \Cref{lemma:estimation_error},
    \begin{equation}
        \label{eq:bound_2}
        \mathscr{W}_{1,1}(\law(Y_t/t),\pi) \leqslant \sqrt{d\frac{\PE[\Vert X\Vert_1]}{t}} + \frac{\delta(t)}{t}\exp\left(\int_0^t \Lip(k_s^\theta)\rmd s\right)\eqsp.
    \end{equation}
    If $\tilde{\bfY}$ is the counting process obtained by discretization of $\bfY$ on a time-grid $0=t_0<t_1<\dots<t_N$, and if $k^\theta$ satisfies the regularity assumptions of \Cref{prop:discretization}, then
    \begin{equation}
        \label{eq:bound_3}
\mathscr{W}_{1,1}(\law(\tilde{Y}_t/t),\pi) \leqslant \sqrt{d\frac{\PE[\Vert X\Vert_1]}{t}} + \frac{1}{t}\left(\delta(t)+C\right)\exp\left(\int_0^t \Lip(k_s^\theta)\rmd s\right) \eqsp,
    \end{equation}
    with $C=\mathcal{O}(\max_{k}\Delta_k)$ being defined in the proof of \Cref{prop:discretization}.
\end{corollary}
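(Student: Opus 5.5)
The plan is a triangle inequality in $\mathscr{W}_{1,1}$, splitting the error into an early-stopping part and an estimation (plus discretization) part:
\begin{equation}
\mathscr{W}_{1,1}(\law(Y_t/t),\pi)\leqslant \mathscr{W}_{1,1}(\law(Y_t/t),\law(Z_t/t))+\mathscr{W}_{1,1}(\law(Z_t/t),\pi)\eqsp.
\end{equation}

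\textbf{Early stopping term.} I would couple $Z_t$ with its own random rate $X$ and compare the $l^1$ and $l^2$ norms. By Cauchy--Schwarz in $\rset^d$, $\Vert v\Vert_1\leqslant\sqrt d\Vert v\Vert_2$, and then Jensen gives
\begin{equation}
\mathscr{W}_{1,1}(\law(Z_t/t),\pi)\leqslant\PE[\Vert Z_t/t-X\Vert_1]\leqslant\sqrt d\,\PE[\Vert Z_t/t-X\Vert_2^2]^{1/2}\eqsp.
\end{equation}
The last expectation equals $\PE[\Vert X\Vert_1]/t$ by \Cref{lemma:early}, which produces the term $\sqrt{d\PE[\Vert X\Vert_1]/t}$.

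\textbf{Estimation term.} $\mathscr{W}_{1,1}$ is $1$-homogeneous under the dilation $x\mapsto x/t$, since any coupling of $(Z_t,Y_t)$ induces one of $(Z_t/t,Y_t/t)$ with the cost divided by $t$. So it suffices to bound $\mathscr{W}_{1,1}(\law(Z_t),\law(Y_t))$ and divide by $t$. I would apply \Cref{lemma:estimation_error}, but with the roles chosen so that the Lipschitz process is the one driven by $k^\theta$. That is, take the process with intensity $u=k^\theta$ to be $\bfY$ and the other to be $\bfZ$, with $v=\Denois^\pi$, the intensity identified in \Cref{prop:stochastic_intensity}. This is the main point to get right: the lemma requires Lipschitz regularity only of $u$, and the corollary assumes it for $k^\theta$, not for $\Denois^\pi$. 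The discrepancy $\delta(t)$ is then evaluated along the true MPP $\bfZ$, matching the training loss, and the lemma yields $\delta(t)\exp(\int_0^t\Lip(k^\theta_s)\rmd s)$. Dividing by $t$ gives \eqref{eq:bound_2}.

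\textbf{Discretized case.} For \eqref{eq:bound_3} I would insert one more intermediate law:
\begin{equation}
\mathscr{W}_{1,1}(\law(\tilde Y_t/t),\pi)\leqslant \tfrac1t\mathscr{W}_{1,1}(\law(\tilde Y_t),\law(Y_t))+\tfrac1t\mathscr{W}_{1,1}(\law(Y_t),\law(Z_t))+\mathscr{W}_{1,1}(\law(Z_t/t),\pi)\eqsp.
\end{equation}
The first term is bounded via \eqref{eq:bound_discr} in \Cref{prop:discretization} applied to $u=k^\theta$, giving $C\exp(\int_0^t\Lip(k^\theta_s)\rmd s)$, because $\mathscr{W}_{1,1}$ is dominated by the expected $l^1$ distance under the synchronous Poisson-point-process coupling. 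The other two terms are bounded exactly as above. Summing, the common exponential factor multiplies $\delta(t)+C$, as in \eqref{eq:bound_3}.

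The only genuinely delicate step is the orientation of \Cref{lemma:estimation_error}. Everything else is a triangle inequality, the elementary $l^1$--$l^2$ comparison, and the scaling of the Wasserstein distance.
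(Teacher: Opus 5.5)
Your proposal is correct and follows the paper's own decomposition. The early-stopping term comes from \Cref{lemma:early} together with $\Vert v\Vert_1\leqslant\sqrt{d}\,\Vert v\Vert_2$, the estimation term from \Cref{lemma:estimation_error}, and the discretization term from \eqref{eq:bound_discr}. The only difference is cosmetic: you apply the triangle inequality to $\mathscr{W}_{1,1}$ itself, together with dilation homogeneity, whereas the paper applies it to $\PE[\Vert\cdot\Vert_1]$ under a single joint coupling of $X,\bfZ,\bfY,\tilde{\bfY}$. Your explicit orientation $u=k^\theta$, $v=\Denois^\pi$ in \Cref{lemma:estimation_error} is exactly what puts $\Lip(k^\theta_s)$ in the exponent and makes $\delta(t)$ an expectation along the true MPP; the paper's proof leaves this point implicit.
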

\begin{proof}
Using the triangle inequality,
    \begin{align}
        \mathscr{W}_{1,1}(\law(Y_t/t),\pi) &\leqslant \PE[\Vert Y_t/t-X\Vert_1] \\
        &\leqslant \PE[\Vert Z_t/t-X\Vert_1]+\PE[\Vert Z_t-Y_t\Vert_1]/t\eqsp.
    \end{align}
    The first term is upper bounded with \Cref{lemma:early}:
    \begin{align}
    \PE[\Vert Z_t/t-X\Vert_1] &\leqslant\sqrt{d} \PE[\Vert Z_t/t-X\Vert_2] \\
    &\leqslant \sqrt{d}\sqrt{\frac{\PE[\Vert X\Vert_1]}{t}}
    \end{align}
    Plugging in the bounds of \Cref{lemma:early} and \Cref{lemma:estimation_error} yields \eqref{eq:bound_2}. Then,
    \begin{align}
        \mathscr{W}_{1,1}(\law(\tilde{Y}_t/t),\pi) &\leqslant \PE[\Vert \tilde{Y}_t/t-X\Vert_1] \\
        &\leqslant \PE[\Vert Z_t/t-X\Vert_1]+\PE[\Vert Z_t-\tilde{Y}_t\Vert_1]/t \\
        &\leqslant \PE[\Vert Z_t/t-X\Vert_1]+\PE[\Vert Z_t-Y_t\Vert_1]/t + \PE[\Vert \tilde{Y}_t-Y_t\Vert_1]/t
    \end{align}
    yields \eqref{eq:bound_3} using \Cref{prop:discretization}, Equation \eqref{eq:bound_discr}.
\end{proof}

\subsection{The Gamma case}
\label{app:gamma_case}
Diffusion-based methods usually consider Gaussian mixtures as toy examples. For MPDM, the tractable case corresponds to Gamma mixtures. 
\paragraph{MPP with Gamma random rate.}
Consider a mixed Poisson process $\bfZ$ with distribution $\mathrm{MPP}(\gamma)$, where 
\begin{equation}\gamma(\rmd x) = \loigamma(\alpha,\beta) = \prod_{i=1}^d\loigamma(x_i;\alpha_i,\beta_i)\rmd x_i=\prod_{i=1}^d \rme^{-\beta_ix_i}x_i^{\alpha_i-1}\frac{\beta_i^{\alpha_i}} {\Gamma(\alpha_i)}\rmd x_i\eqsp,
\end{equation}
with $\alpha,\beta\in(0,+\infty)^d$. To be clear, $\loigamma(\alpha,\beta)$ is the multivariate Gamma distribution, a probability measure on $\rset_+^d$, $x_i\in\rset_+ \mapsto \loigamma(x_i;\alpha_i,\beta_i)$ is the density of the univariate Gamma distribution, and $\Gamma$ denotes the Gamma function.

Then each component $\bfZ_i,1\leqslant i \leqslant d$ is a \emph{P\'oyla process}, see \citep[Example 4.1]{grandell_mixed_1997}.
Specifically, for every $t\geqslant0$, the components of $Z_t$ are independent and admit a negative binomial distribution:
\begin{align}
\label{eq:poyla}
p_t^\gamma(z)\coloneqq \PP(Z_t=z) &= \int_{\rset_+} \prod_{i=1}^d\rme^{-tx_i}\frac{(tx_i)^{z_i}}{z_i!} \gamma(\rmd x) \\
&= \prod_{i=1}^d\int_0^{\infty}  \rme^{-(t+\beta_i)x_i} \frac{x_i^{z_i+\alpha_i-1}}{z_i!\Gamma(\alpha_i)}t^{z_i}\beta_i^{\alpha_i} \rmd x_i \\
&= \prod_{i=1}^d \frac{\Gamma(z_i+\alpha_i)}{(t+\beta_i)^{z_i+\alpha_i}} \frac{t^{z_i}\beta_i^{\alpha_i}}{z_i!\Gamma(\alpha_i)} \\
&=\prod_{i=1}^d\frac{\Gamma(z_i+\alpha_i)}{z_i!\Gamma(\alpha_i)}\left(1-\frac{\beta_i}{t+\beta_i}\right)^{z_i}\left(\frac{\beta_i}{t+\beta_i}\right)^{\alpha_i}\eqsp,
\end{align}
where we recognize unnormalized Gamma densities to compute the integrals. Given $z\in\N^d$, the denoising density (\ie, the density of $X$ conditionally on $Z_t=z$) is a product of Gamma densities:
\begin{align}
\label{eq:denoising_dist_gamma}
    \pi_t(\rmd x\vert z) = p_t^\gamma(x\vert z)\rmd x &\coloneqq \prod_{i=1}^d \rme^{-tx_i} \frac{(tx_i)^{z_i}}{z_i!} \times \frac{\gamma(\rmd x)}{\PP(Z_t=z)} \\
    &\propto \loigamma(z+\alpha,t\mathsf{1}+\beta)\eqsp.
\end{align}
The denoiser is the mean of this distribution, hence:
\begin{equation}
    \label{eq:denois_gamma}
    [\Denois_t^{\gamma}(z)]_i = \frac{z_i+\alpha_i}{t+\beta_i},\eqsp 1\leqslant i\leqslant d\eqsp.
\end{equation}

\paragraph{Mixture of Gamma.} If $\bfZ\sim\mathrm{MPP}(\mu)$ with $\mu(\rmd x) = \sum_{j=1}^Nw_j\prod_{i=1}^d\loigamma(x_i;\alpha_{ij},\beta_{ij})\rmd x_i$ with $w$ on the simplex, \eqref{eq:poyla} is directly adapted by adding the weighted sum to the equation:
\begin{equation}
    p^\mu_t(z) = \sum_{j=1}^N w_j\prod_{i=1}^d\frac{\Gamma(z_i+\alpha_{ij})}{z_i!\Gamma(\alpha_{ij})}\left(1-\frac{\beta_{ij}}{t+\beta_{ij}}\right)^{z_i}\left(\frac{\beta_{ij}}{t+\beta_{ij}}\right)^{\alpha_{ij}}\eqsp.
\end{equation}
The denoising density is a Gamma mixture, namely:
\begin{equation}
    p_t^\mu(x\vert z)\rmd x = \sum_{j=1}^N \tilde{w}_j \prod_{i=1}^d \loigamma(x_i;z_i+\alpha_{ij},t+\beta_{ij})\rmd x_i\eqsp,
    \tilde{w}_j \coloneqq \frac{w_j}{p_t^\mu(z)} \prod_{i=1}^d\frac{t^{z_i}\beta_{ij}^{\alpha_{ij}}}{\Gamma(\alpha_{ij})z_i!}\frac{\Gamma(z_i+\alpha_{ij})}{(t+\beta_{ij})^{z_i+\alpha_{ij}}}\eqsp,
\end{equation}
hence the denoiser 
\begin{equation}
    \label{eq:denoiser_gamma_mix}
    [\Denois_t^\mu(z)]_i = \sum_{j=1}^N \tilde{w}_j\frac{z_i+\alpha_{ij}}{t+\beta_{ij}}\eqsp, 1\leqslant i\leqslant d\eqsp.
\end{equation}

\section{Overwiew of diffusion-based posterior sampling methods}
\label{sec:inverse_gaussian}
We provide a brief overview of diffusion-based posterior sampling. In particular, we explain the core ideas of the algorithms that are adapted to the MPDM setting in \Cref{sec:inv}. We will refer to the diffusion model introduced in \Cref{sec:diffusion_gaussian} and the associated notations.
\paragraph{Gaussian inverse problems.} Let $\mca:\rset^d\rightarrow \rset^{d_y}$ denote the forward operator, with $1\leqslant d_y\leqslant d$, and consider the posterior distribution $\pi^y$ of $X\sim\pi$ given the observation $Y=y$, where
\begin{equation}
    \label{eq:gaussian_inverse}
    Y = \mca(X)+\sigma\varepsilon\eqsp,
\end{equation}
with $\varepsilon\sim\gauss(0,\Idd)$ and $\sigma>0$. From Bayes' rule, we have  $\pi^y(\rmd x) \propto g(y\vert x)\pi(\rmd x)$, with $g$ the likelihood function: in this case, $g(y\vert x) = \gauss(y;\mca(x),\sigma^2\Idd)$. 

Instead of focusing on a specific forward operator $\mca$, recent training-free methods use a pre-trained diffusion model as a general-purpose prior.
The Gaussian denoiser $D^\pi$ and the observation $y$ are combined to sample from the posterior distribution $\pi^y$ of a given inverse problem; see the review by   \citet{janati_briding_2025}. 

\paragraph*{Guidance-based methods.} Granted access to the \emph{conditional denoiser}, \ie, $D^{\pi^y}$, samples from $\pi^y$ could be obtained by simulating the associated diffusion \eqref{eq:diffusion_sde}.
Diffusion Posterior Sampling (DPS) \citep{chung_diffusion_2024}, approximates the conditional denoiser $D^{\pi^y}$\footnote{This approximation is usually formulated in terms of the diffusion score instead of the denoiser: both formulations are equivalent thanks to Tweedie's formula.} as 
\begin{equation}
\label{eq:approx_dps}
    D_t^{\pi^y}(x_t) = D_t^\pi(x_t) + \nabla \log g_t(y\vert x_t)\approx D_t^\pi(x_t) + \nabla \log g(y\vert D_t^\pi(x_t))\eqsp,
\end{equation}
where the intractable term $g_t(y\vert x_t) \coloneqq \int g(y\vert x)\pi_t(\rmd x\vert x_t)$ is estimated by approximating $\pi_t(\rmd x\vert x_t) \approx \delta_{D_t^\pi(x_t)}(\rmd x)$. 
In practice, evaluating this approximation requires differentiating the learned denoiser with respect to $x_t$.
Alternatively, \pigdm{} \citep {song_pseudoinverse-guided_2022} assumes $\pi_t(\rmd x\vert x_t)$ is Gaussian distributed with mean $D^\pi_t(x_t)$ and covariance matrix $r_t\Idd$, with $r_t>0$.
For Gaussian inverse problems with linear $\mca$, this approximation makes the conditional score, or equivalently the conditional denoiser $D^{\pi^y}$, tractable. However, \pigdm{} does not transpose to Poisson inverse problems.

\paragraph*{Split Gibbs sampling methods.}
Instead of approximating $D^{\pi^y}$ to simulate a diffusion process directly, another class of methods sample from a sequence of intermediary, augmented distributions. For instance, PnP-DM \citep{wu_principled_2024} consider the augmented distributions
\begin{align}
\label{eq:pnpdm_gaussian}
    \pi^y_k(\rmd x, \rmd \tilde{x}) \propto g(y\vert \tilde{x})\rme^{-\Vert x-\tilde{x}\Vert_2^2/(2\rho_k^2)} \pi(\rmd x)\eqsp,
\end{align}
where $(\rho_k)_{k=1}^n\in(0,+\infty)^n$ is typically non-increasing. For each $1\leqslant k\leqslant n$, $x,\tilde{x}$ are updated through a two steps procedure.
The \emph{likelihood step}, \ie, sampling from $\pi^y_k(\rmd \tilde{x}\vert x) \propto g(y\vert \tilde{x})\rme^{-\Vert x-\tilde{x}\Vert_2^2/(2\rho_k^2)}$, is exact for Gaussian inverse problem with linear $\mca$; in general, it can be tackled with standard MCMC methods. The \emph{prior step} leverages the pre-trained diffusion model, as
\begin{equation}
\label{eq:prior_step_gaussian}
    \pi^y_k(\rmd x\vert \tilde{x}) \propto \rme^{-\Vert x-\tilde{x}\Vert_2^2/(2\rho_k^2)} \pi(\rmd x) \propto \pi_{t_k}\left(\rmd x\left\vert t_k\tilde{x}\right)\right.\eqsp,
\end{equation}
setting $t_k\coloneqq 1/\rho_k^2$: the diffusion process is simulated from $t_k$ onward, with initial state $t_k\tilde{x}$. As $\rho_k$ approaches zero, $x\approx\tilde{x}$ is approximately distributed as $\pi^y$.

A related approach is DAPS \citep{zhang_improving_2025}: its structure resembles that of PnP-DM, but includes a third, \emph{noising} step that decouples successive samples to promote a larger exploration of the posterior distribution. We describe DAPS using the diffusion model of \Cref{sec:diffusion_gaussian}, which slightly modifies the formulation of the original paper (which relied on a forward/backward diffusion formalism).
This algorithm revolves around the sequence of distributions 
\begin{equation}
\label{eq:daps_distribution}
    \pi^y_{t_k}(\rmd x\vert x_{t_k}) \propto g(y\vert x)\pi_{t_k}(\rmd x\vert x_{t_k})\eqsp,
\end{equation}
on a time-grid $t_1<\dots t_N$.
DAPS uses the approximation $\pi_{t}(\rmd x\vert x_t) \approx \gauss(\hat{x}(x_{t}),r_{t}^2\Idd)$, see \citep[Equation 5]{zhang_improving_2025}. Here, $\hat{x}(x_t)$ denotes an estimator of $X$ given $X_t=x_t$:
it can be obtained as $D_T^\pi(x_T)$, where $x_T$ is computed by numerical integration of \eqref{eq:diffusion_sde} from $t$ up to a terminal time $T$ using a given time-grid. Using only one denoising step results in the simple estimator $\hat{x}(x_t) = D_t^\pi(x_t)$; this design choice is discussed in \citep[Appendix I]{zhang_improving_2025}. By analogy with PnP-DM, we refer to the estimation of $\hat{x}(x_t)$, for a given $t=t_k$, as the \emph{prior step}. Then, the \emph{likelihood step} consists of sampling from 
\begin{equation}
    \label{eq:daps_likelihood_gaus}
    \pi^y_{t_k}(\rmd x\vert x_{t_k}) \propto g(y\vert x)\gauss(\hat{x}(x_{t_k}),r_{t_k}^2\Idd)
\end{equation}
with an MCMC scheme. Finally, given the output $x$ of the likelihood step, one draws $x_{t_{k+1}} = t_{k+1}x + \sqrt{t_{k+1}}G_{k+1}$, with $G_{k+1}\sim\gauss(0,\Idd)$. This \emph{noising step} is meant to decouple the successive $x_{t_k}$: instead of generating a single trajectory of the denoising process $\bfX^\rmd$ \eqref{eq:diffusion_sde}, we sample a ``fresh" noisy intermediate $x_{t_{k+1}}$ through the noising process $\bfX^\rmn$.

\paragraph{Gaussian denoising.} Diffusion-based posterior sampling methods often exploit the Gaussian nature of the inverse problem, leveraging the matching noise structure between the diffusion and the inverse problem; \eg, \citep{song_pseudoinverse-guided_2022,kawar_denoising_2022}. 
This correspondence is best illustrated
when the problem consists of pure Gaussian noise, \ie, $\mca=a\Idd$ with $a\in\rset$: then,
\begin{equation}
    \label{eq:denoiser_tilt_gaussian}
    D_t^{\pi^y}(x_t) = D^\pi_{t+(a/\sigma)^2}\left(x_t+ay/\sigma^2\right)\eqsp.
\end{equation}
In other words, sampling from the posterior distribution $\pi^y$ is possible by simulating \eqref{eq:diffusion_sde} while shifting the inputs of $D^\pi$, without any further approximation. 

\paragraph{Poisson inverse problems.} 
If $X\sim\pi$ with $\pi$ supported on $\rset_+^d$ and $\mca:\rset_+^d\rightarrow \rset_+^{d_y}$, a Poisson inverse problem consists of sampling from the posterior distribution $\pi^y\propto g(y\vert x)\pi(\rmd x)$ of $X$ given $Y=y$, where  $Y\sim\poisson(\mca(X))$, \ie,
\begin{equation}
\label{eq:posterior_poisson}
    g(y\vert x) \propto \prod_{i=1}^{d_y} \rme^{-[\mca(x)]_i}[\mca(x)]_i^{y_i}\eqsp.
\end{equation}
This class of problems is mostly excluded from the aforementioned posterior sampling methods. Even DPS,  whose approximation \eqref{eq:approx_dps} theoretically holds for any likelihood $g$, eventually replaces the Poisson likelihood by a ``Gaussianized" one due to poor numerical stability; see \citep[Appendix C.4]{chung_diffusion_2024}.
This approximation is only valid when the measured values $y_i$ are large, excluding the regime where some $y_i$ are close to or equal to zero. Gibbs-sampling methods can theoretically handle the Poisson likelihood, but the likelihood step \eqref{eq:pnpdm_gaussian} requires a non-trivial MCMC sampling of a density that is supported on $\rset_+^d$. Moreover, standard gradient-based methods such as Langevin MCMC are problematic near the boundary of the positive orthant, since the Poisson log-likelihood has a singular gradient when $[\mca(x)]_i\rightarrow0$ for $y_i>0$; and unconstrained Gaussian updates can leave the domain.

In stark contrast with the Gaussian case, even when $\mca = a\Idd$, there is no straightforward diffusion-based posterior sampling method. Leveraging \Cref{th:twist}, PDPS precisely fills this gap.

\section{Algorithmics details}
\label{app:algo}
\subsection{Training}
\label{app:training}
\begin{algorithm}[H]
\caption{Training}
\label{alg:train}
\begin{algorithmic}[1]
\Require Dataset $\mathcal{D}$, trainable network $k^\theta$, strictly convex function $\phi:\rset_+^d\rightarrow \rset$, mini-batch size $B$, number of epoch $N$, probability measure $\nu(\rmd t)$ on $\rset_+$ and weight function $w$.
\State Initialize $\theta$
\Repeat
\State Draw mini-batch $x_1,\dots,x_B\in\mathcal{D}$
\State Sample $t_1,\dots,t_B \sim \nu(\rmd t)$
\State Draw $z_i\sim\poisson(t_ix_i), 1\leqslant i\leqslant B$ \Comment{Noising}
\State Compute $k^\theta_{t_i}(z_{i}), 1\leqslant i\leqslant B$ \Comment{Denoising}
\State Perform gradient descent step on $\sum_{i=1}^B \breg_\phi(x_i,k^\theta_{t_i}(z_i))w(t_i)$ \Comment{Minimization of $\mathscr{L}(\theta)$}
\Until{convergence or target number of epochs is reached}
\State \Return learned model $k^\theta$
\end{algorithmic}
\end{algorithm}
For completeness, we provide the training pseudo-code that outputs an approximation of the Poisson denoiser, $k^\theta\approx \Denois^\pi$. This algorithm is similar to the one proposed by \citet{bhattacharya_itdpdm_2026,chen_learning_2023}, albeit with a generic Bregman divergence instead of \eqref{eq:bregman}. For our numerical experiments, we observed that using the usual Euclidean distance instead of \eqref{eq:bregman} had little influence on the learned model performance. The distribution $\nu(\rmd t)$ can be either discrete, if one fixes a time-grid, or continuous, \eg, uniform on some $[t_{\mathrm{min}}, t_{\mathrm{max}}]$. In practice, we set $\nu$ as the log-uniform distribution on $[t_{\mathrm{min}}, t_{\mathrm{max}}]$ and the weight function $t\mapsto w(t)=t$, similar to \citet{bhattacharya_itdpdm_2026}. This increases the number of low-SNR (\ie, low $t$) inputs $z_t$ while preserving the training objective associated with uniform $t$:
\begin{equation} \mathscr{L}(\theta)=\int_{t_\mathrm{min}}^{t_\mathrm{max}}\PE[\breg_\phi(X,k_t^\theta(Z_t))]\rmd t \propto \int_{t_\mathrm{min}}^{t_\mathrm{max}}w(t)\PE[\breg_\phi(X,k_t^\theta(Z_t))]\nu(\rmd t)\eqsp.
\end{equation}
We detail the exact parameters used in our numerical experiments in \Cref{app:expe}.

\subsection{Poisson denoising with PDPS}
\label{app:poisson_denoising}

PDPS (\Cref{alg:poisson_denoising}) is a relatively straightforward implementation of \Cref{th:twist} that solves the inverse problem defined by $Y\sim\poisson(aX), X\sim\pi$ by leveraging a pre-trained MPDM. We provide the proof and pseudo-code below, and show a sample trajectory in \Cref{fig:eagle_pdps}.

{\thpoisson*}
\begin{proof}
    Fix $t>0$ and $z\in\rset_+^d$. Recall that, from
    \Cref{prop:denoiser}, we have
    \begin{equation}
    \label{eq:denois_multid}
        \Denois_t^{\pi}(z) =  \left.\int_{\rset_+^d} x\prod_{j=1}^dx_j^{z_j}\rme^{-tx_j} \pi(\rmd x)\right/\int_{\rset_+^d} \prod_{j=1}^dx_j^{z_j}\rme^{-tx_j} \pi(\rmd x)\eqsp.
    \end{equation}
    Similarly, the denoiser associated to $\mathrm{MPP}(\pi^y)$ writes
    \begin{align}
        \Denois_t^{\pi^y}(z) &= \left.\int_{\rset_+^d} x\prod_{j=1}^dx_j^{z_j}\rme^{-tx_j} \pi^y(\rmd x)\right/\int_{\rset_+^d} \prod_{j=1}^dx_j^{z_j}\rme^{-tx_j} \pi^y(\rmd x) \\
        &= \left.\int_{\rset_+^d} x\prod_{j=1}^dx_j^{z_j}\rme^{-tx_j} \prod_{k=1}^d \rme^{-ax_k}(ax_k)^{y_k}\pi(\rmd x)\right/\int_{\rset_+^d} \prod_{j=1}^dx_j^{z_j}\rme^{-tx_j}  \prod_{k=1}^d \rme^{-ax_k}(ax_k)^{y_k}\pi(\rmd x) \\
        &= \left.\int_{\rset_+^d} x\prod_{j=1}^dx_j^{z_j+y_j}\rme^{-(t+a)x_j} \pi(\rmd x)\right/\int_{\rset_+^d} \prod_{j=1}^dx_j^{z_j+y_j}\rme^{-(t+a)x_j}  \pi(\rmd x)\eqsp,
    \end{align}
    using $\pi^y(\rmd x)\propto g(y\vert x)\pi(\rmd x)\propto\prod_{k=1}^d\rme^{-ax_k}(ax_k)^{y_k}\pi(\rmd x)$ . We recognize the expression of $\Denois_{t+a}^{\pi}(z+y)$, concluding the proof.
\end{proof}

\begin{figure*}[t]
\centering
\begin{algorithm}[H]
\caption{Poisson denoising posterior sampling (PDPS)}
\label{alg:poisson_denoising}
\begin{algorithmic}[1]
\Require Learned denoiser $k^\theta\approx \Denois^\pi$, rate $a>0$, observation $y\in\N^d$, time-grid $0=t_0<t_1<\dots<t_N$, with $\Delta_k = t_{k+1}-t_k$
\State  Initialize $z_0 = 0$
\For{$k=0,\dots,N-1$}
    \State Estimate $\Denois^{\pi^y}_{t_k}(z_k)\approx \tilde{\Denois}_{t_k}^{\pi^y}(z_k)=k^\theta_{t_k+a}(z_k+y)$ \Comment{\Cref{th:twist}}
    \State Draw $n_k\sim\poisson(\Delta_k\tilde{\Denois}_{t_k}^{\pi^y}(z_k))$ 
    \State Set $z_{k+1}=z_k+n_k$ \Comment{Discretization scheme (\Cref{alg:discretization_1d})}
\EndFor
\State \Return $z_N/t_N$ or $k^\theta_{t_{N-1}+a} (z_{N-1}+y)$ \Comment{See \Cref{rmk:early}}
\end{algorithmic}
\end{algorithm}
\end{figure*}

\begin{figure}
    \centering
    \includegraphics[width=\linewidth]{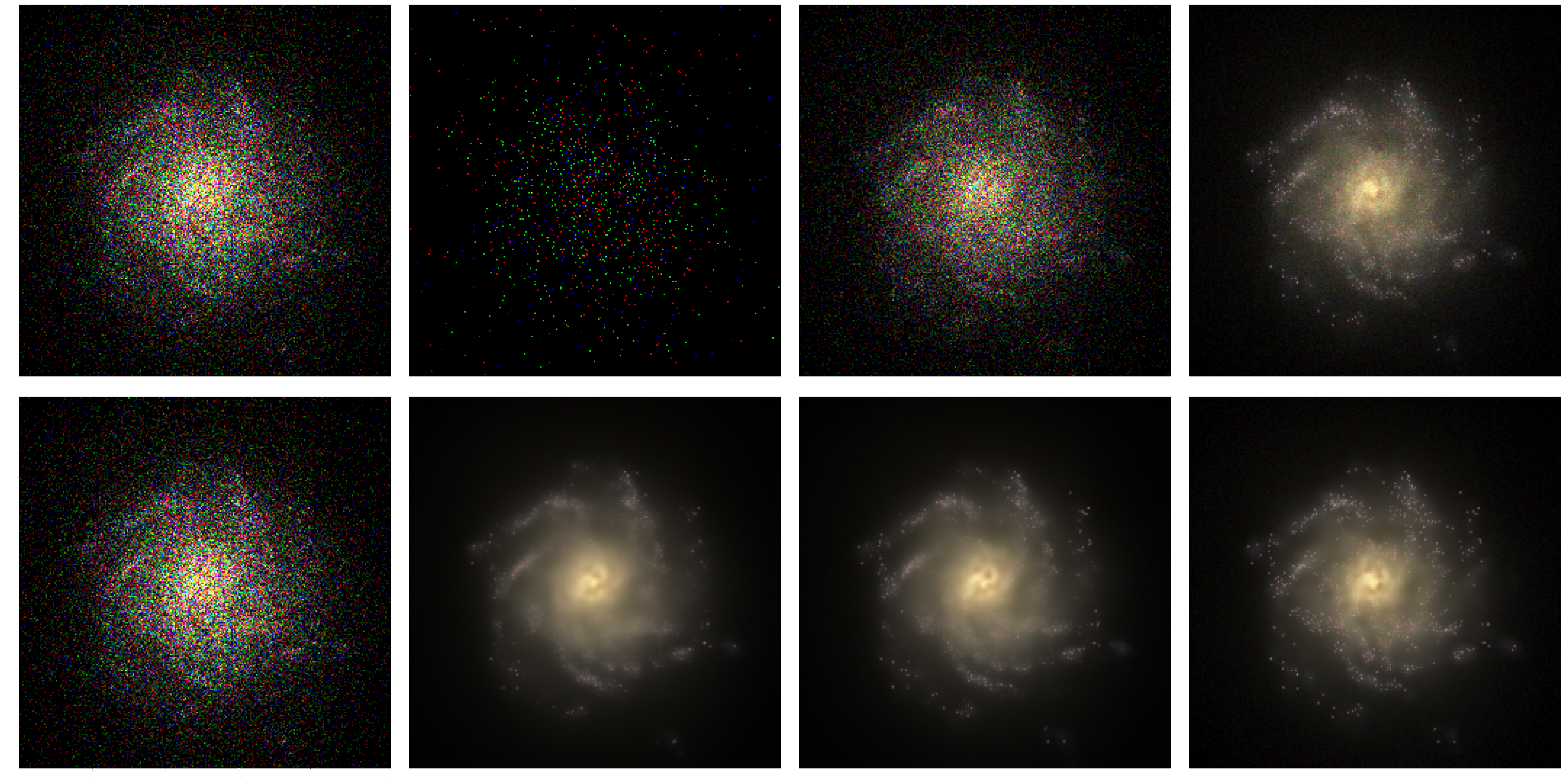}
    \caption{Intermediate states of PDPS for the Poisson denoising problem on the EAGLE dataset. \textbf{(Top-left, bottom-left)} Observation $y\sim\poisson(ax)$, with $x$ an image of the validation dataset. \textbf{(Top row)} $Z^y_t/(1+t)$, with $\bfZ^y\sim \mathrm{MPP}(\pi^y)$, for increasing values of $t$. This MPP is simulated using the Poisson denoiser approximation $\Denois^{\pi^y}(Z^y_t)\approx k^\theta(Z_t^y+y)$. \textbf{(Bottom-row)} The corresponding Poisson denoiser outputs $\Denois^{\pi^y}(Z^y_t)\approx k^\theta(Z_t^y+y)$.}
    \label{fig:eagle_pdps}
\end{figure}

\subsection{MPDM-based DPS}
\label{app:dps_poisson}

We provide additional details regarding the MPDM adaptation of DPS.

\begin{figure*}[t]
\centering
\begin{algorithm}[H]
\caption{MPDM adaptation of DPS}
\label{alg:poisson_dps}
\begin{algorithmic}[1]
\Require Learned denoiser $k^\theta\approx \Denois^\pi$, likelihood $g(y\vert \cdot)$, measurement $y$, time-schedule $0=t_0<t_1<\dots<t_N$ with $\Delta_k=t_{k+1}-t_k$.
\State  Initialize $z_0 = 0$
\For{$k=0,\dots,N-1$}
    \State Compute $k_{t_k}^\theta(z_k)$
    \State Estimate $\Denois^{\pi^y}_{t_k}(z_k)\approx \tilde{\Denois}_{t_k}^{\pi^y}(z_k)$ using $g(y\vert\cdot)$ and \eqref{eq:dps_poisson_approx} (either the continuous approximation \newline
    \hspace*{5em} or the independent coordinates approximation)
    \State Draw $n_k\sim\poisson(\Delta_k\tilde{\Denois}_{t_k}^{\pi^y}(z_k))$
    \State Set $z_{k+1}=z_k+n_k$ \Comment{Discretization scheme (\Cref{alg:discretization_1d})}
\EndFor
\State \Return $z_N/t_N$ or $\tilde{\Denois}_{t_{N-1}}^{\pi^y}(z_{N-1})$ \Comment{See \Cref{rmk:early}}
\end{algorithmic}
\end{algorithm}
\end{figure*}

{\dpspoisson*}
\begin{proof}
Let $\bfZ\sim\MPPpi$.
    As seen in the proof of \Cref{prop:denoiser}, we have for every $z_t\in\N^d,t>0$,
    \begin{equation}
    \label{eq:prop_ratio}
        [\Denois_t^\pi(z_t)]_i = \frac{z_t^i+1}{t}\frac{\PP(Z_t=z_t+e_i)}{\PP(Z_t=z_t)}\eqsp,
    \end{equation}
    where $z_t^i$ denotes the $i$-th coordinate of $z_t$.
    If $\bfZ^y\sim\mathrm{MPP}(\pi^y)$, we have 
    \begin{align}
        \PP(Z_t^y=z_t) &= \int_{\rset_+^d} \prod_{j=1}^d\rme^{-tx_j}\frac{(tx_j)^{z_t^j}}{z_t^j!} \pi^y(\rmd x) \\ &
        \propto \int_{\rset_+^d} \prod_{j=1}^d\rme^{-tx_j}\frac{(tx_j)^{z_t^j}}{z_t^j!} g(y\vert x)\pi(\rmd x) \\
        &\propto \PP(Z_t=z)\int_{\rset_+^d} g(y\vert x)\pi_t(\rmd x\vert z_t)\eqsp,
    \end{align}
    using $\pi^y(\rmd)\propto g(y\vert x)\pi(\rmd x)$ and $p_t(z_t)\pi_t(\rmd x\vert z_t) = p_t(z_t\vert x)\pi(\rmd x)$.
    Combining \eqref{eq:prop_ratio} and $g_t(y\vert z_t)=\int_{\rset_+^d} g(y\vert x)\pi_t(\rmd x\vert z_t)$, this yields 
    \begin{equation}
        [\Denois_t^{\pi^y}(z_t)]_i =  [\Denois_t^{\pi}(z_t)]_i\frac{g_t(y\vert z_t+e_i)}{g_t(y\vert z_t)} = [\Denois_t^{\pi}(z_t)]_i\left(1+ \frac{g_t(y\vert z_t+e_i)-g_t(y\vert z_t)}{g_t(y\vert z_t)}\right) \eqsp.
    \end{equation}
\end{proof}

\begin{figure}
    \centering
    \includegraphics[width=\linewidth]{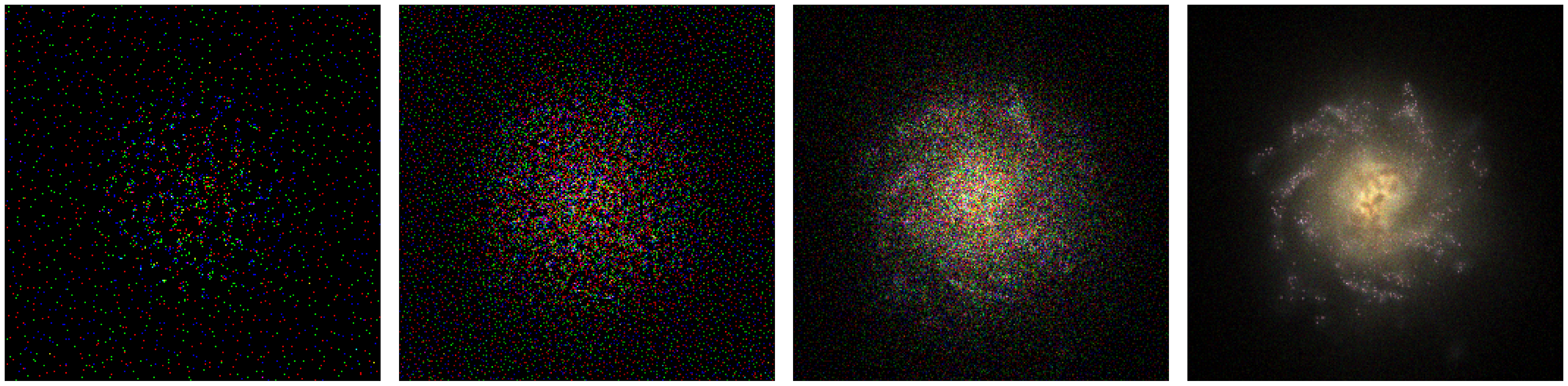}
    \caption{Trajectory generated by MDPM-based DPS, for the Poisson denoising problem. Images show $Z^y_t/(1+t)$, with $\bfZ^y\sim \mathrm{MPP}(\pi^y)$, for increasing values of $t$. This MPP is simulated using the Poisson denoiser approximation of \Cref{th:approx_dps}, with the discrete derivative being approximated by the continuous one.}
    \label{fig:eagle_poisson_dps}
\end{figure}

\paragraph{Estimation of the discrete derivative.}
Computing the discrete derivative resulting from the DPS approximation, that is, equation \eqref{eq:dps_poisson_approx}, would require $d$ denoiser evaluations, which is prohibitive in most applications. Thus, we suggest two approximations. 

The first replaces the discrete derivative with a continuous one, using \Cref{prop:denoiser} to extend the domain of definition of $\Denois^\pi_t$ to $\rset_+^d$.
We motivate this approximation by noting that, by construction of the MPP, for a relatively large $t>0$, $\Denois^\pi_t$ is typically evaluated with an input $z_t\approx tx$, with $x = \mathcal{O}(1)$. Then, $\Denois^\pi_t(z_t+e_i)\approx \Denois^\pi_t(t(x+e_i/t)) \approx \Denois^\pi(tx)+\partial_i \Denois_t^\pi(tx)$, hence $\Denois^\pi_t(z_t+e_i)-\Denois^\pi_t(z_t) \approx \partial_i\Denois^\pi_t(z_t)$.

If for any $1\leqslant i\leqslant d$, $z_t\mapsto [\Denois^\pi_t(z_t)]_i$ only depends on $z_t^i$, we may evaluate the discrete derivative with only two denoiser evaluations: indeed, $[\Denois^\pi_t(z_t +e_j)]_i-[\Denois^\pi_t(z_t)]_i = ([\Denois^\pi_t(z_t+\mathsf{1})]_i-[\Denois^\pi_t(z_t)]_i)\times\delta_{i,j}$ in that case. This condition holds notably if $\pi=\pi_1\otimes\dots\otimes\pi_d$. However, this assumption is generally too restrictive in practice and therefore introduces an approximation error. 

As noted in \Cref{ex:exact_derivative}, both approximations are in fact
exact when $\pi=\gamma\sim\loigamma(\alpha,\beta)$, as $[\Denois^\gamma_t(z_t)]_i = (z_t^i+\alpha_i)/(\beta_i+t)$.
Since the continuous approximation can be justified for large $t$, a reasonable strategy would be to use it for large $t$, and default to the independent coordinates approximation when $t$ is close to $0$. We illustrate this compromise in \Cref{fig:discrete_derivative}, where we compare the quality of both approximations on tractable examples. For the numerical experiments of \Cref{sec:expe}, we observed that the continuous approximation generally led to better empirical results, but that starting with the independent coordinates approximation for small $t$ usually slightly improved results; see \Cref{app:settings}. These observations suggest that the continuous derivative should be used by default, while the independent coordinate approximation can be relevant for small $t$, depending on the application.

\begin{figure}
    \centering
    \includegraphics[width=\linewidth]{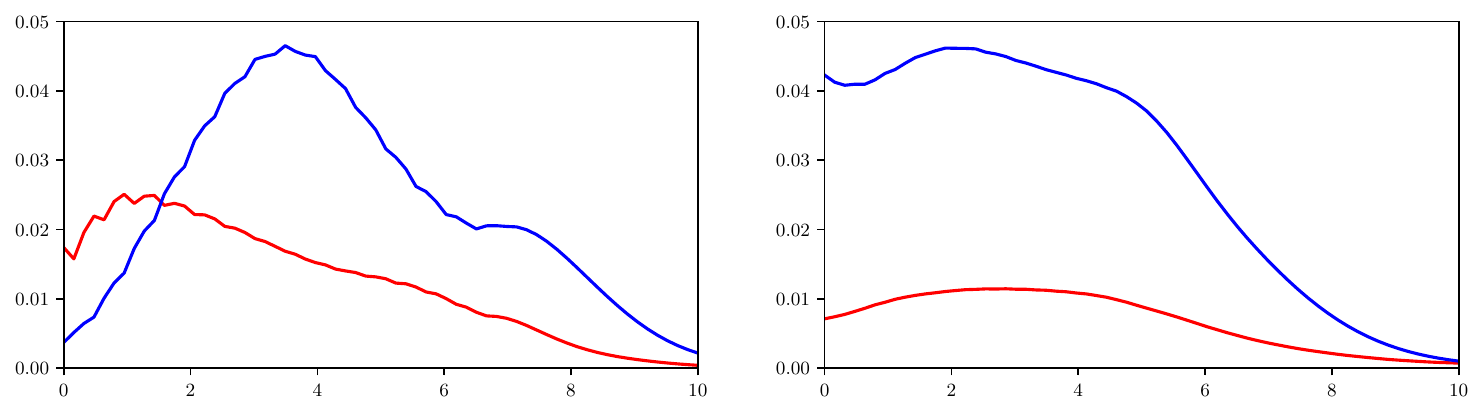}
    \caption{Errors associated with the approximation of the Poisson denoiser $\Denois^\pi_t$  discrete derivative, for $0\leqslant t\leqslant 10$, when $\pi$ is a mixture of bivariate Gamma distributions. \textcolor{red}{Red lines} show the total error associated with the continuous approximation, \ie, $1/4\sum_{i,j=1}^2 \vert [\Denois^\pi_t(z_t+e_j)]_i - [\Denois^\pi_t(z_t)]_i-\partial_j [\Denois^\pi_t(z_t)]_i\vert$. \textcolor{blue}{Blue lines} show the error associated with the independent coordinate approximations, that is, $1/4\sum_{i,j=1}^2 \vert  [\Denois^\pi_t(z_t+e_j)]_i - [\Denois^\pi_t(z_t)]_i - ([\Denois^\pi_t(z_t+\mathbf{1})]_i- [\Denois^\pi_t(z_t)]_i)\times\delta_{ij}\vert$. The errors are averaged over $z_t = (k,l), 1\leqslant k,l\leqslant 15$. 
    \textbf{(Left)} $\pi$ is the distribution used in the toy example; see \Cref{app:settings}. We observe that the independent coordinates approximation is preferable in the low $t$ regime, whereas the continuous approximation gets significantly better as $t$ increases. 
    \textbf{(Right)} we modify $\pi$ to be less ``symmetric",  setting $\alpha_1 = (16,16), \alpha_2 = (64,64), \alpha_3=(32,64),\alpha_4=(32,16), \alpha_5=(20,20)$. In this case, the independent coordinates approximation exhibits a substantially larger error.} 
    \label{fig:discrete_derivative}
\end{figure}

\subsection{MPDM-based PnP-DM}
\label{app:pnpdm}
Consider the following family of distributions. For clarity, we keep the notation $\rho_k$ of the original PnP-DM paper; however, in practice we set $t_k=1/\rho_k^2$, for a time-grid $0\approx t_0<t_1<\dots<t_N$: 
\begin{equation}
    \pi^y_k(\rmd x, \rmd \tilde{x})
    \propto
    g(y\vert \tilde{x})
    \exp\left(-\mathscr{D}_{\mathrm{e}}(\tilde{x},x)/\rho_k^2\right)
    \pi(\rmd x)\rmd \tilde{x}\eqsp.
\end{equation}
The marginal in $x$ is heuristically expected to approach the posterior $\pi^y$ as $t_k\rightarrow \infty$. We do not claim any exact convergence: this expression is motivated by the analogy existing with the Gaussian diffusion case, as \Cref{prop:split_gibbs} shows that the denoising distribution of $\MPPpi$ naturally appears within the prior step. However, a properly normalized kernel of this form generally involves an additional normalization factor, potentially depending on $x$, which is omitted here.
\paragraph{Prior step.}
 \Cref{prop:split_gibbs} shows that $\pi^y_k(\rmd x \vert \tilde{x}) 
\propto\pi_{t_k}\left(\rmd x
\left\vert t_k\tilde{x}\right.\right)$, setting $t_k\coloneqq 1/\rho_k^2$. Recall that $\pi_{t}(\rmd x\vert z_t)$ denotes the conditional distribution of $X$ given $Z_t=z_t$, with $\bfZ$ the MPP with rate $X$. A priori, it is defined for $z_t\in\N^d$; however, equation \eqref{eq:denoising_dist} allows us to extend its domain of definition to $\rset_+^d$.
A sample from $\pi^y_k(\rmd x\vert \tilde{x})$ can be obtained by simulating this MPP, using a slightly modified version of \Cref{alg:discretization_1d}. 
Instead of starting with $t_0=0, \tilde{Z}_0=0$, consider a time-grid $t_k = t_{k,0} <\dots< t_{k,N}$, and an initial value $\tilde{Z}_{t_k} = t_k\tilde{x}$. Again, $t_k\tilde{x}$ does not necessarily belong to $\N^d$: one solution is to round each coordinate to the nearest integer. However, note that the discretization scheme of \Cref{alg:discretization_1d} does not explicitly require the initial state to be integer-valued. Indeed, as noted already, \Cref{prop:denoiser} shows that the Poisson denoiser $\Denois_t^\pi$, for a fixed $t$, can be extended to $\rset_+^d$-valued inputs. Furthermore, as explained in \Cref{app:training_details}, the learned denoiser $k^\theta_t$ is typically trained using rescaled inputs of the form $z_t/(1+t)$ to compensate for the $Z_t=\mathcal{O}(t)$ scaling of the MPP. Thus, the learned denoiser is naturally compatible with non-integer inputs: in our numerical experiments, we observed no substantial difference when keeping $t_k\tilde{x}$ instead of rounding it.
{\pnpdm*}
\begin{proof}
    As by definition $\breg_\phi(x,y) = \sum_{i=1}^dx_i\log(x_i/y_i)-x_i+y_i$, we have 
    \begin{align}
        \exp\left(-t\breg_\phi(tz_t,x)\right) &= \exp\left(-t\sum_{i=1}^d \frac{z_t^i}{t}\log\left(\frac{z_t^i/t}{x_i}\right)-\frac{z_t^i}{t}+x_i\right) \\
        &= \exp\left(\sum_{i=1}^d z_t^i\log(x_i)-tx_i\right)\times C(z_t)\\
        &\propto \prod_{i=1}^d\rme^{-tx_i}x_i^{z_t^i}\eqsp,
    \end{align}
    where $C(z_t)$ does not depend on $x$. We recognize the expression of $\pi_t(\rmd x\vert z_t)$ \eqref{eq:denoising_dist}.
\end{proof}

\paragraph{Likelihood step.}
The likelihood step, \ie, sampling from $\pi^y_k(\rmd \tilde{x}\vert x)$, requires an MCMC procedure. To be precise, from \eqref{eq:pnpdm_poisson}, \eqref{eq:bregman}, we obtain
\begin{equation}
    \pi^y_k(\rmd \tilde{x}\vert x) \propto g(y\vert \tilde{x})\exp\left(-t_k\sum_{i=1}^d \tilde{x}_i\log(\tilde{x}_i/x_i)-\tilde{x}_i\right)\rmd \tilde{x} \eqqcolon q_y^k(\tilde{x}\vert x)\rmd \tilde{x}\eqsp,
\end{equation}
and the associated score (gradient log with respect to $\tilde{x}$) is
\begin{equation}
    \nabla \log q_y^k(\tilde{x}\vert x) = \nabla\log g(y\vert \tilde{x}) -t_k\sum_{i=1}^d(\log(\tilde{x}_i)-\log(x_i))e_i\eqsp,
\end{equation}
with $e_i = (\delta_{i,j})_{1\leqslant j\leqslant d}$.
However, this score is not well behaved when $\tilde{x}$ has coordinates close to zero. Furthermore, we need to enforce the constraints $\tilde{x}\in\rset_+^d$.
We address these points by relying on a  Riemannian Langevin MCMC method; see \citep[Section 5]{girolami_riemann_2011}. More precisely, we consider the manifold $(0,+\infty)^d$ equipped with the Riemannian $\metric$ defined as $\metric_x = \mathrm{diag}(1/x_1,\dots,1/x_d)$. This metric is actually the \emph{Fisher information matrix} associated with the Poisson distribution; see \citep[Section 4]{girolami_riemann_2011}. With this metric, the Langevin SDE has the form
\begin{equation}
    \label{eq:langevin_riemannian}
    \rmd \tilde{X}_t = \left(\tilde{X}_t\odot\nabla \log q_y^k(\tilde{X}_t\vert x)+ \mathsf{1}\right)\rmd t + \sqrt{2\tilde{X}_t}\odot\rmd B_t\eqsp,
\end{equation}
with $\odot$ the coordinate-wise product and $\mathsf{1}=(1,\dots,1)\in\rset^d$. The Euler discretization of this SDE yields \Cref{alg:rla}. In particular, the $j$-th Langevin MCMC update has the form
\begin{equation}
    \tilde{x}^{j+1} = \tilde{x}^j + \delta \times \tilde{x}^j\odot \nabla\log g(y\vert \tilde{x}^j) -t_k \delta\times \sum_{i=1}^d\tilde{x}_i^j(\log(\tilde{x}_i^j) -\log(x_i))e_i + \delta\times \mathsf{1} + \sqrt{2\delta\tilde{x}^j}\odot G_j\eqsp,
\end{equation}
with $\delta>0$ denoting the step-size and $G_j\sim\gauss(0,\Idd)$. Note that $\tilde{x}_i^j\log(\tilde{x}_i^j)$ is well behaved near zero.

While the Riemannian Langevin SDE is intrinsically defined on $(0,+\infty)^d$, its discretization may lead to negative outputs, especially if a large step-size $\delta$ is used; in practice, we clamp each update to guarantee positivity. Finally, \Cref{alg:pnp_dm} includes an optional hyperparameter $\eta$, which is not present in the original PnP-DM algorithm. This addition is justified by a similar hyperparameter within the original DAPS algorithm; see below.

\begin{figure}
    \centering
    \includegraphics[width=\linewidth]{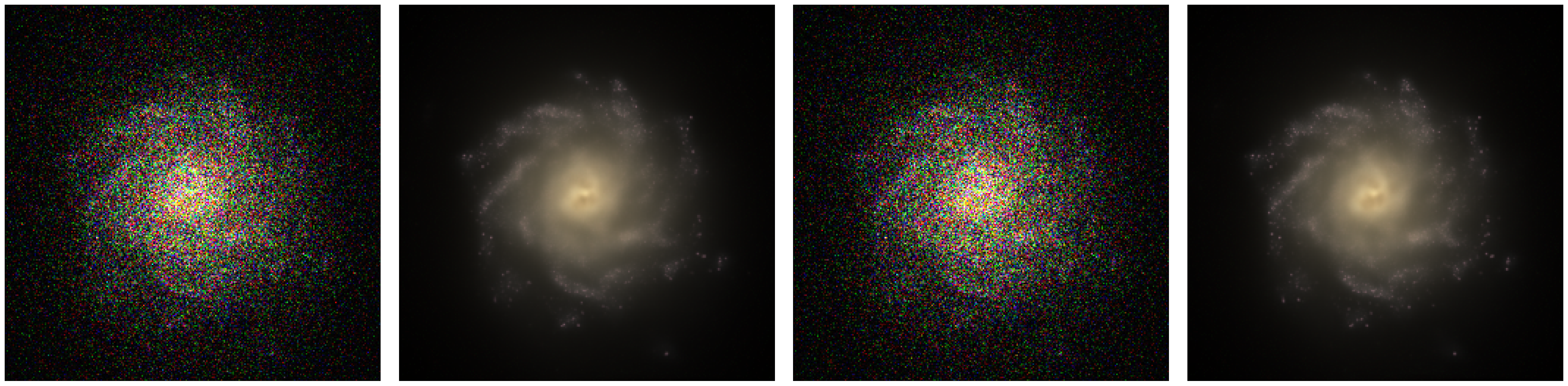}
    \caption{Same setting as \Cref{fig:eagle_pdps,fig:eagle_poisson_dps}. Successive likelihood and prior steps (images 1,3 and 2,4, respectively) of MPDM-based PnP-DM.}
    \label{fig:eagle_poisson_ppnp}
\end{figure}

\begin{figure*}[t]
\centering
\begin{algorithm}[H]
\caption{MPDM adaptation of PnP-DM}
\label{alg:pnp_dm}
\begin{algorithmic}[1]
\Require Learned denoiser $k^\theta\approx \Denois^\pi$, likelihood $g(y\vert \cdot)$, measurement $y$, schedule $t_0<t_1<\dots<t_N$; optional: $\eta>0$, default $\eta=1$.
\State  Initialize $x$
\For{$k=0,\dots,N-1$}
    \State Sample $\tilde{x}$ from $
        \pi^y_k(\rmd \tilde{x} \vert x) \propto g(y\vert \tilde{x})^\eta\exp\left(-t_k\breg_{\rme}(\tilde{x},x)\right)\rmd \tilde{x}$ with \Cref{alg:rla} \newline \hspace*{2em} where $\breg_{\rme}(\tilde{x},x) = \sum_{i=1}^d\tilde{x}_i\log(\tilde{x}_i/x_i)-\tilde{x}_i+x_i$ \Comment{Likelihood step}

    \State Sample $x$ from $
        \pi^y_k(\rmd x \vert \tilde{x})$ by simulating the counting process with intensity $k^\theta$ \newline \hspace*{2em} with initial state $t_k\tilde{x}$ at time $t_k$; see \Cref{alg:discretization_1d} \Comment{Prior step}
\EndFor
\State \Return $x$
\end{algorithmic}
\end{algorithm}
\end{figure*}

\begin{figure*}
\begin{algorithm}[H]
\caption{Riemannian Langevin MCMC algorithm on $\rset_+^d$}
\label{alg:rla}
\begin{algorithmic}[1]
\Require Score function $s:\rset_+^d \rightarrow\rset^d$, time-step $\delta>0$, number of iterations $N$, initial state $x_0\in\rset_+^d$.
\For{$k=0,\dots,N-1$}
    \State Sample $G_k\sim\gauss(0,\Idd)$
    \State Set $x_{k+1} = x_k + \delta x_k \odot s(x_k)+\delta\mathbf{1} + \sqrt{2\delta x_k}\odot G_k$, with $\odot$  the component-wise product
    \State Set $x_{k+1} = (x_{k+1})_+$ \Comment{Enforce positivity}
\EndFor
\State \Return $x_N$
\end{algorithmic}
\end{algorithm}
\end{figure*}

\begin{figure*}
\begin{algorithm}[H]
\caption{MPDM adaptation of DAPS}
\label{alg:poisson_daps}
\begin{algorithmic}[1]
\Require Learned denoiser $k^\theta\approx \Denois^\pi$, likelihood $g(y\vert \cdot)$, measurement $y$, schedule $t_0<t_1<\dots<t_N$; optional: $\eta>0$, default $\eta=1$.
\State  Initialize $x$
\For{$k=0,\dots,N-1$}
\State Sample $z_k\sim\poisson(t_k x)$ \Comment{Noising step}
    \State Compute $\hat{x} = k^\theta_{t_k}(z_k)$ (or simulate the counting process with intensity $k^\theta$ \newline \hspace*{2em} with initial state $z_k$ at time $t_k$ with \Cref{alg:discretization_1d}) \Comment{Prior step}
    \State Sample $x$ from the density $g(y\vert x)^\eta\loigamma(x;/r_{t_k}^{-2}\hat{x}, r_{t_k}^{-2}\mathsf{1})\rmd x$ \newline \hspace*{2em} with \Cref{alg:rla} \Comment{Likelihood step}
\EndFor
\State \Return $x$
\end{algorithmic}
\end{algorithm}
\end{figure*}

\subsection{MPDM-based DAPS}
\label{app:daps}
Our MPDM version of DAPS \citep{zhang_improving_2025} is closely related to the original Gaussian diffusion version, described in \Cref{sec:inverse_gaussian}.
We introduce, given 
a time-grid $t_0<t_1<\dots t_N$,
\begin{equation}
\label{eq:daps_distribution_poisson}
    \pi^y_{t_k}(\rmd x\vert z_{t_k}) \propto g(y\vert x)\pi_{t_k}(\rmd x\vert z_{t_k})\eqsp.
\end{equation}
This is mostly the same equation as \eqref{eq:daps_distribution}, but here $\pi_t(\rmd x\vert z_t)$ is the conditional distribution associated with the MPP, defined in \eqref{eq:denoising_dist}.

\begin{figure}
    \centering
    \includegraphics[width=\linewidth]{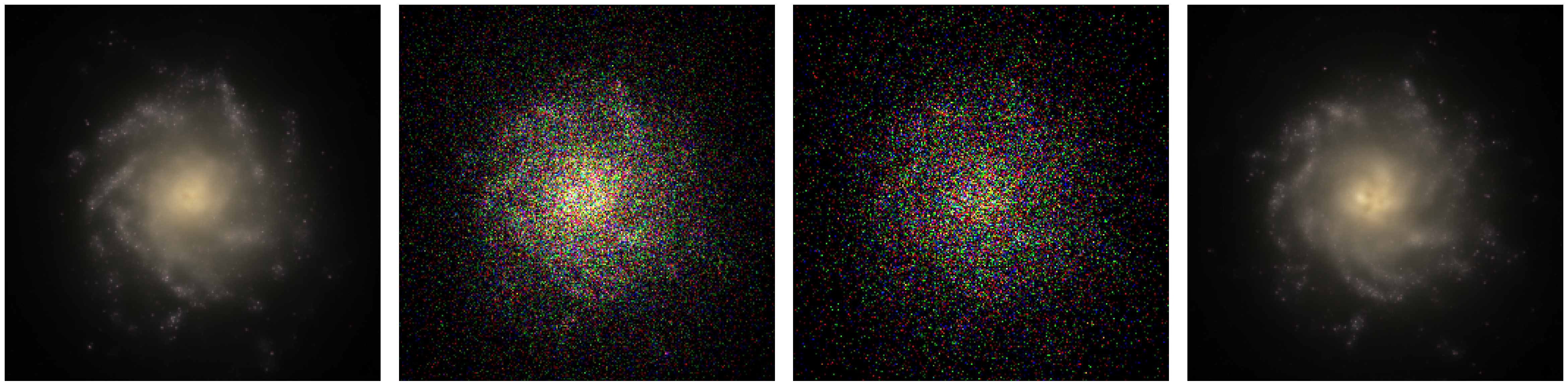}
    \caption{Same setting as \Cref{fig:eagle_pdps,fig:eagle_poisson_dps,fig:eagle_poisson_ppnp}. Successive prior steps (images 1,4), likelihood step (image 2), and noising step (image 3) for MPDM-based DAPS. For the noising step, we rescale by a factor of $1/(1+t_k)$.}
    \label{fig:placeholder}
\end{figure}

\paragraph{Prior step.}
Instead of a Gaussian approximation, we use a Gamma one: 
\begin{equation}
    \pi_{t}(\rmd x\vert z_t) \approx \loigamma(r_t^{-2}\hat{x}(z_{t}),r_{t}^{-2}\mathsf{1})\eqsp,
\end{equation}
where $\hat{x}(z_t)$ is either obtained by simulation of the MPP from $z_t$ onward, or directly as $\Denois^\pi_t(z_t)$. This latter choice is actually exact when $\pi$ is Gamma distributed.
\begin{example}
Note that when $\pi=\gamma\sim\loigamma(\alpha,\beta\mathsf{1}),\alpha \in(0,+\infty)^d, \beta>0$, from \Cref{app:gamma_case}, \eqref{eq:denoising_dist_gamma}, 
\begin{equation}
    \pi_t(\rmd x\vert z_t) \sim \loigamma(z+\alpha, (t+\beta)\mathsf{1})\eqsp.
\end{equation}
Since $\Denois^\gamma_t(z_t)=(z_t+\alpha)/(t+\beta)$, we have $ \pi_t(\rmd x\vert z_t)\sim\loigamma((t+\beta)\Denois^\pi_t(z_t), (t+\beta)\mathsf{1})$, matching our ``DAPS-like" approximation with $r_t^2 = 1/(t+\beta)$.
\end{example}
\paragraph{Likelihood step.}
Similar to our PnP-DM adaptation, we use \Cref{alg:rla} to sample from
\begin{equation}
    \label{eq:daps_likelihood_poisson}
    \pi^y_{t_k}(\rmd x\vert z_{t_k}) \propto g(y\vert x)\loigamma(r_t^{-2}\hat{x}(z_{t}),r_{t}^{-2}\mathsf{1})\eqsp.
\end{equation}
The Riemannian metric discussed in the previous section combines well with the Gamma density: the $j$-th Riemannian Langevin MCMC update is,
after simplification,
\begin{align}
    x^{j+1} &= x^j+\delta \times x^j\odot \nabla \log g(y\vert x^j) + \delta\sum_{i=1}^d x_i^j\times \left((r_{t_k}^{-2}\hat{x}_i^j -1)/x_i^j  -r_{t_k}^{-2}\right)e_i + \delta\mathsf{1}+\sqrt{2\delta x^j}\odot G_j \\
    &=x^j+\delta \times x^j\odot \nabla \log g(y\vert x^j) + \delta\sum_{i=1}^d r_{t_k}^{-2}\left(\hat{x}_i^j - x_i^j\right)e_i + \sqrt{2\delta x^j}\odot G_j\eqsp.
\end{align}
The otherwise problematic $1/x_i^j$ factors, which follow from the score of the Gamma distribution, are cancelled out. 

\paragraph{Noising step.}
In analogy with the Gaussian diffusion case, we then sample $z_{t_{k+1}}\sim \poisson(t_{k+1}x)$. Unlike our adaptation of PnP-DM, this extra step ensures that the successive $z_{t_k}$ lie in $\N^d$.

To improve empirical performance, the original DAPS includes a hyperparameter to control the weight of the likelihood $g$ in the MCMC sampling phases, see \citep[Appendix F.2]{zhang_improving_2025}. We adopt the same strategy: in the Langevin update of \Cref{alg:poisson_daps}, we replace $\nabla \log g(y\vert \cdot)$ with $\eta \nabla \log g(y\vert \cdot)$, $\eta>0$ (equivalently, replace $g(y\vert \cdot)$ with $g(y\vert\cdot)^\eta$). We apply the same modification to the Langevin update of PnP-DM, as we also observe better empirical performance in our inverse problem experiments when $\eta >1$.  While it tends to improve the quality of the reconstructed images, note that setting $\eta> 1$ alters the posterior distribution that we aim to sample from. In particular, it can lead to more concentrated samples, hence decreasing the posterior diversity. The values of $\eta$ used in our experiments are empirically tuned and reported in \Cref{tab:eta}.

\section{Numerical experiments: details}
\label{app:expe}

\subsection{Datasets}
\label{app:datasets}

\paragraph{PET.} The public \emph{BrainWeb} dataset \citep{aubert-broche_twenty_2006,schramm_simulated_2021} contains 21 PET-like brain volumes; we use 20 for training and the remaining one as validation for the inverse problems. From each volume, we extract 60 slices (the middle-most ones), resulting in 1200 $256\times256$ training images (and 60 for validation) with four different pixel values (lowest is 0, highest is 1). During training, we add random lesions to slices \footnote{ \url{https://pet-mr.github.io/brainweb/}}. We augment the dataset during training by applying random translation ($\pm 10$ pixels), horizontal flip, and small rotation ($\pm$ 2°). 

\paragraph{Astrophysics.} We downloaded the public mock galaxy images from the \emph{EAGLE} simulation \citep{the_eagle_team_eagle_2017} after requesting access to the database\footnote{\url{https://astro.dur.ac.uk/ICC/EAGLE/database.php}}. We extracted mock images from the \texttt{RefL0100N1504\_Subhalo} with \texttt{GalaxyID} $<2.10^7$, giving 41372 RGB $256\times256$ images. To enforce a relative homogeneity of the training dataset, we used PCA with 100 components followed by a $k$-means clustering with 3 clusters, and selected one cluster. We kept $100$ images for validation and $7787$ for training. We augment the dataset during training by applying random translation ($\pm 10$ pixels), horizontal and vertical flips, and a random rotation. As for the PET dataset, pixel values span $[0,1]$.

\subsection{Training details}
\label{app:training_details}
While most of the literature relies on pre-trained models to evaluate Gaussian diffusion-based posterior sampling algorithms, our approach requires training models from scratch. The input $z$ of the Poisson denoiser, $(z,t)\mapsto \Denois_t^\pi(z)$, scales with $t$ due to the nature of MPP. Thus, we use rescaled inputs for the network approximating $\Denois^\pi$, denoted by $k^\theta$, namely $z/(1+t)$ and $1/(1+t)$ instead of $z,t$. These design choices are similar to those of \citet{bhattacharya_itdpdm_2026}. The network directly predicts the denoised image, rather than a residual correction to the input. We used the same strategy to train Gaussian denoisers approximating $D^\pi$. The Gaussian diffusion formulation we considered, detailed in \Cref{sec:diffusion_gaussian}, is admittedly not the most common; however, it shares several similarities with MPDM: SNR increasing with $t$, process $\bfX$ scaling as $X_t \approx t$, and such that $X_t/t\rightarrow X\sim\pi$ almost surely. 

\paragraph{Architectures.}
For the toy example, we used a time-conditioned residual multilayer perceptron with a hidden dimension of 128 and three residual blocks. The two-dimensional input was first projected to the hidden dimension, and the diffusion time variable was encoded using a 64-dimensional time embedding followed by a two-layer MLP. Each residual block consisted of two linear layers with SiLU activations and was conditioned on the time embedding through feature-wise linear modulation (FiLM), implemented as a learned scale and shift. A final linear layer mapped the hidden representation back to the two-dimensional output space, followed by a Softplus activation to enforce non-negativity.

For the BrainWeb and EAGLE datasets, we used a U-Net architecture \citep{RFB15a} with 32 input channels, base width 256, and 5 resolution levels. The number of channels doubles at every level, with spatial downsampling by a factor of two. Each encoder and decoder level consists of two residual blocks. The residual blocks use Group Normalization and SiLU activations and are conditioned on the diffusion time through the same feature-wise linear modulation (FiLM) mechanism. The time variable is first mapped to a 128-dimensional sinusoidal embedding, followed by a two-layer MLP. The bottleneck consists of two residual blocks and contains no attention. The decoder uses nearest-neighbor upsampling followed by convolution and concatenates the corresponding encoder features through standard U-Net skip connections. A final $1\times1$ convolution produces the output channels, followed by a Softplus activation to enforce non-negativity.

\paragraph{Training.} 
We trained our models with the Adam optimizer and a $10^{-4}$ learning rate. During training, each image is noised at a level $t$ with a log-uniform distribution: $\log t$ is uniform on $[-7,7]$. This non-uniform distribution for $t$ is compensated with a Jacobian factor; see \Cref{app:training}. We use the usual $l^2$ loss instead of the Bregman divergence \eqref{eq:bregman}: after comparing both choices, we observed no significant difference in our experiments.

For the image datasets, we used a batch size of 32 and trained for 3500 epochs for the PET dataset and 1000 epochs for the astrophysics dataset. For the toy example, we used approximately $10^5$ optimization steps, with batches of 1024 fresh samples from the Gamma mixture distribution defined in \Cref{app:settings}.

\subsection{Inverse problems: settings and detailed results}
\label{app:settings}

\paragraph{Computational budget and hyperparameters.} 
PDPS, DPS, PnP-DM and DAPS all require a time grid $t_0<t_1<\dots< t_N$ to approximately simulate the underlying Gaussian diffusion/MPDM. PDPS and DPS use one NFE (denoiser call) per iteration, hence $N$ NFEs in total; to be precise, DPS requires vector-Jacobian computations. For DAPS and PnP-DM, there is an additional inner loop at each prior step: if we fix the total number of NFEs, we must allocate them between the outer loop on $t_0<t_1<\dots t_N$ and the inner loop that simulates the diffusion from $t$ onward. For fairness and simplicity, we use only one NFE for each prior step, and keep the same grid $t_0<t_1<\dots< t_N$ for every algorithm. This design choice is present in DAPS: the authors observe that more NFEs in the inner loop lead to slightly better reconstruction metrics for linear inverse problems; see \citep[Appendix I]{zhang_improving_2025}. On the other hand, using only one inner-loop NFE is admittedly a slight modification of the original PnP-DM design.  

In the likelihood step of DAPS and PnP-DM, we set the number of Langevin steps to 128 and the step-size to $\delta=0.001$. While computationally cheap, this likelihood step is repeated at each iteration, thus $N$ times. For simplicity, we choose to only match the number of NFEs between these methods and DPS/PDPS, ignoring these Langevin steps. Note that for Gaussian diffusion-based PnP-DM and DAPS, we use a standard Langevin MCMC scheme and clamp updates that have negative values, as the authors do not discuss the case where the density is supported on $\rset_+^d$. Meanwhile, our MPDM-based adaptations rely on Riemannian Langevin MCMC (\Cref{alg:rla}). 

The hyperparameter $\eta$ mimics the empirical tuning of $\beta_y$ present in DAPS \citep[Appendix F.2]{zhang_improving_2025}: it effectively increases the weight of the likelihood term in the Langevin scheme. As in the original DAPS paper, we observe better reconstruction metrics when $\eta>1$, so we keep this design choice in our implementations.
This additional parameter is not present in PnP-DM: we include it to ensure fairness, as we also observe better empirical performance with $\eta>1$. In each experiment, we test several values of $\eta$ for both DAPS and PnP-DM, ranging from $1$ to $20$, and select the one that yields the best reconstruction metrics; see \Cref{tab:eta}. For DAPS algorithms (either MPDM or diffusion-based), we used $r_t^2 = 1/(1+t)$. We mostly rely on continuous derivative approximation for MPDM-based DPS (see \Cref{app:dps_poisson}), except for the EAGLE dataset (both Poisson denoising and blur problems) and the toy dataset, where we use the independent coordinates approximation until $t=0.05$, before switching to the continuous approximation. We observed that it slightly improved the reconstruction performance in these cases.

Overall, comparison of these different methods is not straightforward, especially because DAPS and PnP-DM prior steps and likelihood steps require specific computational budget and several hyperparameters: we choose to tune $\eta$, but to keep the Langevin parameters identical for every experiment. Note that, while limited to the $\mca=a\Idd$ case, PDPS requires few hyperparameters (only the time-grid design), only one NFE per iteration, and no vector-Jacobian computation.

\paragraph{Toy example.} We consider the bidimensional Gamma mixture distribution $\pi=\frac{1}{5}\sum_{i=1}^5\loigamma(\alpha_i,\beta_i)$ with $\alpha_1 = (16,16), \alpha_2=(64,64), \alpha_3=(16,64), \alpha_4=(64,16), \alpha_5=(20,20),
\beta_1=\beta_2 =\beta_3=\beta_4 = (16,16), \beta_5 = (8,8)$. The MPP is completely tractable in this case; see \Cref
{app:gamma_case}.

We sample 10 points $x_1,\dots x_{10}$ from $\pi$ and draw $y_i\sim \poisson(ax_i)$ in both the low-count regime ($a=0.6$) and high-count regime ($a=3$). All algorithms use the same time grid: $s_1,\dots s_N$ are log-uniform on $[0.1,100]$ and we set $t_i = 0.0001 +s_i -0.1$; we choose $N=1024$. For PnP-DM and DAPS methods, we use only one denoising step at each time $t_i$ to ensure that all methods have identical NFE; furthermore, we keep $\eta=1$ in order to preserve the correct target distribution. For $i=1,\dots,10$, every method generates 10000 approximate $\pi^{y_i}$ samples: we compute the 2-Wasserstein distance against a set of 10000 exact $\pi^{y_i}$ samples, and average the results over $i$. We repeat this procedure 10 times to obtain confidence intervals. We also compute the 2-Wasserstein distance between two distinct sets of exact samples to obtain a baseline. The full results are given in \Cref{tab:toy_inv_large}.
\begin{table}[ht]
\scriptsize
\centering
\setlength{\tabcolsep}{5pt}
\begin{tabular}{c l | c | c}
\toprule
\multicolumn{2}{c}{\multirow{2}{*}{\textbf{Methods}}} & \multicolumn{1}{c}{\textbf{Toy (LC)}} & \multicolumn{1}{c}{\textbf{Toy (HC)}} \\
\cmidrule(lr){3-3}
\cmidrule(lr){4-4}
& & 2-Wasserstein {\scriptsize$\downarrow$} & 2-Wasserstein {\scriptsize$\downarrow$} \\
\midrule
\midrule
\multirow{4}{*}{{\shortstack{\textbf{MPDM-based}\\Poisson}}} & DAPS & $0.3320\,{\scriptstyle\pm\,0.0055}$ & $0.3173\,{\scriptstyle\pm\,0.0042}$ \\
 & DPS & $0.2332\,{\scriptstyle\pm\,0.0091}$ & $0.3106\,{\scriptstyle\pm\,0.0039}$ \\
 & PnP-DM & $1.0761\,{\scriptstyle\pm\,0.0053}$ & $0.3749\,{\scriptstyle\pm\,0.0037}$ \\
 & PDPS & \colorbox{red!25}{$0.0945\,{\scriptstyle\pm\,0.0053}$} & \colorbox{red!25}{$0.0732\,{\scriptstyle\pm\,0.0023}$} \\
\midrule
\multirow{3}{*}{{\shortstack{Diffusion-based\\Poisson}}} & DAPS & $0.2898\,{\scriptstyle\pm\,0.0043}$ & \colorbox{red!10}{$0.2576\,{\scriptstyle\pm\,0.0044}$} \\
 & DPS & \colorbox{red!10}{$0.2134\,{\scriptstyle\pm\,0.0046}$} & $0.2999\,{\scriptstyle\pm\,0.0038}$ \\
 & PnP-DM & $1.0783\,{\scriptstyle\pm\,0.0030}$ & $0.3553\,{\scriptstyle\pm\,0.0030}$ \\
\midrule
\multirow{3}{*}{\shortstack{Diffusion-based\\Gaussian-approximated}} & DAPS & $0.4292\,{\scriptstyle\pm\,0.0081}$ & $0.3241\,{\scriptstyle\pm\,0.0039}$ \\
 & DPS & $0.6285\,{\scriptstyle\pm\,0.0052}$ & $0.2849\,{\scriptstyle\pm\,0.0040}$ \\
 & PnP-DM & $1.0792\,{\scriptstyle\pm\,0.0072}$ & $0.3551\,{\scriptstyle\pm\,0.0041}$ \\
 \midrule 
 \midrule
 \multicolumn{2}{c|}{PDPS (exact denoiser)} & $0.0837\,{\scriptstyle\pm\,0.0047}$ & $0.0625\,{\scriptstyle\pm\,0.0016}$ \\
 \multicolumn{2}{c|}{Ground truth} & $0.0721\,{\scriptstyle\pm\,0.0052}$ & $0.0469\,{\scriptstyle\pm\,0.0022}$ \\
\bottomrule
\end{tabular}
\caption{Average posterior sampling performance for the toy example. PDPS (exact denoiser) uses the analytical expression of the Poisson denoiser (see \Cref{app:gamma_case}) instead of a learned model. As a baseline, ``Ground truth" indicates the 2-Wasserstein distance between two sets of exact samples.}
\label{tab:toy_inv_large}
\end{table}
We also generate approximate $\pi$ samples with both the MPDM and the diffusion model, using the same time-grid (with either $N=64$ or $N=1024$ time steps). \Cref{tab:wass_gen} shows the 2-Wassertein distance against exact $\pi$ samples. Note that both models have comparable generative performance, suggesting that the posterior sampling discrepancy observed in \Cref{tab:toy_inv_large} is due to posterior sampling algorithms rather than the quality of the learned prior.
\begin{table}[H]
\scriptsize
    \centering
    
    \begin{tabular}{c|c|c}
    \toprule
        \textbf{Generative model}  & $N=64$ & $N=1024$ \\
        \midrule\midrule
        \textbf{MPDM} &$0.1108\pm\text{\scriptsize{0.0205}}$  &$0.1011\pm\text{\scriptsize{0.0128}}$  \\
        \textbf{Diffusion} &$0.1153\pm\text{\scriptsize{0.0125}}$ &$0.1078\pm\text{\scriptsize{0.0167}}$\\
        \midrule
        \midrule
        MPDM (exact denoiser) &$0.1176\pm\text{\scriptsize{0.0137}}$
        &$0.1013\pm\text{\scriptsize{0.0227}}$\\
        Ground truth &\multicolumn{2}{c}{$0.0861\pm\text{\scriptsize{0.0164}}$}  \\
        \bottomrule
    \end{tabular}
    \caption{Evaluation of generative models for the toy example. 2-Wasserstein distance between exact and approximate samples.}
    \label{tab:wass_gen}
\end{table}

For each imaging inverse problem, we draw an observation $y$ for each image $x$ in the validation dataset, compute the PSNR, SSIM, and LPIPS of the reconstructed image obtained with each posterior sampling algorithm, and average the metrics over the validation dataset. We repeat this procedure 4 times to obtain confidence intervals.  

\begin{table}[ht]
\centering
\scriptsize
\setlength{\tabcolsep}{2pt}
\begin{tabular}{c l c c c c c c}
\toprule
\multicolumn{2}{c}{\multirow{2}{*}{\textbf{Methods}}} & \multicolumn{3}{c}{\textbf{BrainWeb denoising (LC)}} & \multicolumn{3}{c}{\textbf{BrainWeb denoising (HC)}} \\
\cmidrule(lr){3-5}
\cmidrule(lr){6-8}
 & & PSNR {\scriptsize$\uparrow$} & SSIM {\scriptsize$\uparrow$} & LPIPS {\scriptsize$\downarrow$} & PSNR {\scriptsize$\uparrow$} & SSIM {\scriptsize$\uparrow$} & LPIPS {\scriptsize$\downarrow$} \\
\midrule
\midrule
\multirow{4}{*}{\shortstack{\textbf{MPDM-based} \\Poisson}} & DAPS & \colorbox{red!10}{$16.72\,{\scriptstyle\pm\,0.07}$} & $0.7570\,{\scriptstyle\pm\,0.0025}$ & $0.1654\,{\scriptstyle\pm\,0.0067}$ & $20.62\,{\scriptstyle\pm\,0.08}$ & $0.8559\,{\scriptstyle\pm\,0.0019}$ & $0.1159\,{\scriptstyle\pm\,0.0049}$ \\
 & DPS & \colorbox{red!10}{$16.91\,{\scriptstyle\pm\,0.15}$} & $0.7574\,{\scriptstyle\pm\,0.0092}$ & $0.1429\,{\scriptstyle\pm\,0.0110}$ & $20.49\,{\scriptstyle\pm\,0.13}$ & $0.8477\,{\scriptstyle\pm\,0.0086}$ & $0.1207\,{\scriptstyle\pm\,0.0110}$ \\
 & PnP & $16.65\,{\scriptstyle\pm\,0.10}$ & $0.6278\,{\scriptstyle\pm\,0.0051}$ & $0.2258\,{\scriptstyle\pm\,0.0070}$ & \colorbox{red!25}{$21.72\,{\scriptstyle\pm\,0.07}$} & $0.8171\,{\scriptstyle\pm\,0.0038}$ & $0.1408\,{\scriptstyle\pm\,0.0071}$ \\
 & PDPS & \colorbox{red!10}{$16.90\,{\scriptstyle\pm\,0.09}$} & \colorbox{red!10}{$0.7812\,{\scriptstyle\pm\,0.0033}$} & \colorbox{red!10}{$0.1124\,{\scriptstyle\pm\,0.0063}$} & \colorbox{red!10}{$21.31\,{\scriptstyle\pm\,0.10}$} & \colorbox{red!25}{$0.9019\,{\scriptstyle\pm\,0.0020}$} & \colorbox{red!25}{$0.0585\,{\scriptstyle\pm\,0.0038}$} \\
\midrule
\multirow{3}{*}{\shortstack{Diffusion-based \\ Poisson}} & DAPS & \colorbox{red!25}{$17.12\,{\scriptstyle\pm\,0.07}$} & \colorbox{red!10}{$0.7794\,{\scriptstyle\pm\,0.0028}$} & $0.1141\,{\scriptstyle\pm\,0.0051}$ & \colorbox{red!10}{$21.41\,{\scriptstyle\pm\,0.08}$} & $0.8796\,{\scriptstyle\pm\,0.0019}$ & \colorbox{red!10}{$0.0668\,{\scriptstyle\pm\,0.0033}$} \\
 & DPS & \colorbox{red!25}{$17.22\,{\scriptstyle\pm\,0.11}$} & \colorbox{red!25}{$0.7915\,{\scriptstyle\pm\,0.0032}$} & \colorbox{red!25}{$0.0890\,{\scriptstyle\pm\,0.0043}$} & $20.62\,{\scriptstyle\pm\,0.14}$ & \colorbox{red!10}{$0.8814\,{\scriptstyle\pm\,0.0028}$} & \colorbox{red!25}{$0.0591\,{\scriptstyle\pm\,0.0052}$} \\
 & PnP & \colorbox{red!10}{$16.77\,{\scriptstyle\pm\,0.06}$} & \colorbox{red!10}{$0.7815\,{\scriptstyle\pm\,0.0022}$} & $0.1113\,{\scriptstyle\pm\,0.0042}$ & $20.91\,{\scriptstyle\pm\,0.09}$ & \colorbox{red!10}{$0.8842\,{\scriptstyle\pm\,0.0017}$} & \colorbox{red!10}{$0.0687\,{\scriptstyle\pm\,0.0030}$} \\
\midrule
\multirow{3}{*}{{\shortstack{Diffusion-based\\Gaussian-approximated}}} & DAPS & $15.63\,{\scriptstyle\pm\,0.07}$ & $0.7333\,{\scriptstyle\pm\,0.0036}$ & $0.1821\,{\scriptstyle\pm\,0.0073}$ & $21.03\,{\scriptstyle\pm\,0.09}$ & $0.8753\,{\scriptstyle\pm\,0.0020}$ & \colorbox{red!25}{$0.0606\,{\scriptstyle\pm\,0.0039}$} \\
 & DPS & $15.99\,{\scriptstyle\pm\,0.12}$ & $0.7516\,{\scriptstyle\pm\,0.0046}$ & \colorbox{red!10}{$0.1006\,{\scriptstyle\pm\,0.0063}$} & $17.35\,{\scriptstyle\pm\,0.11}$ & $0.7959\,{\scriptstyle\pm\,0.0031}$ & $0.0859\,{\scriptstyle\pm\,0.0050}$ \\
 & PnP & $15.10\,{\scriptstyle\pm\,0.01}$ & $0.7262\,{\scriptstyle\pm\,0.0007}$ & $0.3029\,{\scriptstyle\pm\,0.0038}$ & $20.36\,{\scriptstyle\pm\,0.07}$ & $0.8750\,{\scriptstyle\pm\,0.0012}$ & $0.0937\,{\scriptstyle\pm\,0.0033}$ \\
\bottomrule
\end{tabular}
\caption{Poisson denoising problem ($\mca = a\Idd$) with the BrainWeb dataset, in the low-count ($a=1$) and high-count ($a=5$) regime. ``Poisson" and ``Gaussian-approximated" refer to which likelihood is used, similar to the suffix -p, -g of the smaller tables: either the exact Poisson likelihood or $g(y\vert x) \approx \gauss(\mca(x),\mathrm{diag}(\mca(x))$.}
\label{tab:brain_poisson}
\end{table}

\begin{table}[ht]
\centering
\scriptsize
\setlength{\tabcolsep}{2pt}
\begin{tabular}{c l c c c c c c}
\toprule
\multicolumn{2}{c}{\multirow{2}{*}{\textbf{Methods}}} & \multicolumn{3}{c}{\textbf{BrainWeb PET (LC)}} & \multicolumn{3}{c}{\textbf{BrainWeb PET (HC)}} \\
\cmidrule(lr){3-5}
\cmidrule(lr){6-8}
 & & PSNR {\scriptsize$\uparrow$} & SSIM {\scriptsize$\uparrow$} & LPIPS {\scriptsize$\downarrow$} & PSNR {\scriptsize$\uparrow$} & SSIM {\scriptsize$\uparrow$} & LPIPS {\scriptsize$\downarrow$} \\
\midrule
\midrule
\multirow{3}{*}{\shortstack{\textbf{MPDM-based}\\Poisson}} & DAPS & $15.82\,{\scriptstyle\pm\,0.07}$ & $0.7296\,{\scriptstyle\pm\,0.0028}$ & $0.1619\,{\scriptstyle\pm\,0.0065}$ & $17.25\,{\scriptstyle\pm\,0.06}$ & $0.7783\,{\scriptstyle\pm\,0.0024}$ & $0.1541\,{\scriptstyle\pm\,0.0060}$ \\
 & DPS & $15.56\,{\scriptstyle\pm\,0.13}$ & $0.7084\,{\scriptstyle\pm\,0.0085}$ & $0.1395\,{\scriptstyle\pm\,0.0096}$ & $16.84\,{\scriptstyle\pm\,0.16}$ & $0.7503\,{\scriptstyle\pm\,0.0100}$ & $0.1318\,{\scriptstyle\pm\,0.0092}$ \\
 & PnP & \colorbox{red!25}{$16.41\,{\scriptstyle\pm\,0.05}$} & $0.7329\,{\scriptstyle\pm\,0.0019}$ & \colorbox{red!25}{$0.1051\,{\scriptstyle\pm\,0.0040}$} & \colorbox{red!25}{$18.11\,{\scriptstyle\pm\,0.05}$} & \colorbox{red!25}{$0.7940\,{\scriptstyle\pm\,0.0018}$} & \colorbox{red!25}{$0.0828\,{\scriptstyle\pm\,0.0031}$} \\
\midrule
\multirow{3}{*}{\shortstack{Diffusion-based\\ Poisson}} & DAPS & \colorbox{red!10}{$16.07\,{\scriptstyle\pm\,0.06}$} & \colorbox{red!10}{$0.7327\,{\scriptstyle\pm\,0.0032}$} & $0.1330\,{\scriptstyle\pm\,0.0066}$ & \colorbox{red!10}{$17.63\,{\scriptstyle\pm\,0.06}$} & \colorbox{red!25}{$0.7923\,{\scriptstyle\pm\,0.0022}$} & \colorbox{red!10}{$0.1027\,{\scriptstyle\pm\,0.0045}$} \\
 & DPS & $15.82\,{\scriptstyle\pm\,0.13}$ & \colorbox{red!10}{$0.7401\,{\scriptstyle\pm\,0.0042}$} & \colorbox{red!25}{$0.0968\,{\scriptstyle\pm\,0.0051}$} & $16.66\,{\scriptstyle\pm\,0.20}$ & $0.7758\,{\scriptstyle\pm\,0.0063}$ & \colorbox{red!10}{$0.0947\,{\scriptstyle\pm\,0.0064}$} \\
 & PnP & $15.74\,{\scriptstyle\pm\,0.05}$ & $0.7194\,{\scriptstyle\pm\,0.0022}$ & $0.1755\,{\scriptstyle\pm\,0.0056}$ & $17.33\,{\scriptstyle\pm\,0.06}$ & \colorbox{red!10}{$0.7819\,{\scriptstyle\pm\,0.0023}$} & $0.1365\,{\scriptstyle\pm\,0.0049}$ \\
\midrule
\multirow{3}{*}{\shortstack{Diffusion-base \\Gaussian-approximated}} & DAPS & \colorbox{red!10}{$16.13\,{\scriptstyle\pm\,0.06}$} & \colorbox{red!25}{$0.7478\,{\scriptstyle\pm\,0.0025}$} & \colorbox{red!10}{$0.1111\,{\scriptstyle\pm\,0.0058}$} & \colorbox{red!10}{$17.60\,{\scriptstyle\pm\,0.08}$} & \colorbox{red!25}{$0.7930\,{\scriptstyle\pm\,0.0046}$} & \colorbox{red!10}{$0.1115\,{\scriptstyle\pm\,0.0123}$} \\
 & DPS & $14.75\,{\scriptstyle\pm\,0.15}$ & $0.6891\,{\scriptstyle\pm\,0.0057}$ & $0.1678\,{\scriptstyle\pm\,0.0106}$ & $15.73\,{\scriptstyle\pm\,0.13}$ & $0.7372\,{\scriptstyle\pm\,0.0047}$ & \colorbox{red!10}{$0.0962\,{\scriptstyle\pm\,0.0059}$} \\
 & PnP & $15.86\,{\scriptstyle\pm\,0.06}$ & \colorbox{red!10}{$0.7392\,{\scriptstyle\pm\,0.0028}$} & $0.1388\,{\scriptstyle\pm\,0.0090}$ & $17.29\,{\scriptstyle\pm\,0.07}$ & \colorbox{red!10}{$0.7869\,{\scriptstyle\pm\,0.0039}$} & $0.1390\,{\scriptstyle\pm\,0.0106}$ \\
\bottomrule
\end{tabular}
\caption{PET  (with the BrainWeb dataset), in the low-count ($a=20$) and high-count ($a=100$) regimes.}
\label{tab:pet}
\end{table}

\begin{table}[ht]
\centering
\scriptsize
\setlength{\tabcolsep}{2pt}
\begin{tabular}{c l c c c c c c}
\toprule
\multicolumn{2}{c}{\multirow{2}{*}{\textbf{Methods}}} & \multicolumn{3}{c}{\textbf{EAGLE denoising (LC)}} & \multicolumn{3}{c}{\textbf{EAGLE denoising (HC)}} \\
\cmidrule(lr){3-5}
\cmidrule(lr){6-8}
 & & PSNR {\scriptsize$\uparrow$} & SSIM {\scriptsize$\uparrow$} & LPIPS {\scriptsize$\downarrow$} & PSNR {\scriptsize$\uparrow$} & SSIM {\scriptsize$\uparrow$} & LPIPS {\scriptsize$\downarrow$} \\
\midrule
\midrule
\multirow{4}{*}{\shortstack{\textbf{MPDM-based}\\Poisson}} & DAPS & $29.97\,{\scriptstyle\pm\,0.09}$ & $0.5871\,{\scriptstyle\pm\,0.0036}$ & $0.3065\,{\scriptstyle\pm\,0.0105}$ & $32.94\,{\scriptstyle\pm\,0.06}$ & $0.6964\,{\scriptstyle\pm\,0.0023}$ & $0.1283\,{\scriptstyle\pm\,0.0066}$ \\
 & DPS & \colorbox{red!10}{$31.55\,{\scriptstyle\pm\,0.19}$} & $0.7472\,{\scriptstyle\pm\,0.0082}$ & \colorbox{red!10}{$0.1783\,{\scriptstyle\pm\,0.0152}$} & $33.16\,{\scriptstyle\pm\,0.11}$ & $0.7748\,{\scriptstyle\pm\,0.0044}$ & $0.1369\,{\scriptstyle\pm\,0.0128}$ \\
 & PnP & $26.25\,{\scriptstyle\pm\,0.14}$ & $0.5933\,{\scriptstyle\pm\,0.0040}$ & $0.4149\,{\scriptstyle\pm\,0.0106}$ & \colorbox{red!10}{$34.80\,{\scriptstyle\pm\,0.06}$} & \colorbox{red!10}{$0.8401\,{\scriptstyle\pm\,0.0012}$} & \colorbox{red!10}{$0.0991\,{\scriptstyle\pm\,0.0040}$} \\
 & PDPS & \colorbox{red!25}{$32.34\,{\scriptstyle\pm\,0.15}$} & \colorbox{red!10}{$0.7916\,{\scriptstyle\pm\,0.0034}$} & \colorbox{red!25}{$0.1543\,{\scriptstyle\pm\,0.0107}$} & \colorbox{red!25}{$34.94\,{\scriptstyle\pm\,0.08}$} & \colorbox{red!25}{$0.8435\,{\scriptstyle\pm\,0.0013}$} & \colorbox{red!25}{$0.0863\,{\scriptstyle\pm\,0.0044}$} \\
\midrule
\multirow{3}{*}{\shortstack{Diffusion-based\\Poisson}} & DAPS & $29.92\,{\scriptstyle\pm\,0.09}$ & $0.7268\,{\scriptstyle\pm\,0.0028}$ & $0.2547\,{\scriptstyle\pm\,0.0108}$ & $34.60\,{\scriptstyle\pm\,0.09}$ & \colorbox{red!10}{$0.8388\,{\scriptstyle\pm\,0.0016}$} & $0.1546\,{\scriptstyle\pm\,0.0053}$ \\
 & DPS & \colorbox{red!25}{$32.46\,{\scriptstyle\pm\,0.14}$} & \colorbox{red!25}{$0.8060\,{\scriptstyle\pm\,0.0026}$} & $0.2213\,{\scriptstyle\pm\,0.0127}$ & \colorbox{red!10}{$34.69\,{\scriptstyle\pm\,0.09}$} & \colorbox{red!25}{$0.8409\,{\scriptstyle\pm\,0.0014}$} & $0.1612\,{\scriptstyle\pm\,0.0065}$ \\
 & PnP & $26.95\,{\scriptstyle\pm\,0.03}$ & $0.6884\,{\scriptstyle\pm\,0.0014}$ & $0.4085\,{\scriptstyle\pm\,0.0053}$ & $32.99\,{\scriptstyle\pm\,0.04}$ & $0.8203\,{\scriptstyle\pm\,0.0009}$ & $0.1902\,{\scriptstyle\pm\,0.0033}$ \\
\midrule
\multirow{3}{*}{\shortstack{Diffusion-based \\Gaussian-approximated}} & DAPS & $26.48\,{\scriptstyle\pm\,0.06}$ & $0.6530\,{\scriptstyle\pm\,0.0055}$ & $0.2322\,{\scriptstyle\pm\,0.0086}$ & $34.66\,{\scriptstyle\pm\,0.08}$ & $0.8372\,{\scriptstyle\pm\,0.0016}$ & $0.1630\,{\scriptstyle\pm\,0.0072}$ \\
 & DPS & \colorbox{red!10}{$31.23\,{\scriptstyle\pm\,0.19}$} & $0.7503\,{\scriptstyle\pm\,0.0044}$ & $0.3001\,{\scriptstyle\pm\,0.0160}$ & $32.57\,{\scriptstyle\pm\,0.15}$ & $0.8075\,{\scriptstyle\pm\,0.0027}$ & $0.2122\,{\scriptstyle\pm\,0.0120}$ \\
 & PnP & $24.07\,{\scriptstyle\pm\,0.02}$ & $0.6610\,{\scriptstyle\pm\,0.0010}$ & $0.3753\,{\scriptstyle\pm\,0.0047}$ & $33.42\,{\scriptstyle\pm\,0.07}$ & $0.8286\,{\scriptstyle\pm\,0.0013}$ & $0.1680\,{\scriptstyle\pm\,0.0043}$ \\
\bottomrule
\end{tabular}
\caption{Poisson denoising problem ($\mca=a\Idd$) for the EAGLE dataset, in the low-count ($a=4$) and high-count ($a=20$) regime.}
\label{tab:eagle_poisson}
\end{table}

\begin{table}[ht]
\centering
\scriptsize
\setlength{\tabcolsep}{2pt}
\begin{tabular}{c l c c c c c c}
\toprule
\multicolumn{2}{c}{\multirow{2}{*}{\textbf{Methods}}} & \multicolumn{3}{c}{\textbf{EAGLE blur (LC)}} & \multicolumn{3}{c}{\textbf{EAGLE blur (HC)}} \\
\cmidrule(lr){3-5}
\cmidrule(lr){6-8}
 & & PSNR {\scriptsize$\uparrow$} & SSIM {\scriptsize$\uparrow$} & LPIPS {\scriptsize$\downarrow$} & PSNR {\scriptsize$\uparrow$} & SSIM {\scriptsize$\uparrow$} & LPIPS {\scriptsize$\downarrow$} \\
\midrule
\midrule
\multirow{3}{*}{\shortstack{\textbf{MPDM-based}\\Poisson}} & DAPS & $27.34\,{\scriptstyle\pm\,0.22}$ & $0.6174\,{\scriptstyle\pm\,0.0044}$ & $0.4528\,{\scriptstyle\pm\,0.0100}$ & $30.63\,{\scriptstyle\pm\,0.12}$ & $0.6832\,{\scriptstyle\pm\,0.0029}$ & $0.2970\,{\scriptstyle\pm\,0.0105}$ \\
 & DPS & \colorbox{red!10}{$31.13\,{\scriptstyle\pm\,0.22}$} & $0.7638\,{\scriptstyle\pm\,0.0044}$ & \colorbox{red!25}{$0.2049\,{\scriptstyle\pm\,0.0171}$} & $31.64\,{\scriptstyle\pm\,0.19}$ & $0.7764\,{\scriptstyle\pm\,0.0033}$ & \colorbox{red!25}{$0.1858\,{\scriptstyle\pm\,0.0138}$} \\
 & PnP & $29.83\,{\scriptstyle\pm\,0.10}$ & $0.7341\,{\scriptstyle\pm\,0.0023}$ & $0.3039\,{\scriptstyle\pm\,0.0101}$ & \colorbox{red!25}{$33.22\,{\scriptstyle\pm\,0.05}$} & \colorbox{red!10}{$0.8000\,{\scriptstyle\pm\,0.0009}$} & $0.2511\,{\scriptstyle\pm\,0.0084}$ \\
\midrule
\multirow{3}{*}{\shortstack{Diffusion-based\\Poisson}} & DAPS &  $30.68\,{\scriptstyle\pm\,0.11}$ & $0.7419\,{\scriptstyle\pm\,0.0024}$ & $0.2724\,{\scriptstyle\pm\,0.0098}$ & \colorbox{red!10}{$33.01\,{\scriptstyle\pm\,0.09}$} & \colorbox{red!10}{$0.8000\,{\scriptstyle\pm\,0.0018}$} & $0.2370\,{\scriptstyle\pm\,0.0087}$ \\
 & DPS & \colorbox{red!25}{$31.72\,{\scriptstyle\pm\,0.20}$} & \colorbox{red!25}{$0.7935\,{\scriptstyle\pm\,0.0030}$} & \colorbox{red!10}{$0.2432\,{\scriptstyle\pm\,0.0155}$} & $32.56\,{\scriptstyle\pm\,0.18}$ & \colorbox{red!25}{$0.8017\,{\scriptstyle\pm\,0.0026}$} & \colorbox{red!10}{$0.2129\,{\scriptstyle\pm\,0.0104}$} \\
 & PnP & $27.90\,{\scriptstyle\pm\,0.09}$ & $0.6530\,{\scriptstyle\pm\,0.0024}$ & $0.3573\,{\scriptstyle\pm\,0.0084}$ & $29.58\,{\scriptstyle\pm\,0.08}$ & $0.7041\,{\scriptstyle\pm\,0.0023}$ & $0.3103\,{\scriptstyle\pm\,0.0085}$ \\
\midrule
\multirow{3}{*}{\shortstack{Diffusion-based\\Gaussian-approximated}} & DAPS & \colorbox{red!25}{$31.99\,{\scriptstyle\pm\,0.10}$} & \colorbox{red!10}{$0.7881\,{\scriptstyle\pm\,0.0019}$} & \colorbox{red!10}{$0.2652\,{\scriptstyle\pm\,0.0088}$}  & \colorbox{red!25}{$33.14\,{\scriptstyle\pm\,0.09}$} & \colorbox{red!25}{$0.8041\,{\scriptstyle\pm\,0.0018}$} & $0.2365\,{\scriptstyle\pm\,0.0087}$ \\
 & DPS & \colorbox{red!10}{$30.90\,{\scriptstyle\pm\,0.22}$} & $0.7691\,{\scriptstyle\pm\,0.0041}$ & $0.3263\,{\scriptstyle\pm\,0.0145}$ & $32.52\,{\scriptstyle\pm\,0.21}$ & \colorbox{red!10}{$0.7985\,{\scriptstyle\pm\,0.0027}$} & $0.2352\,{\scriptstyle\pm\,0.0116}$ \\
 & PnP & $28.64\,{\scriptstyle\pm\,0.10}$ & $0.7136\,{\scriptstyle\pm\,0.0023}$ & $0.3632\,{\scriptstyle\pm\,0.0088}$ & $29.74\,{\scriptstyle\pm\,0.08}$ & $0.7153\,{\scriptstyle\pm\,0.0024}$ & $0.3072\,{\scriptstyle\pm\,0.0081}$ \\
\bottomrule
\end{tabular}
\caption{Deblurring problem (with the EAGLE dataset), in the low-count ($a=4$) and high-count ($a=20$) regimes.}
\label{tab:blur}
\end{table}

\paragraph{PET.} Our PET setting was constructed with  \texttt{parallelproj}\footnote{\url{https://parallelproj.readthedocs.io/en/stable/}} \citep{schramm_parallelproj_2024}. The forward operator comprised $180$ equally spaced projection angles over $0\leqslant \theta\leqslant \pi$. With $256\times 256$ grayscale images, the discretized Radon operator satisfies $\mca:\rset_+^d\rightarrow \rset_+^{d_y}$, with $d=256\times 256$ and $d_y = 180\times 180$.
For each angle, projection bins were uniformly distributed radially, spanning 3.5 times the image width. The projection geometry used a source-to-detector radius of 400 pixels. The image origin was centered such that the reconstructed field of view was symmetric about the origin. 
See \Cref{fig:tep_fbp} for an illustration. 
For both the PET and Poisson denoising problems, each posterior sampling algorithm relies on the same log-uniform time-grid $t_0<\dots<t_N$, with $t_N = 100$ and $N=128$,  
\begin{figure}
    \centering
    \includegraphics[width=\linewidth]{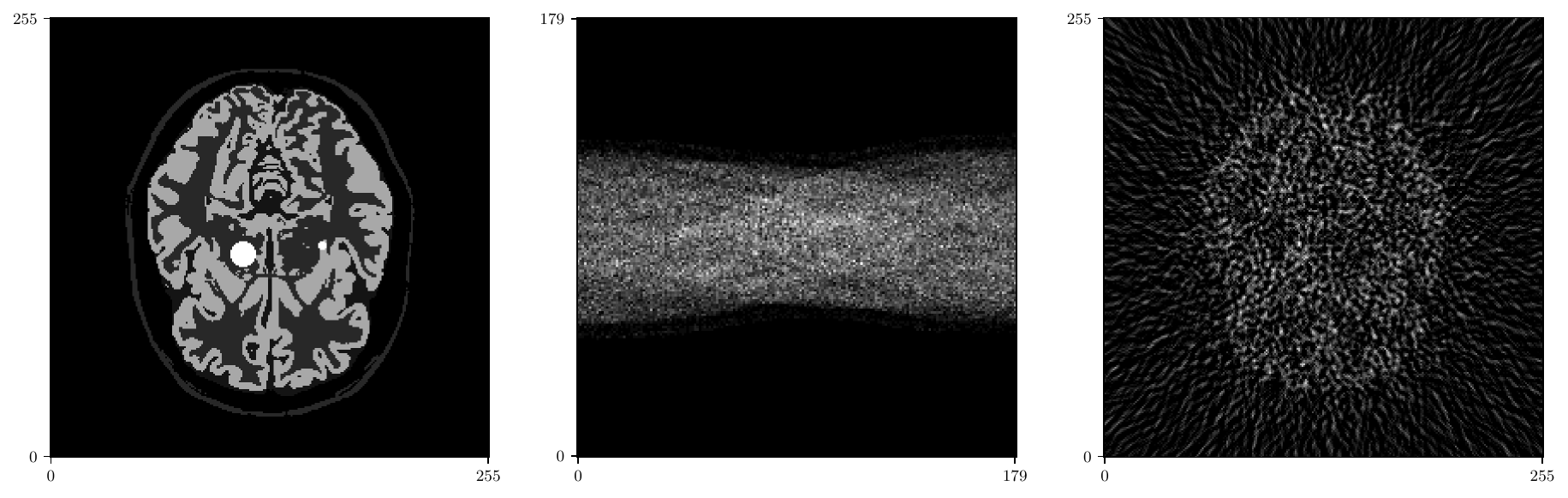}
    \caption{PET illustration. \textbf{(Left)} Ground-truth image $x$ from the BrainWeb dataset, with lesions (bright dots). \textbf{(Center)} the \emph{sinogram}, \ie, observation $y\sim \poisson(\mca(x))$. Each pixel represents the number of detections for a given pair of detectors.
    \textbf{(Right)} Filtered back-projection of $y$.}
    \label{fig:tep_fbp}
\end{figure}

\paragraph{Astrophysics.} We considered a two-dimensional Gaussian blurring forward operator. The input images were convolved with a normalized $65\times65$ Gaussian kernel with standard deviation $\sigma=3$. pixels. Zero-padding was used at the image boundaries to preserve the image dimensions. Here, $\mca : \rset_+^d \rightarrow \rset_+^d$ with $d=d_y = 3\times 256\times 256$.
See \Cref{fig:blur_fbp} for a visual example. For both the blur and Poisson denoising problems, each posterior sampling algorithm relies on the same log-uniform time-grid $t_0<\dots<t_N$, with $t_N = 100$ and $N=256$. 
\begin{figure}
    \centering
    \includegraphics[width=0.8\linewidth]{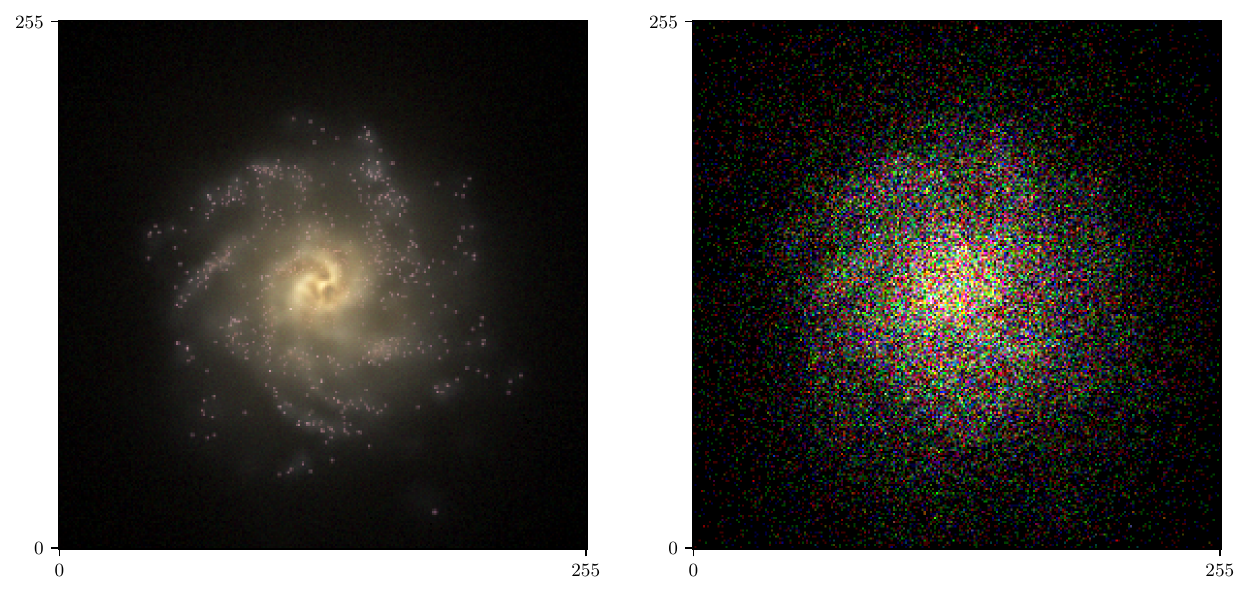}
    \caption{Gaussian blurring illustration. \textbf{(Left)} Ground-truth image $x$ from the EAGLE dataset. \textbf{(Right)} Observation $y\sim\poisson(\mca(x))$.}
    \label{fig:blur_fbp}
\end{figure}

\begin{table}[ht]
\scriptsize
\centering
\newcolumntype{N}{>{\centering\arraybackslash}m{1cm}}
\setlength{\tabcolsep}{2pt}
\begin{tabular}{c*{10}{N}}
\toprule
 & \multicolumn{2}{c}{\textbf{Toy}} &
\multicolumn{2}{c}{\textbf{BrainWeb denoising}} &
\multicolumn{2}{c}{\textbf{BrainWeb PET}} &
\multicolumn{2}{c}{\textbf{Eagle denoising}} &
\multicolumn{2}{c}{\textbf{Eagle blur}} \\
\cmidrule(lr){2-3}
\cmidrule(lr){4-5}
\cmidrule(lr){6-7}
\cmidrule(lr){8-9}
\cmidrule(lr){10-11}
  & \textbf{LC} & \textbf{HC} &
    \textbf{LC} & \textbf{HC} &
    \textbf{LC} & \textbf{HC} &
    \textbf{LC} & \textbf{HC} &
    \textbf{LC} & \textbf{HC} \\
\midrule
\midrule
PnP-DM (MPDM) $\eta$ & 1 & 1 & 8 & 5 & 14 & 8 & 5 & 4 & 7 & 5 \\
DAPS (MPDM) $\eta$ & 1 & 1 & 5 & 4 & 8 & 5 & 2 & 3 & 6 & 3 \\
PnP-DM-p (diffusion) $\eta$ & 1 & 1 & 4 & 2 & 8 & 4 & 10 &  10 & 7 & 3 \\
DAPS-p (diffusion) $\eta$ & 1 & 1 & 6 & 5 & 12 & 8 & 5 & 7 & 8 & 6 \\
PnP-DM-g (diffusion) $\eta$ & 1 & 1 & 9 & 2 & 6 & 4 & 9 & 2 & 4 & 2 \\
DAPS-g (diffusion) $\eta$ & 1 & 1 & 2 & 2 & 10 & 8 & 2 & 2 & 4 & 5 \\
\bottomrule
\end{tabular}
\caption{We specify the parameter $\eta$ chosen for the algorithms relying on a Langevin MCMC scheme: our MPDM-based PnP-DM (\Cref{alg:pnp_dm}) and DAPS (\Cref{alg:poisson_daps}), as well as their diffusion counterparts (either with true Poisson likelihood, indicated by -p, or its Gaussian approximation, indicated by -g). Values of $\eta$ ranging from $1$ to $20$ were tested in each setting, and we chose the ones leading to the overall best reconstruction metrics.}
\label{tab:eta}
\end{table}

\subsection{PET: additional measurements}
\label{app:more_pet}
The classical metrics we use to evaluate posterior sampling algorithms measure the quality of a given reconstructed image. They therefore provide only a partial assessment of sampling quality, as they do not directly account for the dispersion or diversity of the generated samples. Typically, in low-count regimes, posterior sampling methods should capture the dispersion of the true posterior. Moreover, the relevant notion of reconstruction quality may depend on the downstream task and the specific inverse problem at hand. In PET, for example, detecting lesions and accurately quantifying the associated uncertainty can be more relevant than the image quality measured by the aforementioned metrics. \Cref{fig:pet_lesion_poisson,fig:pet_lesion} illustrate the detection of lesions with the BrainWeb dataset, in low-count regimes.  

To quantify this,
we also evaluate reconstruction errors specifically within the lesions; see \Cref{tab:brain_lesion,tab:pet_lesion}. To do so, we compute the mean absolute error between the ground truth images and the approximate posterior samples, restricted to lesions (pixels with value  $>0.9$ on either image). If $x$ denotes the ground truth and $\hat{x}$ the reconstruction, we define it as 
\begin{equation}
l(x,\hat{x})=\frac{1}{\#\mathcal{L}(x)\cup\mathcal{L}(\hat{x})}\sum_{i=1}^d \vert \1_{\mathcal{L}(x)} -\1_{\mathcal{L}(\hat{x})}\vert \eqsp,
\end{equation}
with $\mathcal{L}(x) = \{1\leqslant i\leqslant d: x_i >0.9\}, \mathcal{L}(\hat{x}) = \{1\leqslant i\leqslant d: \hat{x}_i >0.9\}$. 
In particular, we show that, while diffusion-based DPS is on par with, or even outperforms, MPDM-based methods on several metrics reported in \Cref{tab:brain_poisson,tab:pet}, it nevertheless exhibits larger reconstruction errors in lesion regions. This is still an image metric, however, and not a proper posterior calibration evaluation.
\begin{figure}
    \centering
    \includegraphics[width=0.7\linewidth]{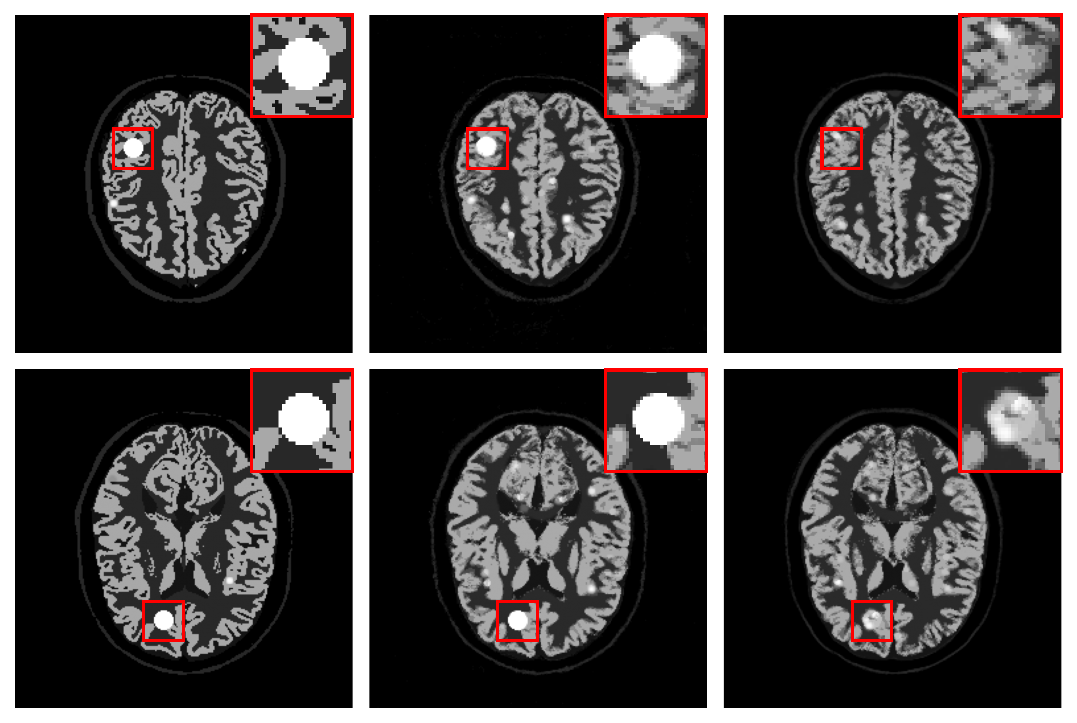}
    \caption{Poisson denoising problem with the BrainWeb dataset, in the low-count regime. \textbf{(Left)} Ground truth image from the BrainWeb validation dataset. \textbf{(Center)} Average of 4 samples obtained with PDPS. \textbf{(Right)} Average of 4 samples obtained with diffusion-based DPS. }
    \label{fig:pet_lesion_poisson}
\end{figure}

\begin{figure}
    \centering
\includegraphics[width=0.7\linewidth]{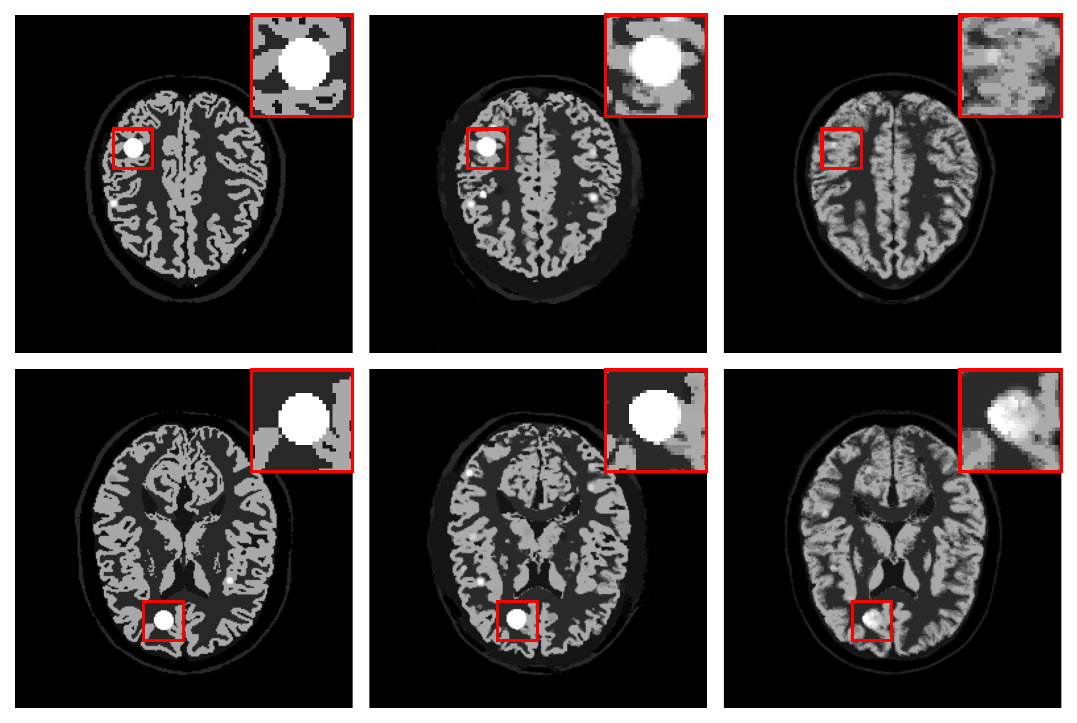}
    \caption{PET problem with the BrainWeb dataset, in the low-count regime. \textbf{(Left)} Ground truth image from the BrainWeb validation dataset. \textbf{(Center)} Average of 4 samples obtained with MPDM-based PnP-DM. \textbf{(Right)} Average of 4 samples obtained with diffusion-based DPS.}
    \label{fig:pet_lesion}
\end{figure}

\begin{table}[ht]
\centering

\begin{minipage}[t]{0.48\textwidth}
\centering
\scriptsize
\setlength{\tabcolsep}{5pt}
\begin{tabular}{c | c}
\toprule
\textbf{Methods} & \textbf{BrainWeb denoising (LC)} \\
\midrule
\midrule
\textbf{PDPS (MPDM)} & $0.7264\,{\scriptstyle\pm\,0.0051}$ \\
DPS-p (diffusion) & $0.7832\,{\scriptstyle\pm\,0.0094}$ \\
\bottomrule
\end{tabular}
\caption{Average $l(x,\hat{x})$, where $x$ is a validation sample of the BrainWeb dataset, and $\hat{x}$ is an approximate posterior sample obtained with either diffusion-based DPS or PDPS, for the Poisson denoising problem ($\mca=a\Idd$) in the low-count regime.}
\label{tab:brain_lesion}
\end{minipage}
\hfill
\begin{minipage}[t]{0.48\textwidth}
\centering
\scriptsize
\setlength{\tabcolsep}{5pt}
\begin{tabular}{c | c}
\toprule
\textbf{Methods} & \textbf{BrainWeb PET (LC)} \\
\midrule
\midrule
\textbf{PnP-DM (MPDM)} &  $0.7573\,{\scriptstyle\pm\,0.0080}$\\
DPS-p (diffusion) &  $0.8458\,{\scriptstyle\pm\,0.0228}$\\
\bottomrule
\end{tabular}
\caption{Average $l(x,\hat{x})$, where $x$ is a validation sample of the BrainWeb dataset, and $\hat{x}$ is an approximate posterior sample obtained with either diffusion-based DPS or PDPS, for the PET problem in the low-count regime.}
\label{tab:pet_lesion}
\end{minipage}

\end{table}

\subsection{Generative modeling: gallery}
\label{app:gallery}
\begin{figure}
    \centering
    \includegraphics[width=0.9\linewidth]{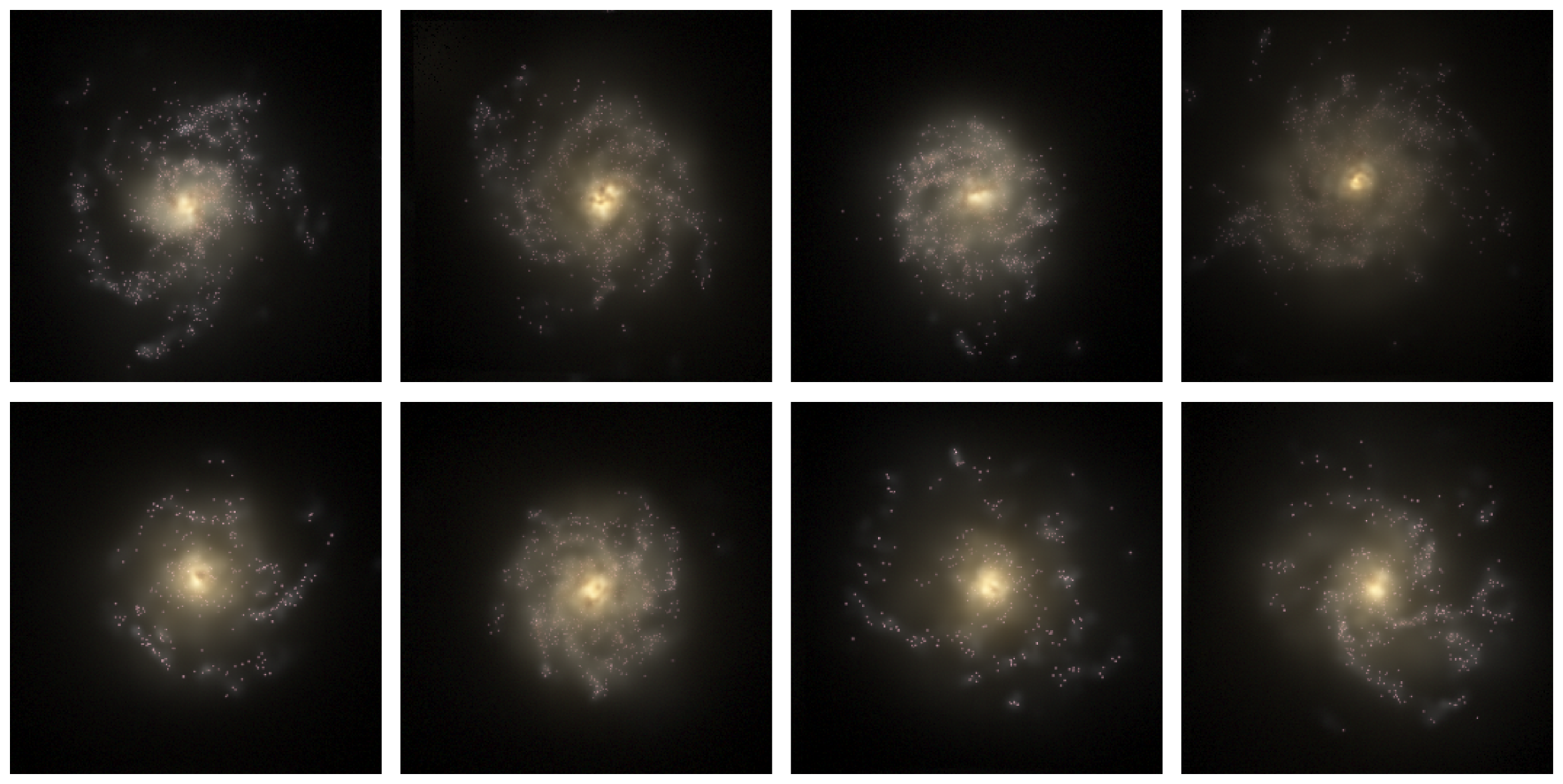}
    \caption{Images generated by our MPDM trained on the EAGLE dataset.}
    \label{fig:eagle_gen}
\end{figure}
\begin{figure}
    \centering
    \includegraphics[width=0.9\linewidth]{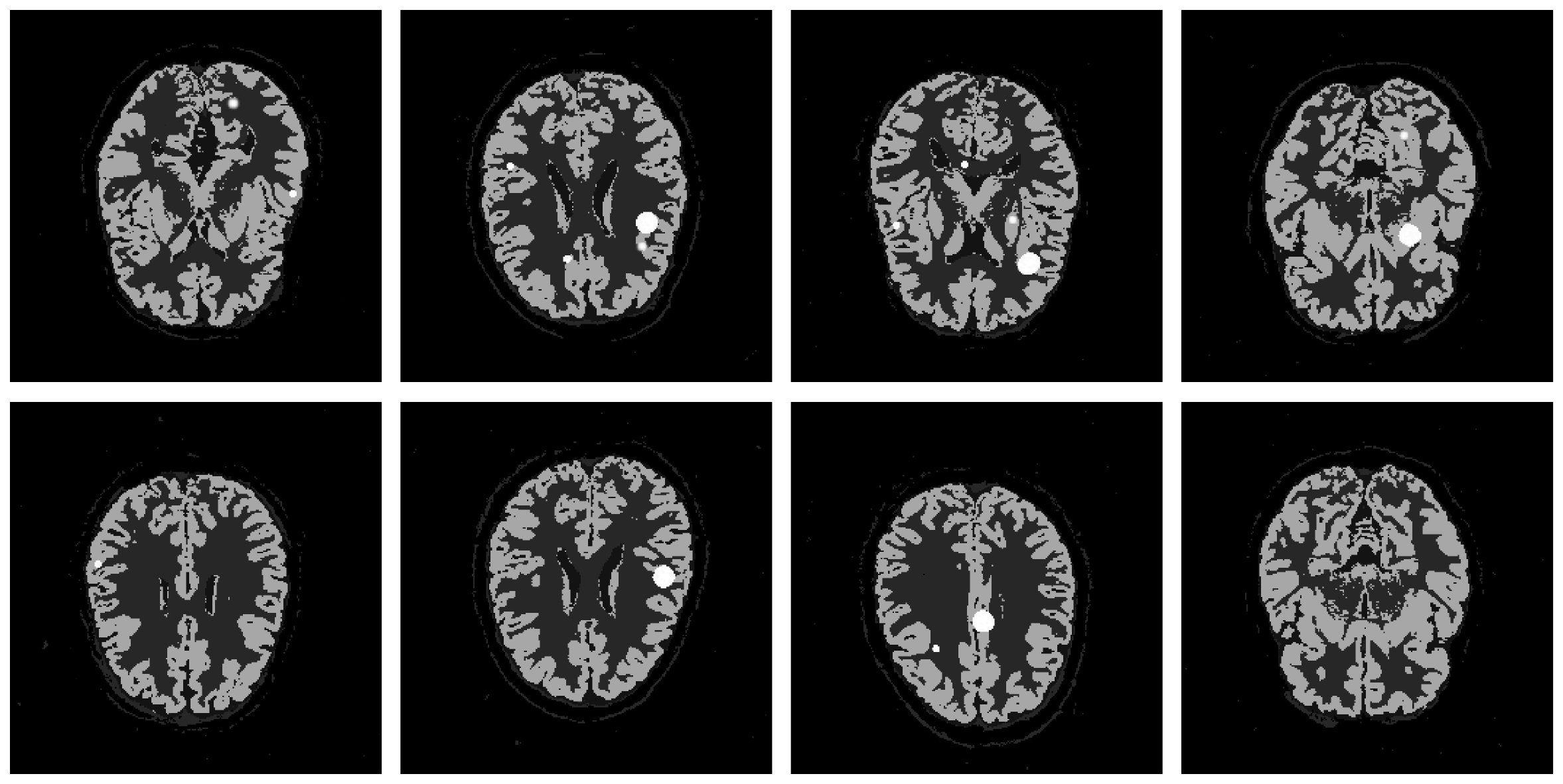}
    \caption{Images generated by our MPDM trained on the BrainWeb dataset.}
    \label{fig:brain_gen}
\end{figure}

\end{document}